\documentclass{article}

\usepackage{microtype}
\usepackage{graphicx}
\usepackage{subcaption}
\usepackage{booktabs} 

\usepackage{hyperref}

\usepackage[preprint]{arxiv2026}

\usepackage{amsmath}
\usepackage{amssymb}
\usepackage{mathtools}
\usepackage{amsthm}

\usepackage{mathrsfs}
\usepackage{macros}
\usepackage{thm-restate}
\usepackage{enumitem}
\setlist[enumerate]{itemsep=-2pt, topsep=0pt}
\usepackage[x11names]{xcolor}
\renewcommand{\algorithmicrequire}{\textbf{Input:}}
\renewcommand{\algorithmicensure}{\textbf{Output:}}

\usepackage{appendix}
\usepackage{etoc}

\usepackage[capitalize,noabbrev]{cleveref}

\theoremstyle{plain}
\newtheorem{theorem}{Theorem}[section]

\newtheorem{lemma}[theorem]{Lemma}

\theoremstyle{definition}

\theoremstyle{remark}

\usepackage[textsize=tiny]{todonotes}

\usepackage{tcolorbox}

\makeatletter
\let\orig@colorbox\colorbox
\renewcommand{\colorbox}{\@ifnextchar[{\my@colorbox@opt}{\my@colorbox@NoOpt}}

\newcommand{\my@colorbox@opt}[3][]{%
  {%
    \setlength{\fboxsep}{1.1pt}
    \orig@colorbox[#1]{#2}{#3}%
  }%
}

\newcommand{\my@colorbox@NoOpt}[2]{%
  {%
    \setlength{\fboxsep}{1.1pt}
    \orig@colorbox{#1}{#2}%
  }%
}
\makeatother

\arxivtitlerunning{Conditional DTEs: DR Estimation and Testing}

\begin{document}

\onecolumn

\arxivtitle{Conditional Distributional Treatment Effects:\\ Doubly Robust Estimation and Testing}

\arxivsetsymbol{equal}{*}

\begin{arxivauthorlist}

\arxivauthor{Saksham Jain}{xxx}
\arxivauthor{Alex Luedtke}{yyy}
\end{arxivauthorlist}

\arxivaffiliation{xxx}{Department of Statistics, University of Washington, Seattle, USA}
\arxivaffiliation{yyy}{Departments of Health Care Policy and Statistics, Harvard University, USA}

\arxivcorrespondingauthor{Saksham Jain}{sj305@uw.edu}


\vskip 0.3in

\printAffiliationsAndNotice{}

\begin{abstract}
   Beyond conditional average treatment effects, treatments may impact the entire outcome distribution in covariate-dependent ways, for example, by altering the variance or tail risks for specific subpopulations. We propose a novel estimand to capture such conditional distributional treatment effects, and develop a doubly robust estimator that is minimax optimal in the local asymptotic sense. Using this, we develop a test for the global homogeneity of conditional potential outcome distributions that accommodates discrepancies beyond the maximum mean discrepancy (MMD), has provably valid type 1 error, and is consistent against fixed alternatives---the first test, to our knowledge, with such guarantees in this setting. Furthermore, we derive exact closed-form expressions for two natural discrepancies (including the MMD), and provide a computationally efficient, permutation-free algorithm for our test. 
\end{abstract}

\section{Introduction}
\label{sec:intro}

Causal inference for mean effects is well-studied for both marginal~\citep{rosenbaum1983central, robins1994estimation} and conditional~\citep{abrevaya2015estimating,wager2018estimation, kunzel2019metalearners} estimands, as is their doubly robust estimation~\citep{van2011targeted, kurz2022augmented}.

However, treatments may impact the entire outcome distribution, a fact that has spurred interest in distributional treatment effects (DTEs)~\citep{bitler2006mean, chernozhukov2013inference, muandet2021counterfactual, fawkesdoubly}. Further, these distributional impacts may differ across subpopulations, as illustrated in Fig.~\ref{fig:dgp-cdte}. Understanding how potential outcome distributions $P_{Y(a)\mymid X}$ differ given covariates is of significant interest~\citep{chang2015nonparametric, hohberg2020treatment, chernozhukov2024network}. 

Kernel methods offer a rigorous framework for analyzing DTEs by embedding distributions into reproducing kernel Hilbert spaces (RKHSs)~\citep{song2009hilbert, gretton2012kernel} and comparing these embeddings via measures of statistical discrepancy such as the MMD, which is zero if and only if the distributions are equal, provided a characteristic kernel is used~\citep{sriperumbudur2011universality}.

While inference for marginal DTEs has advanced significantly~\citep{martinez2023efficient, luedtke2024one}, it remains underdeveloped in the conditional setting. \citet{park2021conditional} presented a test based on the conditional distributional treatment effect associated with the MMD (henceforth referred to as the CoDiTE function) defined as
\begin{equation}
    \mathrm{CoDiTE}_P(x) = \norm{\mu_{P_{Y(1)\mymid X}}(x) - \mu_{P_{Y(0)\mymid X}}(x)}_{\Hy},
    \label{eqn:CoDiTE-park}
\end{equation}
where $\mu_{P_{Y(a)\mymid X}}(x)$ is the conditional mean embedding~\citep{park2020measure} of $P_{Y(a)\mymid X}(\cdot\mymid x)$ in an RKHS $\Hy$. 
However, their estimator for this function is not doubly robust, and they rely on permutation tests that lack validity guarantees. Moreover, other current approaches are either limited to best linear projections~\citep{kallus2023robust} or study testing of pointwise equivalence~\citep{naf2024causal}. In this work, we instead focus on globally testing the null of equal conditional potential outcome distributions,
\begin{equation}\label{eqn:hypotheses}
    H_0: P_{\yipot \mymid X}(\,\cdot\mymid x) = P_{\yopot \mymid X}(\,\cdot\mymid x) P_X\text{-a.e.},
\end{equation} 
against the complementary alternative. 
Further discussion of related work is provided in App.~\ref{app:expanded-related-work}.

\begin{figure*}[hbt]
    \centering
    \includegraphics[width=0.9\linewidth]{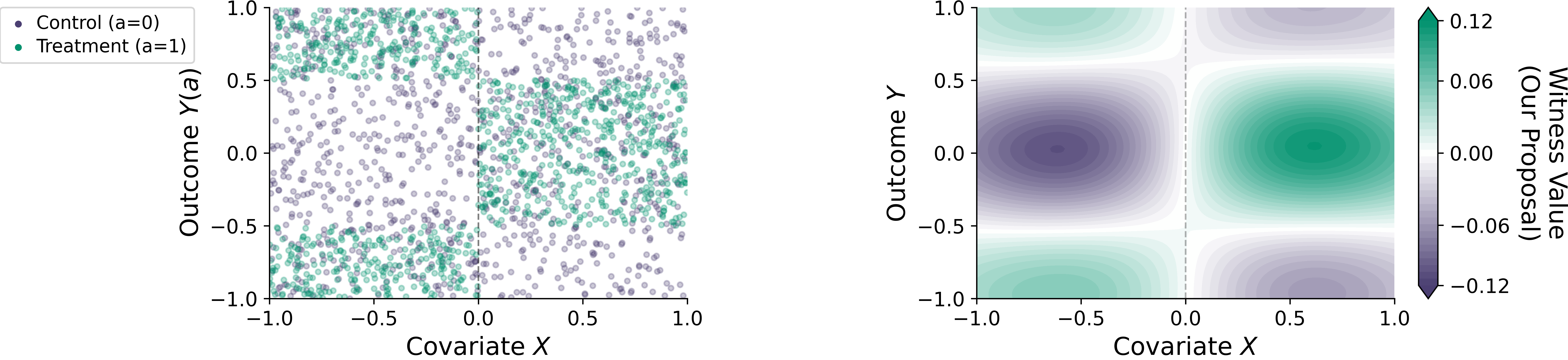}
    \caption{Simple setting where the \textbf{conditional average treatment effect is null} (left) even though \textbf{there is DTE heterogeneity} (right).\\
    {\it In more detail: (left) Scatter plot of $X$ and $Y(a)$, $a\in \{0,1\}$, with: $X, Y(0)\sim\mathrm{Unif}[-1, 1]$ independently and $Y(1)\mymid X$ a $\mathrm{Unif}[-.5,.5]$ distribution if $X>0$ and a $\mathrm{Unif}([-1,-.5]\cup [.5,1])$ if $X<0$. 
    (right) Proposed witness function for conditional DTE.
    }}
    \label{fig:dgp-cdte}
\end{figure*}

\paragraph{Our Contributions.}
\begin{enumerate}
    \item We propose, to our knowledge, the first provably valid kernel-based test for the (global) homogeneity of conditional potential outcome distributions, based on a doubly robust estimator.
    \item Our test uses the bootstrap to determine a rejection region. In contrast to permutation tests, it does not refit the nuisances across replicates, thereby amortizing computational costs.
    \item We derive exact closed-form expressions for MMD and Wald-type test statistics, enabling the construction of Wald-type confidence sets for conditional DTEs.
    \item We construct asymptotically valid uniform confidence bands to help identify specific regions of heterogeneous distributional treatment effects.
    \item We demonstrate the finite-sample performance of our methods using both simulations and real-world data.
\end{enumerate}

\section{Preliminaries}
\label{sec:prelim}

\subsection{Problem Setup}
\label{subsec:setup}
We observe an i.i.d.\ sample $\Dsc \coloneqq \{Z_i\}_{i=1}^n$ from a distribution $P$ in a statistical model $\Ps$ on $\mathcal{Z}\coloneqq \Xs \times \{0,1\} \times \Ys$, where $Z_i \coloneqq (X_i, A_i, Y_i)$ comprises pre-treatment covariates, a treatment assignment, and outcomes.
We define the propensity score as $\pi_P(x) \coloneqq P(A=1 \mymid X=x)$. Along with the standard causal assumptions (consistency, unconfoundedness, overlap)~\citep{stone1993assumptions, mealli2003assumptions}, we also assume strong positivity: there exists $\eta>0$ such that for all $P\in\Ps$, $\eta \leq \pi_P(x) \leq 1-\eta$ $\,P_X$-a.e.

We assume $\Ps$ is dominated and locally nonparametric. The latter means that the tangent space at each $P \in \Ps$ is the Hilbert space $\Lsq_0(P) \coloneqq \{h\in\Lsq(P): \E{P}{h(Z)} = 0\}$~\citep{van2000asymptotic}.

We let $k$ and $\ell$ be bounded characteristic kernels on $\Xs$ and $\Ys$ with feature maps $K_x \coloneqq k(x, \cdot\,)$ and $L_y \coloneqq \ell(y, \cdot\,)$, respectively. We operate in the real, separable tensor product RKHS $\Hs \coloneqq \Hx \otimes \Hy$ associated with the kernel $\lambda((x,y), (x',y')) \coloneqq k(x,x')\ell(y,y')$. As the product of bounded characteristic kernels, $\lambda$ is also bounded and characteristic. App.~\ref{app:expanded-setup} gives an extended discussion of the full theoretical setup, including formal statements of the causal assumptions and the definition of the tangent space through quadratic mean differentiability (QMD). 

For readability, we suppress the explicit dependence of functionals on $P$ (e.g., writing $\mathrm{CoDiTE}(x)$ instead of $\mathrm{CoDiTE}_P(x)$) when the value of $P$ is clear from context.

\subsection{Conditional Distributional Treatment Effects}
\label{subsec:scodite-intro}
Under the standard causal assumptions, the conditional mean embedding $\mu_{P_{Y(a)\mymid X}}(x)$ from~\eqref{eqn:CoDiTE-park} is identified from the observed data by the following $\Hy$-valued function:
\begin{equation}\label{eqn:nu}
    \nu_{P,a}(x) \coloneqq  \E{P}{L_Y \mymid A=a, X=x}. 
\end{equation}
Let $U_{P|x} \coloneqq \nu_{P,1}(x) - \nu_{P,0}(x)$. Then, $\mathrm{CoDiTE}(x)$~from \eqref{eqn:CoDiTE-park} can be expressed as
\begin{align}\label{eqn:codite-discrepancy}
    \norm{\nu_{P,1}(x) - \nu_{P,0}(x)}_{\Hy} = \norm{U_{P|x}}_{\Hy}.
\end{align} 
\citet{park2021conditional} use this to develop a test for $H_0$ as in \eqref{eqn:hypotheses}
against the complementary alternative. We show an equivalent null can be formulated using \emph{joint} potential outcome and covariate distributions instead of \emph{conditional} distributions. The key argument used to establish this is intuitive: since $X$ precedes treatment, the marginal distribution $P_X$ must be the same on either side of Eq.~\ref{eqn:hypotheses}---see App.~\ref{app:equiv-null}.
\begin{restatable}[Equivalent null]{proposition}{equivnull}
    \label{prop:equiv-null}
    For any $P\in\Ps$, $H_0$ holds if and only if $P_{Y(1),X} = P_{Y(0),X}$.
\end{restatable}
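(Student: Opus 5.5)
The plan is to disintegrate each joint law into a marginal of $X$ and a conditional kernel, and to use the fact that the covariate marginal is shared by both joint laws. Concretely, I will first note that for each $a\in\{0,1\}$ the joint distribution of $(Y(a),X)$ satisfies
\[
P_{Y(a),X}(B\times C)=\int_C P_{Y(a)\mymid X}(B\mymid x)\,dP_X(x)
\]
for measurable $B\subseteq\Ys$ and $C\subseteq\Xs$. Here $P_{Y(a)\mymid X}$ is a regular conditional distribution; it exists because $\Ys$ is standard Borel, as assumed in App.~\ref{app:expanded-setup}. The key point is that the covariates are pre-treatment. Both potential outcomes are defined on the same underlying probability space as the same $X$, so the $X$-marginal of $P_{Y(1),X}$ and that of $P_{Y(0),X}$ both equal $P_X$. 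No causal identification assumption is needed for this step; it is simply a statement about the joint law of $(X,Y(0),Y(1))$.

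\emph{($\Rightarrow$)} Suppose $H_0$ holds. Then the integrands in the display above agree for $P_X$-a.e.\ $x$, so $P_{Y(1),X}$ and $P_{Y(0),X}$ agree on all measurable rectangles $B\times C$. Rectangles form a $\pi$-system that generates the product $\sigma$-algebra, so Dynkin's $\pi$-$\lambda$ theorem gives $P_{Y(1),X}=P_{Y(0),X}$.

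\emph{($\Leftarrow$)} Suppose $P_{Y(1),X}=P_{Y(0),X}=:Q$. Then both $x\mapsto P_{Y(1)\mymid X}(\cdot\mymid x)$ and $x\mapsto P_{Y(0)\mymid X}(\cdot\mymid x)$ are regular conditional distributions of the second coordinate given the first under the same joint law $Q$, whose $X$-marginal is $P_X$. Regular conditional distributions are unique up to $P_X$-null sets, and this yields $H_0$. To make that uniqueness explicit, fix $B$. From $\int_C[P_{Y(1)\mymid X}(B\mymid x)-P_{Y(0)\mymid X}(B\mymid x)]\,dP_X=0$ for all $C$, the two kernels agree on $B$ outside a null set $N_B$. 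Now take $B$ ranging over a countable $\pi$-system generating the Borel $\sigma$-algebra of $\Ys$. The union of the corresponding null sets $N_B$ is still null, and outside it the two probability measures agree on a generating $\pi$-system, hence agree everywhere.

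The main obstacle is the measure-theoretic bookkeeping in the reverse direction: the exceptional null set must not depend on $B$. This is exactly where I will use the countably generated (standard Borel or Polish) structure of $\Ys$. The other point needing care is justifying that the covariate marginal is the same $P_X$ for both $a$, which I will state directly from the potential-outcomes setup in App.~\ref{app:expanded-setup}.
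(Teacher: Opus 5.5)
Your proposal is correct and follows essentially the same route as the paper. Both use the disintegration $P_{Y(a),X}(B\times C)=\int_C P_{Y(a)\mid X}(B\mid x)\,P_X(dx)$ with the shared covariate marginal, then the $\pi$-$\lambda$ theorem on rectangles for the forward direction, and for the converse a fixed-$B$ almost-everywhere argument followed by a countable generating $\pi$-system, which makes the exceptional null set independent of $B$.
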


The conditional mean embedding of the joint potential outcome and covariate distribution $P_{Y(a),X}$ is identified under the standard causal assumptions as the following $\Hs$-valued function of $x \in \Xs$ defined for each $a\in\{0,1\}$:
\begin{equation}\label{eqn:thetalam}
    \begin{split}
        \thetalam_{P,a}(x) &\coloneqq K_x \otimes \nu_{P,a}(x).
    \end{split}
\end{equation}
This motivates our definition of the `\textbf{S}moothed' \textbf{Co}nditional \textbf{Di}stributional \textbf{T}reatment \textbf{E}ffect (SCoDiTE) as the Hilbert-valued parameter $\psi:\Ps\to\Hs$ given by
\begin{align}\label{eqn:witness}
    \psi(P) \coloneqq \E{P}{\thetalam_{P,1}(X) - \thetalam_{P,0}(X)}.
\end{align}
We use the shorthand $\psi_P\coloneqq\psi(P)$ throughout. Consequently, for $U_{P|x}(y)$ the witness function for CoDiTE$(x)$ from \eqref{eqn:codite-discrepancy}, the SCoDiTE witness function writes as $\psi_P(x,y) = \int k(x',x) U_{P\mymid x'}(y) P_X(dx')$. Thus, $\psi_P(\,\cdot, y)$ is an $\Hx$-kernel smoothing of $U_{P|\cdot}(y)$.

\citet{park2021conditional} test $H_0$ through a criterion they call the kernel conditional discrepancy: 
\begin{align}
    \mathrm{KCD} &\coloneqq \E{P}{\mathrm{CoDiTE}^2(X)} = \E{P}{\norm{U_{P|X}}^2_{\Hy}}.
\end{align} 
However, the resulting test statistic is a degenerate two-sample U-statistic under the null, requiring nonparametric estimates of the conditional mean embeddings; it does not admit weak convergence to a known distribution in general, preventing analytical computation of critical values, leading them to use permutation resampling. In contrast, the squared MMD associated with the SCoDiTE is
\begin{equation}\label{eqn:psi-mmd-first}
    \begin{split}
        \norm{\psi_P}^2_\Hs &= \mathbb{E}_{P_X}\E{P_X}{k(X,X')\inpd{U_{P|X}}{U_{P|X'}}_{\Hy}}.
    \end{split}
\end{equation}
By cross-correlating the $\Hy$ discrepancies rather than squaring them pointwise, it allows us to recast $H_0$ as the \emph{linear} moment condition $\psi_P = 0 \in \Hs$, expressed in terms of the identified joint distributions $P_{Y(a),X}$. This enables statistically and computationally efficient inference, as we establish rigorously in the following sections.

\subsection{Efficient, Doubly-Robust Estimation of the SCoDiTE}
\label{subsec:DR-intro}
The classic one-step estimation procedure for a finite-dimensional parameter~\citep{pfanzagl1982lecture} involves `correcting' an initial (plug-in) estimate using the so-called efficient influence function (EIF) of that parameter \citep{bickel1993efficient}. However, as the SCoDiTE is Hilbert-valued, classic one-step estimation is not directly applicable. Consequently, we take inspiration from \citet{luedtke2024one} to develop a
one-step estimator for $\psi_P$. The subsequent lemma is key in accomplishing this, as it proves the existence of, and exhibits the form taken by, the EIF of $\psi_P$ at each $P\in\Ps$. 

Before presenting the result, we highlight the main technical challenge underpinning it. Namely, establishing that $P\mapsto \psi_P$ is \emph{pathwise differentiable} relative to the statistical model $\Ps$. We refer the reader to App.~\ref{app:pathwise-diff} for the formal presentation of this concept and the subsequent proof. 
Recall from Sec.~\ref{subsec:setup} that $\pi_P(x)$ is the propensity to receive treatment given $X=x$ and $\thetalam_{P,a}(x)$ is (a $\lambda$-kernelized version of) the outcome model for $X=x$ corresponding to group $A=a$, both under $P$. We now present the EIF.
\begin{restatable}[Existence and form of the EIF]{lemma}{eifform}
    \label{lem:eif-form}
    The parameter $\psi$ defined as in Eq.~\ref{eqn:witness} is pathwise differentiable at every $P\in\Ps$, and has an EIF at each $P$ that takes the form
    \begin{align*}
        \phi_P(x,a,y) &= \p{\frac{a}{\pi_P(x)} - \frac{1-a}{1-\pi_P(x)}}\big(\Lambda_{x,y} - \thetalam_{P,a}(x)\big) + \thetalam_{P,1}(x) - \thetalam_{P,0}(x) - \psi_P.
    \end{align*}
    Moreover, $0 < \int\norm{\phi_P(z)}^2_{\Hs} P(dz) < \infty$ for all $P\in\Ps$.
\end{restatable}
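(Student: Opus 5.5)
We work with the Hilbert-valued notion of pathwise differentiability from \citet{luedtke2024one}. Concretely, for every QMD submodel $\{P_t\}$ through $P$ with score $s\in\Lsq_0(P)$, we must exhibit a bounded linear map $\dot\psi_P:\Lsq_0(P)\to\Hs$ such that $\norm{\psi_{P_t}-\psi_P-t\,\dot\psi_P(s)}_\Hs=o(t)$. The EIF is then the element $\phi_P\in\Lsq(P;\Hs)$ representing the adjoint, in the sense that $\inpd{h}{\dot\psi_P(s)}_\Hs=\E{P}{s(Z)\inpd{h}{\phi_P(Z)}_\Hs}$ for all $h\in\Hs$ and $s\in\Lsq_0(P)$.

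\textbf{Reduction to scalar parameters.} Fix $h\in\Hs$. By the reproducing property,
\[
\inpd{h}{\thetalam_{P,a}(x)}_\Hs=\E{P}{h(x,Y)\mymid A=a,X=x}=:m_{P,a}^h(x),
\]
so that $\inpd{h}{\psi_P}_\Hs=\E{P}{m^h_{P,1}(X)-m^h_{P,0}(X)}$. This is the classical treatment-effect functional applied to the bounded outcome $h(X,Y)$; the bound is $|h(x,y)|\le\norm{h}_\Hs\sup\lambda^{1/2}$. Under strong positivity its canonical gradient is the familiar AIPW influence function $\inpd{h}{\phi_P(z)}_\Hs$, with $\phi_P$ as stated. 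I would verify this directly by the standard score decomposition $s=s_X+s_{A|X}+s_{Y|A,X}$: differentiate $\int\!\int h(x,y)\,p_t(y\mid a,x)\,p_t(x)\,dy\,dx$ at $t=0$, and note that the propensity part vanishes. The result is the adjoint identity for each $h$, with the derivative $\dot\psi_P(s)=\E{P}{s(Z)\phi_P(Z)}$ defined as a Bochner integral.

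\textbf{Upgrading to $\Hs$-norm differentiability.} This is the main obstacle, because weak (per-$h$) derivatives do not by themselves give $o(t)$ in norm. The plan is to bound the remainder uniformly over the unit ball of $\Hs$. Write $p_t^{1/2}=p^{1/2}(1+\tfrac t2 s)+t\,r_t$ with $\norm{r_t}_{\Lsq}\to0$, and express
\[
\inpd{h}{\psi_{P_t}-\psi_P}_\Hs=\int f_h\,(dP_t-dP)
\]
for an appropriate weighted integrand $f_h$. Since the conditional laws also change, $f_h$ must carry the inverse-propensity factors. One route is to use the exact von Mises expansion $\psi_{P_t}-\psi_P=-\E{P}{\phi_{P_t}(Z)}+R(P_t,P)$, where the second-order remainder $R$ is a product of propensity and outcome-embedding differences. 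Applying Cauchy--Schwarz to that product, together with $\eta\le\pi\le1-\eta$ and boundedness of $\lambda$, gives $\norm{R}_\Hs\lesssim\norm{\pi_{P_t}-\pi_P}_{\Lsq}\,\big\|\norm{\thetalam_{P_t,a}-\thetalam_{P,a}}_\Hs\big\|_{\Lsq}$. Each factor is $O(t)$ by QMD, using bounded kernels and a Hellinger bound on conditional expectations of bounded functions. This yields $o(t)$ uniformly in $h$. The linear term $\E{P}{\phi_{P_t}}-\E{P_t}{\phi_{P_t}}$ is then handled using QMD and continuity of $t\mapsto\phi_{P_t}$ in $\Lsq(P;\Hs)$, producing $t\,\E{P}{s\,\phi_P}+o(t)$. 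Boundedness of $\dot\psi_P$ follows from Cauchy--Schwarz once $\phi_P\in\Lsq(P;\Hs)$. Because the model is locally nonparametric, the gradient is unique and is therefore the EIF. It remains to check that $\E{P}{\phi_P}=0$, which holds since the residual terms vanish conditionally.

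\textbf{Moment bounds.} Finiteness follows because $\norm{\Lambda_{x,y}}_\Hs^2=\lambda((x,y),(x,y))$ is bounded, $\norm{\thetalam_{P,a}(x)}_\Hs\le\sup\lambda^{1/2}$ by Jensen, and the weights are at most $1/\eta$. For positivity, suppose $\phi_P=0$ $P$-a.s. On $\{A=1\}$ this forces $\Lambda_{X,Y}$ to equal a function of $X$ alone, namely $\thetalam_{P,1}(X)-\pi_P(X)(\thetalam_{P,1}-\thetalam_{P,0}-\psi_P)$. Taking conditional expectations given $X$ and $A=1$ then shows that $\Lambda_{X,Y}=\thetalam_{P,1}(X)$ a.s., and so $L_Y=\nu_{P,1}(X)$. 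Since $\ell$ is characteristic, its feature map is injective. Hence $L_Y$ cannot equal a conditional mean embedding unless $Y\mid A=1,X$ is degenerate. I would either argue that such degeneracy is excluded by the model (if $\Ps$ is locally nonparametric and dominated, point-mass conditionals are compatible only with trivial $\Ys$) or show that even under degeneracy the $A=0$ and $A=1$ equations together force a contradiction with $\psi_P$ absorbing terms. If this fails, I would add the mild assumption that $\Ys$ is non-degenerate.
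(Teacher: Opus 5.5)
Your proposal is correct, but it establishes pathwise differentiability by a genuinely different route from the paper. The paper never bounds the $\mathcal{H}$-norm remainder by hand. It pushes $P$ forward through the injection $z\mapsto(x,a,(x,y))$ and recognizes each $\psi_{P,a}$ as a counterfactual kernel mean embedding with outcome space $\mathcal{X}\times\mathcal{Y}$. It then imports differentiability from the CKME result of \citet{luedtke2024one} via the injective-pushforward invariance lemma (Lemmas~\ref{lem:alic-CKME} and~\ref{lem:invariance-reparam}). Finally, it computes the adjoint of the resulting local parameter explicitly and invokes Theorem~1 of \citet{luedtke2024one} for existence of the EIF.

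You instead read off $\phi_P$ from the scalar AIPW gradient of $h\mapsto\langle h,\psi_P\rangle_{\mathcal{H}}$. You then verify norm differentiability directly from the exact expansion $\psi_{P_t}-\psi_P=\mathbb{E}_{P_t}[\phi_{P_t}(Z)]-\mathbb{E}_{P}[\phi_{P_t}(Z)]+R(P_t,P)$. Its remainder is exactly the doubly robust product the paper later bounds in Lemma~\ref{lem:neg-rem}, with $P_t$ in the role of $\widehat{P}_n$. Once $\dot\psi_P(s)\coloneqq\mathbb{E}_P[s(Z)\phi_P(Z)]$ and $\langle h,\phi_P(\cdot)\rangle_{\mathcal{H}}\in L^2_0(P)$, the EIF is identified immediately.

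What each approach buys: yours is self-contained, ties differentiability to the same remainder calculus used for the estimator, and avoids checking that the singular pushforward model meets the CKME lemma's hypotheses. The paper's is shorter because it outsources the analysis. Two details need fixing:
\begin{itemize}
\item The linear term carries the sign written above; yours is reversed.
\item Your ``Hellinger bound on conditional expectations'' needs a short argument. The chain rule for the Hellinger affinity weights the conditional distances by $\sqrt{p_t(x)p(x)}$, not $p(x)$. So you need an absorption step, plus strong positivity to pass from the $(A,X)$-weight to the $X$-weight. This gives $\|\pi_{P_t}-\pi_P\|_{L^2(P_X)}$ and $\|\theta_{P_t,a}-\theta_{P,a}\|_{L^2(P_X;\mathcal{H})}$ of order $t$. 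Uniform boundedness and $L^2(P;\mathcal{H})$-continuity of $t\mapsto\phi_{P_t}$ then follow as in Lemma~\ref{lem:suff-neg-drift}.
\end{itemize}

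For the lower bound, your contrapositive argument works. It is equivalent to the paper's orthogonal-decomposition bound $\int\|\phi_P\|_{\mathcal{H}}^2\,dP\ge\mathbb{E}_P\bigl[\bigl(A/\pi_P(X)^2+(1-A)/(1-\pi_P(X))^2\bigr)\,\mathbb{E}_P[\|\Lambda_{X,Y}-\theta_{P,A}(X)\|_{\mathcal{H}}^2\mid A,X]\bigr]$. Both rest on the standing assumption, stated in the paper's extended setup, that $P_{Y\mid A,X}$ is non-degenerate. Neither of your proposed ways around that assumption can succeed. A dominated, locally nonparametric model can contain laws with point-mass conditionals. And if, for example, $Y$ is a.s.\ constant, then $\theta_{P,1}=\theta_{P,0}$, $\psi_P=0$, and $\Lambda_{X,Y}=\theta_{P,A}(X)$, so $\phi_P\equiv 0$ and the strict inequality fails. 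Simply invoke the non-degeneracy assumption, as the paper does.
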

The proof is provided in App.~\ref{app:eif-derivation}. Constructing a one-step estimator with the above EIF yields (a $\lambda$-kernelized version of) an augmented inverse propensity weighted (AIPW) estimator \citep{glynn2010introduction, hines2022demystifying}.
To see this, note that $\E{P}{\phi(Z)} = 0$ by definition. 
Let $P_n$ be the empirical distribution induced by the i.i.d. dataset $\Dsc$, and let $\widehat{P}_n$ be an independent (not based on $\Dsc$) plug-in estimate of $P$. The one-step estimator is then given by 
\begin{align*}
    \widehat{\psi}_n 
    &\coloneqq \psi_{\widehat{P}_n} + \mathbb{E}_{P_n}\big[ \phi_{\widehat{P}_n}(Z)\big].
\end{align*}
In practice, the nuisances $\pi_{\widehat{P}_n}$ and $\{\theta_{\widehat{P}_n,1}, \theta_{\widehat{P}_n,0}\}$ must be estimated from data. To avoid overfitting while maintaining statistical efficiency, we employ cross-fitting~\citep{schick1986asymptotically}.

Specifically, let $r\in\{1,2\}$ denote a data split and fix the complement $s\coloneqq 3-r$. Let $\Pnh^r$ be an initial estimate of the data-generating distribution based on the data split $\Dsc_r$ and $P^s_n$ be the empirical distribution induced by 
the complementary split $\Dsc_{s}$. We set the notational convention of using $[\cdot]_n^r$ instead of $[\cdot]_{\Pnh^r}$. For instance, we let $\psi^r_n$ denote the plug-in parameter estimate $\psi_{\widehat{P}^r_n}$ and $\phi_{n}^r$ the $\Hs$-valued EIF estimate $\phi_{\widehat{P}^r_n}$, both of whose nuisances are fitted using $\Dsc_r$. 
Our cross-fitted one-step estimator is then
\begin{align}\label{eqn:ose}
    &\bar{\psi}_n \coloneqq \frac{1}{2}\sum_{r=1}^2 \p{\psi_n^r + \E{P^s_n}{\phi_n^r(Z)} }.
\end{align}
We emphasize that Lem.~\ref{lem:eif-form} provides the theoretical basis for establishing the optimality of $\psibar$. 
Indeed, we show in Sec.~\ref{subsec:guarantees} that, under suitable conditions, $\psibar$ is asymptotically linear. 
Intuitively, this means that $\psibar$ behaves almost like an empirical mean: it converges to a tight $\Hs$-valued Gaussian random variable at the $n^{-1/2}$ rate (see Thm.~\ref{thm:asy-lin-wk-conv} and the discussion surrounding it).

A key property of our estimator, arising from the form of the EIF, is double robustness. Specifically, $\psibar$ remains consistent if either the propensity score models $\{\pi^r_n\}$ or the outcome models $\{\theta^r_{n,a}\}$, but not necessarily both, are correctly specified. 
We formalize this property in Sec.~\ref{subsec:guarantees}.

\subsection{Permutation-Free, Variance-Aware Inference} \label{subsec:inference}
The KCD test of $H_0$~\eqref{eqn:hypotheses} uses $M$ permutations to find the empirical p-value \citep{park2021conditional}. 
Each permutation involves refitting the outcome models for both treatment groups. The worst-case computational complexity of their algorithm is $\Os(Mn^3)$.
Since $M$ has been shown to often vary between $10^2$ and $10^3$ for performant permutation-based inference~\citep{davison1997bootstrap}, this can quickly become impractical for even moderate datasets.

We propose the `smoothed' kernel conditional discrepancy (SKCD) to test the reformulated null $H_0: P_{Y(1),X} = P_{Y(0),X}$ against the complementary alternative. This statistic takes the following quadratic form:
\begin{align}
    \mathrm{SKCD} &\coloneqq  
    \inpd{\Omega_P(\psi_P)}{\psi_P}_{\Hs},
    \label{eqn:SKCD}
\end{align}
where $\Omega_P$ (to denote potential dependence on $P$) 
is a continuous self-adjoint positive-definite linear $\Hs\to\Hs$ operator. It is evident that when $\Omega_P$ is the identity operator, SKCD reduces to a squared MMD \eqref{eqn:psi-mmd-first}. However, this formulation enables richer discrepancies beyond the MMD. 

For instance, suppose the appropriately scaled $(\psibar-\psi_P)$ converges weakly to some $\Hs$-valued limiting distribution with covariance operator $\Sigma_P$. Taking $\Omega_P \coloneqq [(1-\varepsilon)\Sigma_P + \varepsilon I]^{-1}$ yields a kernelized Hotelling-type two-sample $T^2$ statistic in the spirit of the two-sample test in \citet{eric2007testing}, but for a cross-fitted one-step estimator in the more complex counterfactual setting.
This Wald-type formulation offers higher power when the true effect lies in a low-variance subspace of $\Hs$. To our knowledge, this paper is the first to study this class of discrepancies for a conditional distributional causal estimand.

A compelling reason to use SKCD to test $H_0$ is that it circumvents the need to analytically compute or numerically approximate the asymptotic null distribution of degenerate two-sample U-statistics like the KCD. To see this, first note that $\psi_P=0$ under the null. Let 
\begin{equation}\label{eqn:skcd-hat}
    \widehat{\mathrm{SKCD}} \coloneqq \inpd{\Omega_n(\psibar)}{\psibar}_{\Hs},
\end{equation}
where $\Omega_n$ is an appropriate estimator of $\Omega$. Now, if an appropriately scaled $\psibar$ converges weakly to some $\Hs$-valued limiting distribution under the null that can be analytically derived,  then the continuous mapping theorem for Hilbert random elements immediately yields the limiting null distribution $\mathbb{L}$ of the appropriately scaled $\widehat{\mathrm{SKCD}}$. 
This leads to a simple testing procedure: reject $H_0$ at level $\alpha$ when the scaled $\widehat{\mathrm{SKCD}}$ exceeds the $(1-\alpha)$-quantile of $\mathbb{L}$. Sec.~\ref{subsec:guarantees} details how the quantile can be estimated \emph{without refitting the nuisance models}, drastically reducing the computational cost of resampling for inference.

In the following section, we establish that the appropriate scaling is $n\cdot\widehat{\mathrm{SKCD}}$. We proceed by rigorously showing that we can (i) analytically derive the root-$n$ rate limiting distribution of $\psibar$, which is optimal in the semiparametric efficiency (in Hilbert spaces) sense, and (ii) efficiently compute both natural SKCD variants, the MMD and Wald-type formulations, in closed form for use as test statistics with known limiting distributions under the null.

\section{Main Results}
\label{sec:main}

\subsection{Theoretical Guarantees}
\label{subsec:guarantees}
We henceforth distinguish the true data-generating distribution, denoted by $P_\star \in \Ps$, from an arbitrary distribution $P\in\Ps$. 
We set the notational convention to using $[\cdot]_\star$ instead of $[\cdot]_{P_\star}$, 
e.g., $\psistar$ denotes the true parameter~\eqref{eqn:witness} under $P_\star$, and $\pi_\star$, $\theta_{\star, 0}$, and $\theta_{\star, 1}$ denote the respective nuisance parameters under $P_\star$, and so on. Let $\Pnh^r \in \Ps$ be an initial estimate of $P_\star$ computed using the data split $\Dsc_r$.

The goal in this section is to establish the asymptotic normality of $\psibar$~\eqref{eqn:ose} and use it to construct a test of the null, $\psistar = 0$. The analysis hinges on showing that $\psibar$ is asymptotically linear. This property holds if the estimator's error, $\psibar - \psistar$, can be written as an empirical average, with any remaining terms vanishing at a faster than $n^{-1/2}$ rate. Adding zero to $\psibar - \psistar$ and rearranging terms yields
\begin{equation}\label{eqn:rem-drift}
    \psibar - \psistar =  \frac{1}{2}\sum_{r=1}^2\E{P_n^s}{\phi_\star(Z)} + \frac{1}{2}\sum_{r=1}^2 (\Rs_n^r + \Ds_n^r),
\end{equation}
where $\Rs_n^r \coloneqq \psi_{n}^{r} + \E{\star}{\phi_{n}^{r}(Z)} - \psistar$ and $\Ds_n^r\coloneqq \E{P_n^s}{\phi_n^r(Z) - \phi_\star(Z)} - \E{\star}{\phi_n^r(Z) - \phi_\star(Z)}$. The following theorem provides sufficient conditions on the convergence rates of the nuisance estimators to ensure that both $\max_{r}\norm{\mathcal{R}_n^r}_\Hs$ and $\max_{r}\norm{\mathcal{D}_n^{r}}_\Hs$ vanish at the required rate. Slutsky's lemma and a Hilbert central limit theorem consequently imply the weak convergence of $\sqrt{n}(\psibar - \psistar)$. 
\begin{restatable}[Weak convergence]{theorem}{weakconvergence}
\label{thm:asy-lin-wk-conv}
    Let $\phi_\star$ be the EIF of $\psi$ at $P_\star$. For $r\in\{1,2\}$, suppose $\Pnh^r$ is such that:
    \begin{enumerate}[label = (\roman*)]
        \item\label{cond:prop} $\norm{\pi^r_n - \pi_\star}_{\Lsq(P_{\star, X})} = O_p(n^{-\tau_r})$ for scalar $\tau_r > 0$,
        \item\label{cond:OR} $\norm{\theta^{r}_{n, a} - \theta_{\star, a}}_{\Lsq(P_{\star, X};\Hs)} = O_p(n^{-\gamma_{a,r}})$ for scalar $\gamma_{a,r} > 0$ for each $a\in\{0,1\}$, and
        \item\label{cond:DR} $\tau_r + \min\{\gamma_{0,r}, \gamma_{1,r}\} > 1/2$.
    \end{enumerate}
    Then, letting `$\rightsquigarrow$' denote weak convergence in $\Hs$, we have 
    \begin{enumerate}
        \item $\psibar - \psistar = \frac{1}{n}\sum_{i=1}^n\phi_\star(Z_i) + o_p(n^{-1/2})$,
        \item $\sqrt{n}\p{\psibar - \psistar} \rightsquigarrow \mathbb{H}$,
    \end{enumerate}
    where $\Hbb$ is a tight $\Hs$-valued random variable such that $\inpd{\Hbb}{h}_{\Hs} \sim \Ns\p{0, \Estar\br{\inpd{\phi_\star(Z)}{h}_{\Hs}^2}}$ for every $h\in \Hs$.
\end{restatable}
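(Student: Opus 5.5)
The plan is to start from the decomposition \eqref{eqn:rem-drift} and control its three pieces separately: the leading average, the remainders $\Rs_n^r$, and the drift terms $\Ds_n^r$.

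\textbf{Leading term.} Write $n_s=|\Dsc_s|$. With equal splits, $\frac12\sum_r \E{P_n^s}{\phi_\star(Z)} = \frac1n\sum_i \phi_\star(Z_i)$. If the splits are uneven by one observation, the difference is $O_p(n^{-1})$. So it suffices to show $\max_r\norm{\Rs_n^r}_\Hs=o_p(n^{-1/2})$ and $\max_r\norm{\Ds_n^r}_\Hs=o_p(n^{-1/2})$.

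\textbf{Remainder (second-order term).} I would compute $\Rs_n^r$ explicitly from the EIF in Lem.~\ref{lem:eif-form}. Since $\E{\star}{\Lambda_{X,Y}\mid A=a,X}=\theta_{\star,a}(X)$, conditioning on $X$ gives
\begin{align*}
\Rs_n^r=\E{\star}{\sum_{a\in\{0,1\}}(-1)^{1-a}\,\frac{\pi_{\star,a}(X)-\pi^r_{n,a}(X)}{\pi^r_{n,a}(X)}\big(\theta_{\star,a}(X)-\theta^r_{n,a}(X)\big)},
\end{align*}
where $\pi_{\cdot,1}=\pi$ and $\pi_{\cdot,0}=1-\pi$. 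The plug-in terms $\psi^r_n-\psistar$ cancel against the terms $\E{\star}{\theta^r_{n,1}-\theta^r_{n,0}}-\psi^r_n$ and $\E{\star}{\theta_{\star,a}}$. Because $\Pnh^r\in\Ps$, strong positivity gives $\pi^r_{n,a}\ge\eta$. Applying the triangle inequality for Bochner integrals and then Cauchy--Schwarz,
\begin{align*}
\norm{\Rs_n^r}_\Hs\le \eta^{-1}\sum_a \norm{\pi^r_n-\pi_\star}_{\Lsq(P_{\star,X})}\norm{\theta^r_{n,a}-\theta_{\star,a}}_{\Lsq(P_{\star,X};\Hs)}=O_p\big(n^{-\tau_r-\min_a\gamma_{a,r}}\big).
\end{align*}
This is $o_p(n^{-1/2})$ by condition \ref{cond:DR}.

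\textbf{Drift term (empirical process).} Conditionally on $\Dsc_r$, the map $f=\phi^r_n-\phi_\star$ is a fixed $\Hs$-valued function, and $\Ds_n^r$ is a centered average of $n_s$ i.i.d.\ copies of it. In a Hilbert space, cross terms of independent mean-zero variables vanish, so
\begin{align*}
\E{}{\norm{\Ds^r_n}^2_\Hs\mid\Dsc_r}\le n_s^{-1}\,\E{\star}{\norm{\phi^r_n(Z)-\phi_\star(Z)}^2_\Hs}.
\end{align*}
By conditional Chebyshev (then dominated convergence for probabilities), it suffices to show $\norm{\phi^r_n-\phi_\star}_{\Lsq(P_\star;\Hs)}=o_p(1)$. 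To bound this, I would expand $\phi^r_n-\phi_\star$ and use four facts: $\norm{\Lambda_{x,y}}_\Hs\le\sqrt{\sup k\,\sup\ell}$; $\norm{\theta_{\cdot,a}(x)}_\Hs$ is bounded likewise by Jensen; the weights are bounded by $\eta^{-1}$; and the weight differences are bounded by $\eta^{-2}|\pi^r_n-\pi_\star|$. This yields a bound of order $\norm{\pi^r_n-\pi_\star}_{\Lsq}+\sum_a\norm{\theta^r_{n,a}-\theta_{\star,a}}_{\Lsq}+\norm{\psi^r_n-\psistar}_\Hs$. The first two terms vanish by \ref{cond:prop}--\ref{cond:OR}. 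The last satisfies $\norm{\psi^r_n-\psistar}\le\norm{\E{\star}{\theta^r_{n,1}-\theta^r_{n,0}}-\psistar}+\norm{\E{P^r_{n,X}}{\cdot}-\E{\star}{\cdot}}$. This is the step I expect to be the main obstacle, since $\psi^r_n$ integrates over $\Pnh^r$'s covariate marginal rather than $P_{\star,X}$. I would handle it by noting that the one-step correction already contains $-\psi^r_n$: the term $\psi^r_n$ cancels inside $\psi^r_n+\E{P_n^s}{\phi^r_n}$, so I can redefine $\phi^r_n$ up to a constant without changing $\bar\psi_n$. I would therefore center at $\E{\star}{\theta^r_{n,1}-\theta^r_{n,0}}$, whose distance to $\psistar$ is at most $\sum_a\norm{\theta^r_{n,a}-\theta_{\star,a}}_{\Lsq}$. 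Concretely, I would redo the decomposition with $\psi_n^r$ replaced by this quantity, which changes neither $\bar\psi_n$ nor the remainder computation. Thus $\Ds_n^r=o_p(n^{-1/2})$, and conclusion 1 follows.

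\textbf{Weak convergence.} By Lem.~\ref{lem:eif-form}, $\phi_\star(Z_i)$ are i.i.d., mean zero, and satisfy $\E{\star}{\norm{\phi_\star}^2_\Hs}<\infty$ in the separable Hilbert space $\Hs$. The Hilbert-space CLT therefore gives $n^{-1/2}\sum_i\phi_\star(Z_i)\rightsquigarrow\Hbb$, a tight centered Gaussian with covariance $\E{\star}{\phi_\star\otimes\phi_\star}$, so that $\inpd{\Hbb}{h}\sim\Ns(0,\E{\star}{\inpd{\phi_\star}{h}^2})$. Slutsky's lemma in $\Hs$ then transfers this to $\sqrt n(\psibar-\psistar)$, since the difference is $o_p(1)$ in norm by conclusion 1.
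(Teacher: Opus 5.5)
Your proposal is correct and follows the same route as the paper. Both decompose the error into a leading average plus two terms: a remainder bounded (as in Lem.~\ref{lem:neg-rem}) by the product of propensity and outcome errors, and a drift term controlled through $\Lsq(P_\star;\Hs)$-consistency of the estimated EIF (Lems.~\ref{lem:suff-neg-drift} and~\ref{lem:alic-suff-for-drift}, the latter of which you prove directly via conditional Chebyshev). Both then finish with the Hilbert CLT and Slutsky, which the paper packages as Theorem~2 of \citet{luedtke2024one}.

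The one genuine difference is your handling of the centering constant $\psi_n^r$. You note that $\psibar$, $\Rs_n^r$ and $\Ds_n^r$ are all invariant to shifting $\phi_n^r$ by a constant, and recenter at $\E{\star}{\theta^r_{n,1}(X)-\theta^r_{n,0}(X)}$. This replaces the paper's term-III argument, which identifies $\psi_{n,1}$ with $P_n\theta_{n,1}$ for an empirical measure $P_n$ independent of $\Pnh$. It also removes any need to control the covariate marginal of $\Pnh^r$, which conditions (i)--(ii) alone do not provide.
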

The proof is provided in App.~\ref{app:wk-conv}. Condition \ref*{cond:DR} is a double robustness condition that ensures the remainder $\mathcal{R}_n^r$ converges to zero if the product of the nuisance estimation rates goes to zero faster than $n^{-1/2}$. 
The empirical process term $\mathcal{D}_n^r$ is controlled using the consistency of the EIF estimate, which we show holds under conditions \ref*{cond:prop} and \ref*{cond:OR}.

Now we discuss the statistical efficiency of our estimator. Since a direct Cram\'er-Rao lower bound does not always exist in such RKHS settings, we analyze this in a more general framework. As we 

establish in the following theorem, the proposed cross-fitted one-step estimator is asymptotically efficient under the conditions of Thm.~\ref{thm:asy-lin-wk-conv}. 
Intuitively, this means that among the limiting distributions of estimators of $\psistar$, the weak limit $\Hbb$ of our estimator is optimal in the `smallest spread' sense. We use the shorthand $\psi_{s,\epsilon}$ to mean $\psi_{P_{s,\epsilon}}$.
\begin{restatable}[Local asymptotic minimax optimality]{theorem}{optimallaw}\label{thm:optimal-limit-law}
For any score $s \in \Lsq_0(P_\star)$, let $\{P_{s,\epsilon}\}\subset \Ps$ be a QMD submodel such that $P_{s,0} = P_\star$. Define the local asymptotic minimax risk for an estimator sequence $(\check{\psi}_n)_{n=1}^\infty$ as
\begin{align*}
\mathrm{LAMRisk}_\rho(\check{\psi}_n ; P_\star) \coloneqq \sup_{I}\liminf_{n\to\infty} \sup_{s\in I} \E{s,\frac{1}{\sqrt{n}}}{\rho\left(\sqrt{n}\br{\check{\psi}_n - \psi_{s,\frac{1}{\sqrt{n}}}}\right)},
\end{align*}
where $\rho:\Hs\to\R$ is a nonnegative map, the first supremum is over all finite subsets of $\Lsq_0(P_\star)$, and the expectation is under  the product measure $P_{s,1/\sqrt{n}}^n$. Suppose the conditions of Thm.~\ref{thm:asy-lin-wk-conv} hold. Further, let $(\widetilde{\psi}_n)_{n=1}^\infty$ be any Borel-measurable estimator sequence and $\rho$ be any subconvex function that is continuous a.s. under the law of $\mathbb{H}$. Provided that the sequence $\rho(\sqrt{n}(\bar{\psi} - \psi_{s,1/\sqrt{n}}))$ is asymptotically uniformly integrable under $P_{s,1/\sqrt{n}}$, we have:
\begin{align*}
    \mathrm{LAMRisk}_\rho(\widetilde{\psi}_n;P_\star) \geq \E{\star}{\rho(\Hbb)} = \mathrm{LAMRisk}_\rho(\psibar;P_\star).
\end{align*}
\end{restatable}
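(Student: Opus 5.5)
The plan is to split the statement into a lower bound, valid for every estimator sequence, and an attainment result for $\psibar$. For the lower bound, I would invoke the convolution/local asymptotic minimax theorem for Banach- or Hilbert-valued parameters, in the form of van der Vaart and Wellner (Thm.~3.11.5). Three ingredients are needed.

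\emph{(a) LAN of the experiments $\{P^n_{s,1/\sqrt{n}} : s\in\Lsq_0(P_\star)\}$.} This follows from QMD of each submodel, since the log-likelihood ratio equals $\frac{1}{\sqrt n}\sum_i s(Z_i) - \frac12 \E{\star}{s^2} + o_p(1)$.

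\emph{(b) Pathwise differentiability of $\psi$ with an $\Hs$-valued derivative.} By Lem.~\ref{lem:eif-form}, $\sqrt n\,(\psi_{s,1/\sqrt n}-\psistar)\to \dot\psi_\star(s)=\E{\star}{\phi_\star(Z)s(Z)}$ in $\Hs$. The map $\dot\psi_\star$ is continuous and linear on the tangent space $\Lsq_0(P_\star)$.

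\emph{(c) Identification of the optimal Gaussian limit.} The optimal limit is the tight Gaussian element $G$ with $\inpd{G}{h}_\Hs\sim\Ns(0,\|\dot\psi_\star^*h\|^2)$. Since $\dot\psi_\star^* h=\inpd{\phi_\star(\cdot)}{h}_\Hs$ (with mean zero because $\E{\star}{\phi_\star}=0$), $G$ has the same law as $\Hbb$ from Thm.~\ref{thm:asy-lin-wk-conv}. Tightness comes from $\int\|\phi_\star\|^2dP_\star<\infty$ via the Hilbert CLT.

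The minimax theorem then gives $\mathrm{LAMRisk}_\rho(\widetilde\psi_n;P_\star)\ge \E{}{\rho(\Hbb)}$ for subconvex $\rho$. That theorem's formulation with finite subsets $I$ of $\Lsq_0(P_\star)$ matches our definition directly. I would also check that the tangent space is a linear space, so the theorem applies without convexity issues; this holds because $\Ps$ is locally nonparametric.

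For attainment, I would show that $\psibar$ is \emph{regular}. Under each fixed $P_{s,1/\sqrt n}$, I expect $\sqrt n(\psibar-\psi_{s,1/\sqrt n})\rightsquigarrow\Hbb$. The route is Le Cam's third lemma. Thm.~\ref{thm:asy-lin-wk-conv} gives $\sqrt n(\psibar-\psistar)=n^{-1/2}\sum\phi_\star(Z_i)+o_{p}(1)$ under $P_\star$. Contiguity, which follows from LAN, transfers the $o_p(1)$ remainder to $P_{s,1/\sqrt n}$. Joint weak convergence of $(n^{-1/2}\sum\phi_\star(Z_i),\ \text{log-likelihood ratio})$ in $\Hs\times\R$ then holds by the Hilbert CLT, with cross-covariance $\E{\star}{\phi_\star s}$. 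Le Cam's third lemma shifts the limit to $\Hbb+\dot\psi_\star(s)$ under the local alternatives. Subtracting $\sqrt n(\psi_{s,1/\sqrt n}-\psistar)\to\dot\psi_\star(s)$ gives the limit $\Hbb$.

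Because $\rho$ is continuous $\Hbb$-a.s., the continuous mapping theorem gives $\rho(\sqrt n(\psibar-\psi_{s,1/\sqrt n}))\rightsquigarrow\rho(\Hbb)$. The assumed asymptotic uniform integrability then upgrades this to convergence of expectations, uniformly over the finite $s\in I$. Hence $\lim_n\sup_{s\in I}\E{s,1/\sqrt n}{\rho(\cdot)}=\E{\star}{\rho(\Hbb)}$ for each $I$, and taking the supremum over $I$ gives equality. Combined with the lower bound, this proves the displayed chain.

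I expect the main obstacle to be the Hilbert-valued Le Cam third lemma step, and specifically verifying joint tightness in $\Hs\times\R$. I would handle it by noting that marginal tightness of each component implies joint tightness. The limit law is then identified through its finite-dimensional projections $\inpd{\cdot}{h}_\Hs$, where the scalar third lemma applies directly. A secondary concern is measurability of $\psibar$. Since $\Hs$ is separable and the nuisances are estimated measurably, $\psibar$ is Borel, so outer expectations are unnecessary.
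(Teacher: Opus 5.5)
Your proposal is correct and follows essentially the same route as the paper. The steps match: LAN from QMD; regularity of the parameter sequence from pathwise differentiability; the Hilbert-space local asymptotic minimax theorem for the lower bound, with the optimal Gaussian identified with $\mathbb{H}$ through the EIF covariance; and regularity of $\bar{\psi}_n$ plus asymptotic uniform integrability for attainment. The only differences are minor. You derive regularity of $\bar{\psi}_n$ directly from asymptotic linearity via contiguity and Le Cam's third lemma, whereas the paper cites Theorem 2 of Luedtke and Chung (2024). You also skip the paper's convolution-theorem step showing $W=0$, which the displayed minimax statement does not need.
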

The proof, presented in App.~\ref{app:optimality}, follows from the pathwise differentiability of $P\mapsto\psi_P$ and the convolution and minimax theorems for Hilbert-valued estimators.
\citep[][Theorems 3.12.2 and 3.12.5]{van2023weak}. 
The final equality is achieved via the convergence of means for asymptotically uniformly integrable sequences 
\citep[][Theorem 1.11.3]{van2023weak}.

We highlight the relationship between our estimator and existing kernel-based procedures for \emph{marginal} DTEs. \citet{martinez2023efficient} present a ``cross-U-statistic'' estimator that relies on a single data split. While it attains the $\sqrt{n}$ rate, it is asymptotically linear on only half the sample; this results in an effective sample size of $n/2$, precluding local asymptotic minimax optimality~\citep{kim2024dimension}. In contrast, in a work concurrent to \citet{martinez2023efficient}, \citet{luedtke2024one} construct a doubly robust cross-fitted estimator for marginal DTEs that is asymptotically linear over the entire sample, thereby attaining optimality. Our estimator $\bar{\psi}_n$ is a nontrivial extension of this full-sample one-step construction: under the conditions of Theorem~\ref{thm:asy-lin-wk-conv}, it is asymptotically linear in $\mathcal{H}$ over all $n$ observations, thereby achieving local asymptotic minimax optimality for conditional DTEs.

We now propose a test for the sharp null hypothesis $H_\0: \psistar = \psi_\0$ (e.g., $\psi_\0=0$ for the null $H_0$ in~Eq.~\ref{eqn:hypotheses}). Let $\Wsc$ denote the set of continuous self-adjoint positive-definite linear operators on $\Hs$. We define our test statistic as
\begin{equation}\label{eqn:T_n-defn}
    T_n \coloneqq n \inpd{\Omega_n(\psibar - \psi_\0)}{\psibar - \psi_\0}_{\Hs},
\end{equation}
where $\Omega_n \in \Wsc$ is a consistent estimator for a possibly-$P_\star$-dependent 
operator $\Omega_\star \in \Wsc$ (e.g., the identity, or a regularized inverse covariance operator as discussed in Sec.~\ref{subsec:inference}). Note that when $\psi_\0=0$, $T_n$ corresponds to $n\cdot \widehat{\mathrm{SKCD}}$~\eqref{eqn:skcd-hat}. We let submodel $\Ps_\0\subseteq \Ps$ denote the set of all distributions for which the null hypothesis $H_\0$ holds.

Under $H_\0$ and consistent estimation of $\Omega_\star$, Thm.~\ref{thm:asy-lin-wk-conv} and the continuous mapping theorem imply that $T_n \rightsquigarrow \inpd{\Omega_\star(\Hbb)}{\Hbb}_{\Hs}$. This limiting distribution depends on $P_\star$, which is generally unknown. Therefore, a valid test requires a consistent estimate of the $(1-\alpha)$-quantile of this limit, $q_{\alpha}$. Alg.~\ref{alg:skcd} bootstraps the empirical mean of the influence function to compute this estimate, $\qhatn$. Our test rejects $H_\0$ at level $\alpha$ if $T_n > \qhatn$. 

\begin{algorithm}[ht]
  \caption{SKCD test via bootstrapping the EIF}
  \label{alg:skcd}
  \begin{algorithmic}
    \REQUIRE Data $\Dsc=\{Z_i\}_{i=1}^n$, null $\psi_\0$ (default 0), level $\alpha$, bootstrap samples $B$, estimate $\Omega_n$ of operator $\Omega_\star$.
    \STATE Split $\{1,\ldots, n\}$ into index sets $\Is_1, \Is_2$ for cross-fitting;
    \STATE Let $\psibar= n^{-1}\sum_{i=1}^n \varphi_i$; Compute $\psibar$~\eqref{eqn:ose} and $T_n$~\eqref{eqn:T_n-defn};
    \FOR{$b = 1$ to $B$}
        \STATE Draw $\xi_j \sim \mathrm{Multinomial}(n_r, 1/n_r, \ldots, 1/n_r) - 1$ for $j \in \Is_r$, $r \in \{1,2\}$; Compute $\Delta_n^{(b)} = n^{-1} \sum_{i=1}^n \xi_i \varphi_i$ and $T_n^{(b)} = n \inpd{\Omega_n(\Delta_n^{(b)})}{\Delta_n^{(b)}}_{\Hs}$;
    \ENDFOR
    \STATE Set $\qhatn$ as $(1-\alpha)$-quantile of $\{T_n^{(b)}\}_{b=1}^B$;
    \STATE \textbf{Return:} $\mathbb{I}(T_n > \qhatn)$.
  \end{algorithmic}
\end{algorithm}

Importantly, unlike permutation tests (as in \citealp{park2021conditional}) that require refitting nuisance models $\theta_{n,a}$ in every permutation, our approach computes the EIF estimates only once. In the bootstrap loop, we simply re-weight these fixed estimates using random, zero-centered multinomial draws $\xi_i$ to simulate the limit distribution $\mathbb{H}$. In fact, for specific forms of $T_n$ (see Sec.~\ref{subsec:closed-form-skcd}), we can amortize all the most expensive operations, achieving a worst-case complexity of $\mathcal{O}(n^3 + Bn^2)$.
Compared to a cross-MMD based test (as in~\citealp{martinez2023efficient}), our test achieves optimal asymptotic power while maintaining equivalent complexity, provided that nuisance estimation is super-quadratic.

\begin{restatable}[Validity of the test in Alg.~\ref{alg:skcd}]{theorem}{testguarantees}
    \label{thm:test-guarantees}
    If the conditions of Thm.~\ref{thm:asy-lin-wk-conv} hold, $\Omega_\star\in\Wsc$, and $\Omega_n\in\Wsc$ satisfies $\opnorm{\Omega_n-\Omega_\star} = o_p(1)$, then
    \begin{enumerate}
        \item (type 1 error control)\,\, $\lim_{n\to\infty} P_\star^n\cl{ T_n > \qhatn } = \alpha$ for all $P_\star\in \Ps_\0$, and
        \item (test consistency)\,\, $\lim_{n\to\infty} P_\star^n\cl{ T_n > \qhatn } = 1$ for any fixed $P_\star \in\Ps\setminus\Ps_\0$. 
    \end{enumerate}
\end{restatable}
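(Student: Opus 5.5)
We prove the two parts through three ingredients: the limit law of $T_n$, conditional bootstrap consistency, and continuity of the limiting quantile. By Thm.~\ref{thm:asy-lin-wk-conv}, $\sqrt{n}(\psibar-\psistar)\rightsquigarrow\Hbb$. Because $\opnorm{\Omega_n-\Omega_\star}=o_p(1)$ and $\Omega_\star$ is bounded, the map $(h,\Omega)\mapsto\inpd{\Omega h}{h}_\Hs$ is jointly continuous. Slutsky's lemma and the continuous mapping theorem then give, under $H_\0$ (where $\psistar=\psi_\0$), $T_n\rightsquigarrow \mathbb{T}:=\inpd{\Omega_\star\Hbb}{\Hbb}_\Hs$.

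\textbf{Distribution of $\mathbb{T}$.} We next show that $\mathbb{T}$ has a continuous, strictly increasing distribution function near $q_\alpha$, so that quantile convergence gives exact level $\alpha$. Write $\Omega_\star^{1/2}\Hbb$, which is a centered Gaussian element with covariance $C=\Omega_\star^{1/2}\Sigma_\star\Omega_\star^{1/2}$, where $\Sigma_\star=\Estar[\phi_\star\otimes\phi_\star]$. The Karhunen--Lo\`eve expansion gives $\mathbb{T}=\sum_j \kappa_j \chi^2_{1,j}$, with $\kappa_j\ge0$ the eigenvalues of $C$ and the $\chi^2_{1,j}$ independent. By Lem.~\ref{lem:eif-form}, $\Estar\norm{\phi_\star}^2>0$, so $\Sigma_\star\neq0$. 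Since $\Omega_\star$ is positive definite, $C\neq0$, hence some $\kappa_j>0$. Such a weighted chi-square series has a continuous distribution with positive density on $(0,\infty)$, so $q_\alpha$ is a continuity point and the quantile is unique.

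\textbf{Bootstrap consistency.} This is the main obstacle. Let $\varphi_i=\phi_n^r(Z_i)$ for $i\in\Is_r$; the multinomial weights satisfy $\sum_{j\in\Is_r}\xi_j=0$. Conditional on the data, we decompose
$\sqrt{n}\Delta_n^{(b)}=n^{-1/2}\sum_i\xi_i\phi_\star(Z_i)+n^{-1/2}\sum_i\xi_i(\varphi_i-\phi_\star(Z_i))$.
\begin{itemize}
\item The first term is handled within each fold by the Hilbert-space empirical bootstrap CLT (Gin\'e--Zinn type, valid since $\Estar\norm{\phi_\star}^2<\infty$). It converges conditionally in probability, in bounded-Lipschitz distance, to $\tfrac12$-weighted copies from the two halves, which combine to $\Hbb$.
\item The second term has conditional second moment bounded by $n^{-1}\sum_i\norm{\varphi_i-\phi_\star(Z_i)}^2$, using $\mathrm{Var}(\xi_j)\le1$ and the weak negative correlation of the weights. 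Within fold $s$, conditioning on $\Dsc_r$, the expectation of this quantity is $\norm{\phi_n^r-\phi_\star}^2_{\Lsq(P_\star;\Hs)}$. The proof of Thm.~\ref{thm:asy-lin-wk-conv} already shows this is $o_p(1)$ under conditions (i)--(ii), via strong positivity and boundedness of $\lambda$. Applying Markov's inequality conditionally makes the second term negligible.
\end{itemize}
Applying the continuous mapping theorem again, with $\Omega_n\to\Omega_\star$, the conditional law of $T_n^{(b)}$ converges in probability to the law of $\mathbb{T}$. With $B\to\infty$, or taking $\qhatn$ as the exact conditional quantile, continuity of $\mathbb{T}$ at $q_\alpha$ gives $\qhatn\to q_\alpha$ in probability. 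One detail needs care: we must check that the centering by $\psibar$ does not enter. It does not, because $\sum_{j\in\Is_r}\xi_j=0$ annihilates any fold-constant shift, including $\psi_n^r$.

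\textbf{Type 1 error and consistency.} Part 1 follows from $T_n\rightsquigarrow\mathbb{T}$, $\qhatn\to q_\alpha$ in probability, and continuity at $q_\alpha$, which give $P(T_n>\qhatn)\to P(\mathbb{T}>q_\alpha)=\alpha$.

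For part 2, the bootstrap argument above never used the null, so $\qhatn=O_p(1)$ still holds. Meanwhile $\psibar\to\psistar$ in probability, so $T_n/n\to\inpd{\Omega_\star(\psistar-\psi_\0)}{\psistar-\psi_\0}_\Hs$. This limit is strictly positive by positive definiteness of $\Omega_\star$ and $\psistar\ne\psi_\0$. Hence $T_n\to\infty$ in probability, and $P(T_n>\qhatn)\to1$.
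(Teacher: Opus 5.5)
Your argument is correct, but it follows a different, more self-contained route than the paper's. The paper does not prove the limit law or bootstrap consistency itself. After Lem.~\ref{lem:suff-neg-drift} gives $\norm{\phi_n^r-\phi_\star}_{\Lsq(P_\star;\Hs)}=o_p(1)$, it cites Theorem 4 of \citet{luedtke2024one} for $\qhatn\to q_\alpha$ in probability. It then rewrites the rejection event as $\psi_\0\notin\Cs_n(\qhatn)$ and cites Theorem 3(i) there for asymptotic coverage $1-\alpha$.

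For consistency, the paper bounds the non-rejection event by events involving $S_n=\norm{\psistar-\psi_\0}_{\Omega_n}$ and $V_n=\norm{\psibar-\psistar}_{\Omega_n}+\sqrt{\qhatn/n}$, via the reverse triangle inequality, and shows $S_n\to\delta>0$ and $V_n=o_p(1)$. This is the same idea as your $T_n/n\to\norm{\psistar-\psi_\0}^2_{\Omega_\star}>0$ together with $\qhatn=O_p(1)$.

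What your route buys is explicitness about what the cited results actually need in this setting.
\begin{itemize}
\item The Karhunen--Lo\`eve step shows that $\norm{\phi_\star}_{\Lsq(P_\star;\Hs)}>0$ (Lem.~\ref{lem:eif-form}) and strict positivity of $\Omega_\star$ give a limit distribution function that is continuous and strictly increasing at $q_\alpha$. That is what justifies exact level $\alpha$ rather than merely $\le\alpha$.
\item The within-fold multinomial weights have covariance $I-\frac{1}{m}\mathbf{1}\mathbf{1}^\top$ on a fold of size $m$, which is an orthogonal projection. This simultaneously annihilates the fold-constant plug-in terms $\psi_n^r$ and bounds the conditional second moment of the nuisance-error term by $n^{-1}\sum_i\norm{\varphi_i-\phi_\star(Z_i)}^2$. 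So $L^2$-consistency of the cross-fitted EIF estimate is all the bootstrap requires.
\item You correctly flag that $\qhatn$ must be the exact conditional quantile, or $B=B_n\to\infty$. The paper leaves this implicit in the citation, even though Alg.~\ref{alg:skcd} uses a fixed $B$.
\end{itemize}
The paper's route is shorter and inherits whatever generality \citet{luedtke2024one} provides.

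Two small fixes. First, under the paper's convention $\phi_n^r$ is fit on $\Dsc_r$ and evaluated at $Z_i$ for $i\in\Is_s$. Your later conditioning on $\Dsc_r$ uses this correctly, but your definition of $\varphi_i$ says $i\in\Is_r$. Second, each fold's bootstrap term converges to a Gaussian with covariance $\tfrac12\Sigma_\star$, i.e., scalar weight $1/\sqrt{2}$ on each fold's limit, not $\tfrac12$. That is what makes the two halves combine to a limit with the law of $\Hbb$.
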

The proof hinges on bootstrap consistency, and is deferred to App.~\ref{app:validity-of-test}. While our test provides a decision rule for rejecting the null hypothesis of no global conditional distributional effect, it does not immediately reveal the nature of the heterogeneity upon rejecting the null. To enable finer interpretation of the SCoDiTE, we can construct a uniform confidence band for the witness function $(x,y) \mapsto \psi_\star(x,y)$ by simply inverting our testing procedure, i.e., by evaluating the support function of the $(1-\alpha)$-confidence ellipsoid 
implied by the test. 
This guarantees uniform coverage over the entire domain $\Zs$. When using the Wald-type formulation, our approach adapts the width of the band to the local geometry of the operator $\Omega_n$, allowing for tighter bands in regions of the covariate space with lower variance. Let $\Wsc_{\mathrm{inv}}\subset \Wsc$ consist of all $\Omega\in\Wsc$ that are boundedly invertible. 
\begin{restatable}[Uniform confidence band for the SCoDiTE]{theorem}{confidenceband}
    \label{thm:confidence-band}
    Suppose the conditions of Thm.~\ref{thm:asy-lin-wk-conv} hold, $\Omega_\star\in\Wsc_{\mathrm{inv}}$, and the bootstrap quantile $\qhatn$ is constructed (Alg.~\ref{alg:skcd}) using $\Omega_n\in\Wsc_{\mathrm{inv}}$ such that $\opnorm{\Omega_n-\Omega_\star} = o_p(1)$. Define $w_n: \Xs\times \Ys \to \R$ so its square satisfies $w_n^2(x,y)\coloneqq \inpd{\Lambda_{x,y}}{\Omega_n^{-1}\Lambda_{x,y}}_{\Hs}\qhatn/n$, and let $B_n(x,y) \coloneqq \br{\psibar(x,y) - w_n(x,y), \;\psibar(x,y) + w_n(x,y)}$. Then, 
    \begin{align*}
        \lim_{n\to\infty} P^n_\star \p{ \psi_\star(x,y) \in B_n(x,y) \quad \text{for all }\, x,y } \geq 1-\alpha.
    \end{align*}
\end{restatable}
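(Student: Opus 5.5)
The plan is to invert the test of Thm.~\ref{thm:test-guarantees} into a confidence ellipsoid and then evaluate its support function at the point evaluations $\Lambda_{x,y}$. Consider the (random) set
\[
C_n \coloneqq \{h\in\Hs : n\inpd{\Omega_n(\psibar - h)}{\psibar - h}_{\Hs} \le \qhatn\}.
\]
Applying Thm.~\ref{thm:test-guarantees} with the sharp null $\psi_\0 \coloneqq \psistar$ gives $P_\star^n(\psistar\in C_n) = 1 - P_\star^n(T_n>\qhatn) \to 1-\alpha$. This works because $P_\star$ trivially lies in the corresponding $\Ps_\0$, and the quantile $\qhatn$ from Alg.~\ref{alg:skcd} does not depend on $\psi_\0$ (the bootstrap uses only centered EIF estimates). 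Strictly, we need $\lim P_\star^n(T_n\le \qhatn)\ge 1-\alpha$, which is exactly the type 1 error statement.

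Next comes a deterministic inclusion. On the event $\{\psistar\in C_n\}$, the reproducing property gives $\psistar(x,y)-\psibar(x,y) = \inpd{\psistar-\psibar}{\Lambda_{x,y}}_{\Hs}$. Since $\Omega_n\in\Wsc_{\mathrm{inv}}$ is self-adjoint and positive definite, it has a bounded positive square root $\Omega_n^{1/2}$, which is also boundedly invertible. Writing $\Lambda_{x,y} = \Omega_n^{1/2}\Omega_n^{-1/2}\Lambda_{x,y}$ and applying Cauchy–Schwarz yields
\[
|\inpd{\psistar-\psibar}{\Lambda_{x,y}}|\le \norm{\Omega_n^{1/2}(\psistar-\psibar)}_\Hs\,\norm{\Omega_n^{-1/2}\Lambda_{x,y}}_\Hs .
\]
Here $\norm{\Omega_n^{1/2}(\psistar-\psibar)}^2_\Hs = \inpd{\Omega_n(\psistar-\psibar)}{\psistar-\psibar}\le \qhatn/n$, and $\norm{\Omega_n^{-1/2}\Lambda_{x,y}}^2 = \inpd{\Lambda_{x,y}}{\Omega_n^{-1}\Lambda_{x,y}}$. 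The product is therefore at most $w_n(x,y)$, simultaneously for all $(x,y)$. So $\{\psistar\in C_n\}\subseteq\{\psi_\star(x,y)\in B_n(x,y)\ \forall x,y\}$, and taking $\liminf$ of probabilities gives the claim with $\ge 1-\alpha$. (Note that $\qhatn\ge 0$, so $w_n$ is well defined.)

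The main obstacle is to justify rigorously that Thm.~\ref{thm:test-guarantees} applies with $\psi_\0=\psistar$. That theorem presumably proves bootstrap consistency, $\qhatn\to q_\alpha$ in probability, together with $T_n\rightsquigarrow\inpd{\Omega_\star\Hbb}{\Hbb}$. If those are only stated for $\psi_\0=0$, I would redo that step directly. Since $\sqrt n(\psibar-\psistar)\rightsquigarrow\Hbb$ by Thm.~\ref{thm:asy-lin-wk-conv} and $\Omega_n\to\Omega_\star$ in operator norm, the continuous mapping theorem and Slutsky give $n\inpd{\Omega_n(\psibar-\psistar)}{\psibar-\psistar}\rightsquigarrow \inpd{\Omega_\star\Hbb}{\Hbb}$. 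Combined with $\qhatn\to_p q_\alpha$, we get $\liminf P(\text{stat}\le\qhatn)\ge P(\inpd{\Omega_\star\Hbb}{\Hbb}< q_\alpha)$. This is $\geq 1-\alpha$ provided the limit law has no atom at $q_\alpha$. That continuity holds because $\Hbb$ is a nondegenerate Gaussian ($\Estar\norm{\phi_\star}^2>0$ by Lem.~\ref{lem:eif-form}) and $\Omega_\star$ is invertible, so the quadratic form is a nonzero weighted sum of $\chi^2_1$ variables with a continuous distribution. This step is where the invertibility assumption $\Omega_\star\in\Wsc_{\mathrm{inv}}$ is used, while $\Omega_n\in\Wsc_{\mathrm{inv}}$ is what makes $w_n$ well defined.
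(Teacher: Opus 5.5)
Your proposal is correct and follows the paper's own proof. The paper likewise uses the $\Omega_n^{1/2}$/$\Omega_n^{-1/2}$ Cauchy--Schwarz bound to show that membership of $\psi_\star$ in the ellipsoid $\{h: n\langle\Omega_n(\bar\psi_n-h),\bar\psi_n-h\rangle_{\mathcal{H}}\le \hat q_n\}$ forces simultaneous coverage for all $(x,y)$. It then gets the $1-\alpha$ limit from bootstrap quantile consistency and ellipsoid coverage, citing Theorems 4 and 3(i) of Luedtke and Chung directly rather than routing through Thm.~\ref{thm:test-guarantees} with $\psi_0=\psi_\star$, which comes to the same thing.

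One minor point on your fallback argument: positive definiteness of $\Omega_\star$ together with $\Sigma_\star\neq 0$ already rules out an atom at $q_\alpha$. So bounded invertibility of $\Omega_\star$ is not really what that step uses.
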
 
The proof, provided in App.~\ref{app:confidence-band}, relies on the Cauchy-Schwarz inequality in RKHSs. The band $B_n$ allows practitioners to visualize the SCoDiTE and helps identify regions of covariates and outcomes where the effect is statistically significant. We can also construct tighter pointwise-in-$x$ uniform-in-$y$ confidence bands by restricting the test statistic in Alg.~\ref{alg:skcd} to 
$\mathcal{H}_x = \{ h(x, \cdot\,) : h \in \mathcal{H} \}$. We demonstrate this utility in Sec.~\ref{subsec:real-data}, where we use these bands to localize wealth impacts for distinct household profiles.

\subsection{Closed-Form Estimators for the SKCD}
\label{subsec:closed-form-skcd}
We now derive computable expressions for the test statistic. Our constructions are agnostic to the choice of propensity models $\pi_n^r$ and accommodate a range of  outcome models 
$\thetalam_{n,a}^r$, including kernel ridge regression, distributional random forests~\citep{naf2023confidence}, and deep kernel methods~\citep{shimizu2024neural}, provided these estimates lie in the finite-dimensional subspace $\Fs_n \coloneqq \mathrm{span}\{\Lambda_{x_i,y_j}\}_{i,j=1}^n$. Under this condition, $\bar{\psi}_n$ lies in $\Fs_n$, and allows the SKCD to be estimated in closed-form using only Gram matrices $[\bK]_{ij} = k(x_i, x_j)$ and $[\bL]_{ij} = \ell(y_i, y_j)$.

First, for the MMD formulation ($\Omega_\star = \Omega_n = I$), we construct a weight matrix. Let $\bbeta^r_a(x) \in \R^{n}$ denote the vector of coefficients for the outcome model $\theta^r_{n,a}(x) = \sum_{j}[\bbeta^r_a(x)]_{j} \Lambda_{x, y_j}$ such that $[\bbeta^r_a(x)]_j = 0$ for any observation where $j \notin \Is^r$ or $a_j \neq a$. For any index $i$, let $s(i)\in\{1,2\}$ be the split containing $i$, and $r(i)=3-s(i)$ be the complement.  Define $\pi^{r(i)}_n(x_i) \coloneqq w_i$. We construct $\bC \in \R^{n \times n}$ entry-wise as: 
\begin{align}\label{eqn:cij}
    [\mathbf{C}]_{ij} \coloneqq \left\{ \begin{array}{@{}l @{\enspace} l@{}}
        \frac{1}{2n_{s(i)}}\left(\frac{a_i}{w_i} - \frac{1-a_i}{1-w_i}\right) & \mbox{ if }j=i \\
        \frac{1}{2n_{s(i)}}\Big[ \left(1-\frac{a_i}{w_i}\right)[\boldsymbol{\beta}^{r(i)}_1(x_i)]_j + \left(\frac{1-a_i}{1-w_i} - 1 \right)[\boldsymbol{\beta}^{r(i)}_0(x_i)]_j \Big]    & \mbox{ if }j\neq i
    \end{array}\right.
\end{align}
The diagonal terms of $\bC$ hold inverse propensity weights, while the off-diagonal block terms capture the augmentation corrections.
With this representation, the squared RKHS norm of our estimator reduces to a trace operation, as established in the following result.

\begin{restatable}[Closed-form MMD statistic from Alg.~\ref{alg:skcd}]{proposition}{mmdclosed}
    \label{prop:mmd-closed}
    If $\Omega_n=I$ and $\bC$ is as constructed using~\eqref{eqn:cij}, then the squared MMD test statistic from Alg.~\ref{alg:skcd} takes the form $T_n^{\mathrm{MMD}} \coloneqq  n\norm{\psibar}_{\Hs}^2 = n\inpd{\bC}{\bK\bC\bL}_{\F}$.
\end{restatable}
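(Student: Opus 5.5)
The plan is to write $\psibar$ explicitly as an element of $\Fs_n$ in the form $\psibar=\sum_{i,j}[\bC]_{ij}\Lambda_{x_i,y_j}$, and then expand the squared norm using the reproducing property of $\lambda=k\ell$. The first step is to expand the cross-fitted one-step estimator~\eqref{eqn:ose}. Substituting the EIF of Lem.~\ref{lem:eif-form} into $\E{P_n^s}{\phi_n^r(Z)}$, the plug-in term $\psi_n^r$ cancels against the $-\psi_n^r$ inside $\phi_n^r$. This gives
\begin{align*}
\psibar=\sum_{r=1}^2\frac{1}{2n_s}\sum_{i\in\Is_s}\Big[\Big(\tfrac{a_i}{w_i}-\tfrac{1-a_i}{1-w_i}\Big)\big(\Lambda_{x_i,y_i}-\theta^r_{n,a_i}(x_i)\big)+\theta^r_{n,1}(x_i)-\theta^r_{n,0}(x_i)\Big].
\end{align*}
Reindexing by $i$, so that $s=s(i)$ and $r=r(i)$, turns the double sum into a single sum $\sum_{i=1}^n\frac{1}{2n_{s(i)}}[\cdots]$.

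The second step collects coefficients. I substitute $\theta^{r(i)}_{n,a}(x_i)=\sum_j[\bbeta^{r(i)}_a(x_i)]_j\Lambda_{x_i,y_j}$. When $a_i=1$, the $\theta_1$ coefficient is $1-1/w_i$ and the $\theta_0$ coefficient is $-1$. When $a_i=0$, the $\theta_1$ coefficient is $1$ and the $\theta_0$ coefficient is $\frac{1}{1-w_i}-1$. Both cases are captured by the unified expressions $(1-a_i/w_i)$ and $(\frac{1-a_i}{1-w_i}-1)$ appearing in~\eqref{eqn:cij}. The direct term $\Lambda_{x_i,y_i}$ gives the diagonal entry.

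One point needs care: the $j=i$ contributions of $\bbeta$. By construction, $[\bbeta^{r(i)}_a(x_i)]_j=0$ unless $j\in\Is^{r(i)}$, while $i\in\Is^{s(i)}$, so $[\bbeta^{r(i)}_a(x_i)]_i=0$. The diagonal therefore contains only the inverse-propensity weight, and the result is exactly $\psibar=\sum_{i,j}[\bC]_{ij}\Lambda_{x_i,y_j}$. All features in row $i$ share the covariate $x_i$, which is what makes the representation bilinear.

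The third step computes the norm. Using $\inpd{\Lambda_{x_i,y_j}}{\Lambda_{x_{i'},y_{j'}}}_\Hs=k(x_i,x_{i'})\ell(y_j,y_{j'})$, we get
\begin{align*}
\norm{\psibar}_\Hs^2=\sum_{i,j,i',j'}[\bC]_{ij}[\bK]_{ii'}[\bC]_{i'j'}[\bL]_{j'j}=\mathrm{tr}(\bC^\top\bK\bC\bL)=\inpd{\bC}{\bK\bC\bL}_{\F},
\end{align*}
where the last equality uses the symmetry of $\bL$. Multiplying by $n$ gives $T_n^{\mathrm{MMD}}$.

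I expect the main obstacle to be the bookkeeping in the second step rather than any conceptual difficulty. Specifically, I need to verify that the cross-fitting indices and the zero pattern of $\bbeta$ make the diagonal and off-diagonal entries match~\eqref{eqn:cij} exactly. This includes checking that the factor $\tfrac12$ and the split sizes $n_{s(i)}$ are attached correctly. I will check this by examining the cases $a_i=1$ and $a_i=0$ separately.
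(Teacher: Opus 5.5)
Your proposal is correct and follows essentially the same route as the paper: your first two steps are the content of the paper's Lemma~\ref{lem:C-form}, which derives $\psibar=\sum_{i,j}[\bC]_{ij}\Lambda_{x_i,y_j}$. Your index expansion of $\norm{\psibar}_{\Hs}^2$ is the same reproducing-property computation that the paper writes as $\mathrm{vec}(\bC^\top)^\top(\bK\otimes\bL)\,\mathrm{vec}(\bC^\top)$ and closes with the vec trick. Your explicit check that $[\bbeta^{r(i)}_a(x_i)]_i=0$, so the diagonal of $\bC$ carries only the inverse-propensity weight, is a detail the paper leaves implicit.
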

We prove this result in App.~\ref{app:MMD-derivation}. The MMD statistic can thus be evaluated with the standard $\Os(n^3)$ worst-case complexity for kernel methods, ensuring that our test does not incur an extra prohibitive computational overhead. While we focus on exact computation here to isolate statistical performance from approximation errors, we expect that employing low‑rank kernel approximations (e.g., via the Nyström method) would reduce the worst-case complexity below cubic in $n$ under standard conditions on the kernel's spectral decay \citep{bach2013sharp, rudi2015less}.

We next turn to the Wald-type statistic, which incorporates the covariance structure of the estimator. Let $\Sigma_\star(h)\coloneqq \E{\star}{\inpd{\phi_\star(Z)}{h}_{\Hs} \phi_\star(Z)}$ denote the covariance operator of $\Hbb$ and let $\Sigma_n(h)\coloneqq \frac{1}{2}\sum_{r=1}^2 \E{P^s_n}{\inpd{\phi^r_n(Z)}{h}_{\Hs}\phi^r_n(Z)}$ be a finite-dimensional estimator. The choice of $\Omega_\star$ corresponds to the regularized inverse of $\Sigma_\star$, and so we consider the finite-dimensional operator 
\begin{equation}\label{eqn:Omega_n}
    \Omega_n \coloneqq ((1-\varepsilon)\Sigma_n + \varepsilon I)^{-1}
\end{equation} 
to compute $T_n^{\mathrm{Wald}}$. A na\"{i}ve inversion on the tensor product space would involve an $n^2 \times n^2$ matrix, incurring a prohibitive $\Os(n^6)$ worst-case complexity. To avoid this, we exploit the fact that the empirical covariance $\Sigma_n$ has rank $\leq n$, constructing auxiliary matrices that capture the cross-fitting structure and the low-rank factors. 
Let $[\bE]_{ij}$ represent the pure outcome model coefficients 
(case 2 of Eq.~\ref{eqn:cij} \emph{without} the propensity weights). Then, define:
\begin{align} \label{eqn:wald-intermediate-matrices}
    [\bD^s]_{ij} \coloneqq \indicator{i\in\Is^s}\sqrt{2n_s}[\bC]_{ij},\quad [\bV^s]_{ij} \coloneqq \indicator{i\in\Is^s}\sqrt{2/n_s}[\bE]_{ij},\quad \bW^s \coloneqq \bD^s - n_s\bV^s.
\end{align}
Let $\bd^s, \bv^s, \bw^s$ be the row-wise vectorizations of these matrices respectively. Define $\bG \coloneqq \bK \otimes \bL$, and $\bS^s \coloneqq (\I_n \bullet \bD^s)^\top$, where `$\bullet$' denotes the row-wise Kronecker product. We stack these components into two block matrices $\bT, \bU \in \R^{n^2 \times (2n+4)}$ as follows:
\begin{align} \label{eqn:T-U-matrices}
    \bT \coloneqq \begin{bmatrix} 
        \bG\bS^1 & \bG\bS^2 & \bG\bv^1 & \bG\bv^2 & \bG\bw^1 & \bG\bw^2\end{bmatrix}, \quad 
    \bU \coloneqq \begin{bmatrix} \bS^1 & \bS^2 & -\bd^1 & -\bd^2 & -\bv^1 & -\bv^2 \end{bmatrix}. 
\end{align}
With this, the Wald-type statistic reduces to a single-rank correction of the MMD statistic, as established below. 
\begin{restatable}[Closed-form Wald-type statistic from Alg~\ref{alg:skcd}]{proposition}{waldclosed} \label{prop:wald-closed} 
    If $\Omega_n$ is as in \eqref{eqn:Omega_n} and $\bc \coloneqq \mathrm{vec}(\bC^\top)$ is constructed from \eqref{eqn:cij}, then the Wald-type statistic from Alg.~\ref{alg:skcd} can be computed in $\Os(n^3)$ operations as
    \begin{align*}
        T_n^{\mathrm{Wald}} &\coloneqq n\inpd{\Omega_n(\psibar)}{\psibar}_{\Hs} = \frac{n}{\varepsilon} \inpd{\bC}{\bK \bC \bL}_F - \frac{n(1-\varepsilon)}{\varepsilon} \bc^\top\bT\p{\varepsilon\bI + (1-\varepsilon)\bU^\top\bT}^{-1}\bU^\top\bG\bc.
    \end{align*} 
\end{restatable}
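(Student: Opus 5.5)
We reduce everything to finite-dimensional linear algebra in the coefficient basis $\{\Lambda_{x_i,y_j}\}_{i,j=1}^n$ of $\Fs_n$ and then apply the Woodbury identity. Write any $h=\sum_{i,j}[\bH]_{ij}\Lambda_{x_i,y_j}\in\Fs_n$ and identify it with $\mathrm{vec}(\bH^\top)\in\R^{n^2}$. Since $\lambda((x_i,y_j),(x_k,y_l))=k(x_i,x_k)\ell(y_j,y_l)$, inner products become $\inpd{h}{h'}_\Hs=\mathrm{vec}(\bH^\top)^\top\bG\,\mathrm{vec}(\bH'^\top)$ with $\bG=\bK\otimes\bL$, so $\inpd{h}{h'}_\Hs=\inpd{\bH}{\bK\bH'\bL}_F$. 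By Prop.~\ref{prop:mmd-closed}, $\psibar$ has coefficient matrix $\bC$, i.e.\ coefficient vector $\bc$.

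\textbf{Step 1: low-rank form of $\Sigma_n$.} Each $\phi^r_n(Z_i)$ for $i\in\Is^s$ lies in $\Fs_n$. Its coefficients, read off from Lem.~\ref{lem:eif-form}, consist of three pieces. The first is the IPW part plus the outcome-model part at $x_i$, which is row $i$ of $2n_s\bC$ (the $i$th row of $\bD^s$, rescaled). The second is the plug-in $\psi^r_n$, an average of the pure outcome-model coefficients $\bE$ over the split, which gives the vectors $\bv^s$. The third is the centering, which yields $\bw^s$-type terms. Expanding the empirical second moment $\frac12\sum_r\E{P^s_n}{\inpd{\phi^r_n}{h}\phi^r_n}$ then gives a representation
\[
\Sigma_n(h)\;\longleftrightarrow\;\bU\,\bT^\top\mathrm{vec}(h)
\]
or an equivalent factorisation; I would aim for $\Sigma_n=\mathcal{A}\mathcal{B}^*$ with $\mathcal{A}:\R^{2n+4}\to\Fs_n$ having coefficient matrix $\bU$ and $\mathcal{B}^*h=\bT^\top\mathrm{vec}(h)$ (the $\bG$ in $\bT$ encodes the RKHS inner product). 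The $n$ per-observation rows account for the $\bS^1,\bS^2$ blocks. The mean terms are cross terms between the per-observation rows and the split averages, and they produce the four extra rank-one columns, with signs matching $-\bd^s,-\bv^s$ in $\bU$. The exact bookkeeping of these cross terms, with the $\sqrt{2n_s}$ and $\sqrt{2/n_s}$ scalings and the definition $\bW^s=\bD^s-n_s\bV^s$, is the main obstacle. I would verify it by expanding $(\mathbf{a}_i-\mathbf{m})(\mathbf{a}_i-\mathbf{m})^\top$ summed over $i\in\Is^s$ and matching each term to a column pair of $(\bT,\bU)$.

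\textbf{Step 2: Woodbury.} With $\Omega_n=(\varepsilon I+(1-\varepsilon)\mathcal{A}\mathcal{B}^*)^{-1}$, the identity gives
\[
\Omega_n=\varepsilon^{-1}I-\varepsilon^{-1}(1-\varepsilon)\,\mathcal{A}\bigl(\varepsilon I+(1-\varepsilon)\mathcal{B}^*\mathcal{A}\bigr)^{-1}\mathcal{B}^*.
\]
Here $\mathcal{B}^*\mathcal{A}$ is the $(2n+4)\times(2n+4)$ matrix $\bU^\top\bT$ (or its transpose, depending on the convention fixed in Step 1). Invertibility follows because $\Omega_n^{-1}$ is invertible on $\Fs_n$, so the small matrix is invertible as well. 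Taking $n\inpd{\Omega_n\psibar}{\psibar}$, the first term gives $\frac{n}{\varepsilon}\inpd{\bC}{\bK\bC\bL}_F$ by Prop.~\ref{prop:mmd-closed}. The second gives $\bc^\top\bG\,\mathcal{A}(\cdots)^{-1}\mathcal{B}^*\bc$, which rearranges to $\bc^\top\bT(\varepsilon\bI+(1-\varepsilon)\bU^\top\bT)^{-1}\bU^\top\bG\bc$ once the roles of $\bT$ and $\bU$ are matched using the symmetry of $\Sigma_n$ and $\bG$.

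\textbf{Step 3: complexity.} The products $\bG\bS^s$, $\bG\bv$, $\bG\bc$ are never formed through $n^2\times n^2$ matrices. Instead we use $(\bK\otimes\bL)\mathrm{vec}(\bM^\top)=\mathrm{vec}((\bK\bM\bL)^\top)$, and because of its row-wise Kronecker structure each column of $\bS^s$ corresponds to a rank-one-per-row coefficient matrix. Computing $\bU^\top\bT$ then reduces to Gram-type contractions of $n\times n$ matrices, costing $\Os(n^3)$ in total. The solve is with a $(2n+4)$-dimensional matrix, which is also $\Os(n^3)$.
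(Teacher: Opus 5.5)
Your proposal is correct and takes essentially the paper's route. The paper also writes $\phi^r_n(z_i)/\sqrt{2n_s}$ as column $i$ of $\bS^s$ minus $\bv^s$, expands the outer products to get $\bc^\top=\bb^\top(\varepsilon\bI+(1-\varepsilon)\bT\bU^\top)$, applies Woodbury, and evaluates $\bb^\top\bG\bc$ via face-splitting identities. One correction to your Step 1: $\phi^r_n$ has no separate ``centering'' piece beyond $-\psi^r_n$. The $\bw^s=\bd^s-n_s\bv^s$ terms come from merging the cross term $-\bd^s\bv^{s\top}$ with $n_s\bv^s\bv^{s\top}$.

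The only real difference is that you apply Woodbury directly to the operator factorization $\Sigma_n=\mathcal{A}\mathcal{B}^*$. The paper instead first invokes the $\Fs_n$-invariance lemma and then matches coefficients of $\Omega_n(\psibar)$ under a distinct-points assumption. Your version is slightly cleaner because it never needs the $\Lambda_{x_i,y_j}$ to be linearly independent.
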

The proof of this result is provided in App.~\ref{app:wald-derivation}. 
The operator $\Sigma_n$ estimates the covariance of the EIF in a tensor product RKHS.
Consequently, deriving $\bU^\top\bT$ tractably requires applications of identities involving face-splitting and Khatri-Rao products. This reduces the dominating computation of $T_n^{\mathrm{Wald}}$
to inverting a $(2n+4) \times (2n+4)$ matrix, achieving the same worst-case complexity as $T_n^{\mathrm{MMD}}$. 

Notably, while \citet{luedtke2024one} suggest a test for marginal DTEs using their one-step estimator, they do not derive closed-form expressions for the resulting test statistic; in contrast, our derivations enable testing of conditional DTEs while avoiding approximation error. Crucially, these expressions allow us to further exploit the bilinearity of the inner product in Eq.~\ref{eqn:T_n-defn} to pre-compute all objects requiring $\Os(n^3)$ operations in the SKCD test. Evaluations within the bootstrap loop simply project the random multipliers onto these pre-computed objects, with each resampling requiring only $\Os(n^2)$ operations (see App.~\ref{app:fast-skcd}).

\section{Experiments}

\subsection{Simulation: Distribution Shift in Images}
\label{subsec:simulations}

We investigate the finite-sample size and power of our SKCD test at level $\alpha=0.05$. Our simulation design uses the MNIST dataset~\citep{deng2012mnist} to create scenarios where treatment effects manifest as distribution shifts that are challenging to detect. We let both covariates $X$ and outcomes $Y$ be PCA embeddings of learned image representations (in $\R^5$) using a ResNet-18-based encoder. Treatment $A$ is assigned via a Bernoulli draw parameterized by a non-linear function of the covariates, designed to maintain overlap. We provide all experimental specifications and implementation details in App.~\ref{app:simulations}.

Under the null, outcomes are generated after the images for both groups undergo random intensity changes, ignoring treatment.
Under the alternative, the treated group images undergo an additional rotation whose angle depends non-linearly on $X$. Thus, the treatment induces a multivariate distributional effect that is not limited to the mean and varies with the covariates.

We compare our proposed SKCD test, using both MMD and Wald-type statistics (referred to as \texttt{SKCD\_MMD} and \texttt{SKCD\_Wald} respectively), against the baseline KCD test \citep{park2021conditional}. We employ Gaussian kernels for both covariate and outcome spaces. All methods use kernel ridge regression for the outcome models and gradient-boosted decision trees for the propensity model. We evaluate robustness across four regimes: (1) \emph{Neither Misspecified};
(2) \emph{Propensity Misspecified}; (3) \emph{Outcome Misspecified}; and (4) \emph{Both Misspecified}. 
Misspecification is achieved by withholding the principal components that drive treatment assignment and effect heterogeneity.

We sample a subset of size $n\in\{250,500,1000,2000\}$ from the simulated data $\{X_i, A_i, Y_i\}_{i=1}^{45k}$ with replacement. The plots in Fig.~\ref{fig:simresults} report the rejection rates at level $\alpha=0.05$ for $1000$ Monte Carlo (MC) replicates of each experimental configuration for all three tests under consideration.  

\begin{figure*}[ht]
    \centering
    \includegraphics[width=0.9\linewidth]{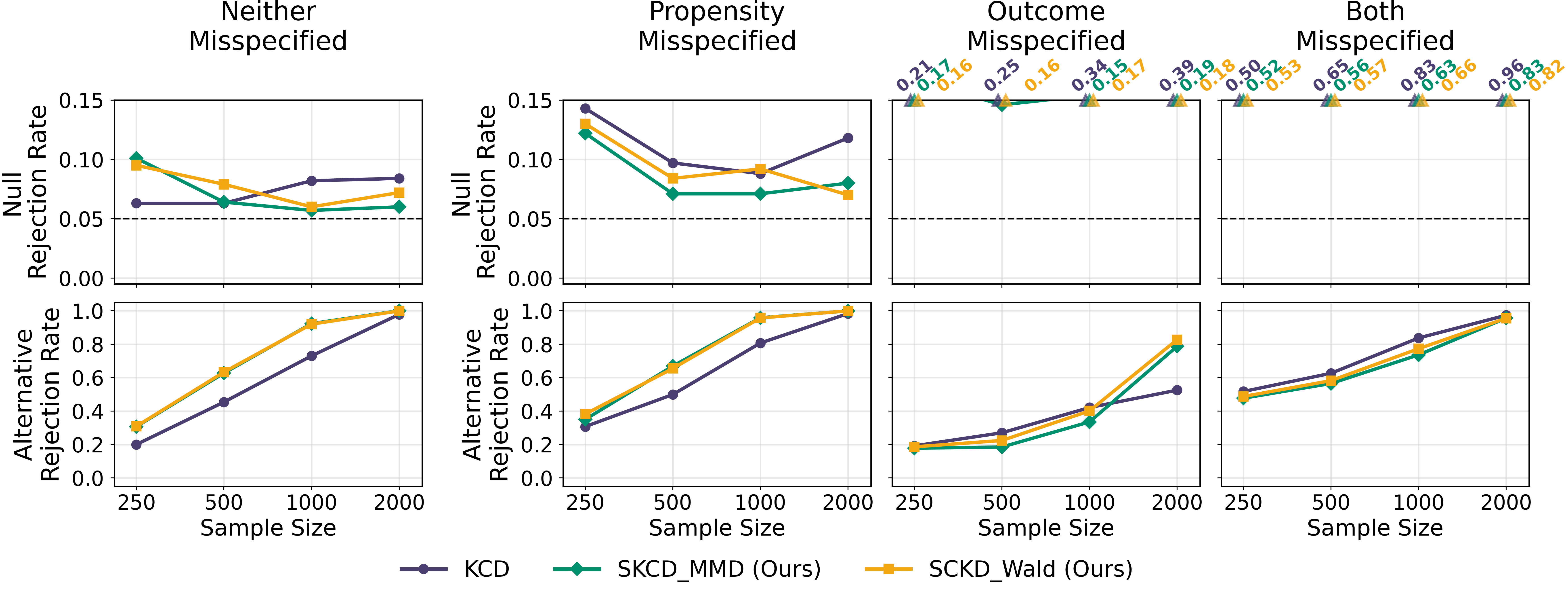}
    \caption{Type 1 error and power at $\alpha=0.05$ across sample sizes and nuisance misspecification regimes. \textbf{(Left)} Scenario satisfying asymptotic guarantees (the product of nuisance estimation errors is $o_p(n^{-1/2})$). \textbf{(Right)} Robustness checks under model misspecification. The proposed tests benefit from double robustness of the estimator; type 1 error is closer than baseline to the nominal level under propensity misspecification; under outcome misspecification, type 1 error is inflated but stable, while power increases with sample size.}
    \label{fig:simresults}
\end{figure*}

In the \emph{Neither Misspecified} regime, the proposed SKCD test variants show slightly inflated type 1 error at smaller sample sizes that---consistent with our theory---approaches nominal as sample size grows. 
\texttt{SKCD\_MMD} achieves type 1 error quite close to nominal even in the \emph{Propensity Misspecified} regime, while the baseline \texttt{KCD} suffers significant inflation. Even in the \emph{Outcome Misspecified} setting, where type 1 error control is challenging, both \texttt{SKCD} variants prove notably more stable than \texttt{KCD}, which diverges sharply as $n$ increases. Under the alternative hypothesis, power increases with sample size across all (valid) configurations; however, our proposed methods consistently outperform \texttt{KCD}. This is most visible in the \emph{Outcome Misspecified} regime, where (though under inflated type 1 error) both \texttt{SKCD\_MMD} and \texttt{SKCD\_Wald} achieve $\sim80\%$ power at $n=2000$ while the baseline plateaus. 
Additional experiments in App.~\ref{app:extra-expts} using known propensity scores show that all methods achieve nominal type I error control when correctly specified, while the observed advantages of both \texttt{SKCD} variants over \texttt{KCD} under outcome misspecification become even more pronounced.

To assess our double robustness guarantee for the estimator $\bar{\psi}_n$~\eqref{eqn:ose}, we analyze its convergence under the null ($\psi_\star = 0$). In App.~\ref{app:extra-expts}, we plot its empirical mean squared error (MSE) in the RKHS norm, i.e., the average of $\|\bar{\psi}_n\|_\Hs^2 = n^{-1}T_n^{\mathrm{MMD}}$ across MC replicates. We observe that the MSE decreases sharply with increase in $n$ if even one nuisance model is correctly specified, consistent with our theory.

\subsection{Real Data: Impact of 401(k) Eligibility on Household Wealth}
\label{subsec:real-data}
We apply our methods to Wave 4 ($9,915$ households) of the 1990 Survey of Income and Program Participation~\citep{chernozhukov2004effects, benjamin2003does, gelber2011401, kallus2023robust} to study the effect of 401(k) eligibility ($A$) on household wealth. All experimental specifications and implementation details are provided in App.~\ref{app:real-data}

Following recent work~\citep{naf2024causal}, we analyze a multivariate outcome $Y \in \R^3$ comprising Net Financial Assets (TFA), Net non-401(k) Assets (NIFA), and Total Wealth (TW). The pre-treatment covariates $X$ comprise four continuous features---age, income, family size, education, and five categorical---defined‑benefit plan, marital status, dual earner, IRA participation, and home ownership.

The proposed SKCD test rejects the global null $H_0: P_{Y(1),X} = P_{Y(0),X}$ at level $\alpha=0.05$. Extending the analysis, we construct 95\% uniform-in-$y$ confidence bands for the SCoDiTE witness function $\psi_\star(x, \cdot\,)$ by adapting the construction from Thm.~\ref{thm:confidence-band} to the RKHS slice $\{ h(x, \cdot\,) : h \in \mathcal{H} \}$. Due to the infeasibility of visualizing the full 3D witness function surface over $\Ys$, we compute 1D cross-sections by varying each wealth component $Y_j$ over its support while fixing the other two at their sample means. This allows us to localize the detectable effect to specific regions of the outcome space for household profiles characterized by $x$.

Fig.~\ref{fig:401k-results} displays these witness function cross-sections for two distinct households that illustrate the effect heterogeneity. Individual 1 (top) is a 58-year-old individual with moderate income (\$30.3k), in a family of size 1, with high education (18 years), possessing an IRA and a defined-benefit plan. Individual 2 (bottom) is a 36-year-old individual with similar income (\$34k) but a large family (size 13), low education (4 years), and no other retirement plans.

For Individual 1, the confidence band along the first wealth measure excludes zero over significant regions. In particular, the estimated witness function for Net Financial Assets exhibits a negative-to-positive swing. This suggests that, holding other assets at their average levels, 401(k) eligibility shifts the distribution of financial assets for this demographic: reducing the density of low asset values and increasing the density of high asset values. For Individual 2, the estimated witness function cross-sections are essentially flat, and the confidence bands contain zero across the entire domain of each wealth measure, providing no evidence of wealth impact from 401(k) eligibility.

\begin{figure}[tb]
  \begin{center}
    \centerline{\includegraphics[width=0.58\columnwidth]{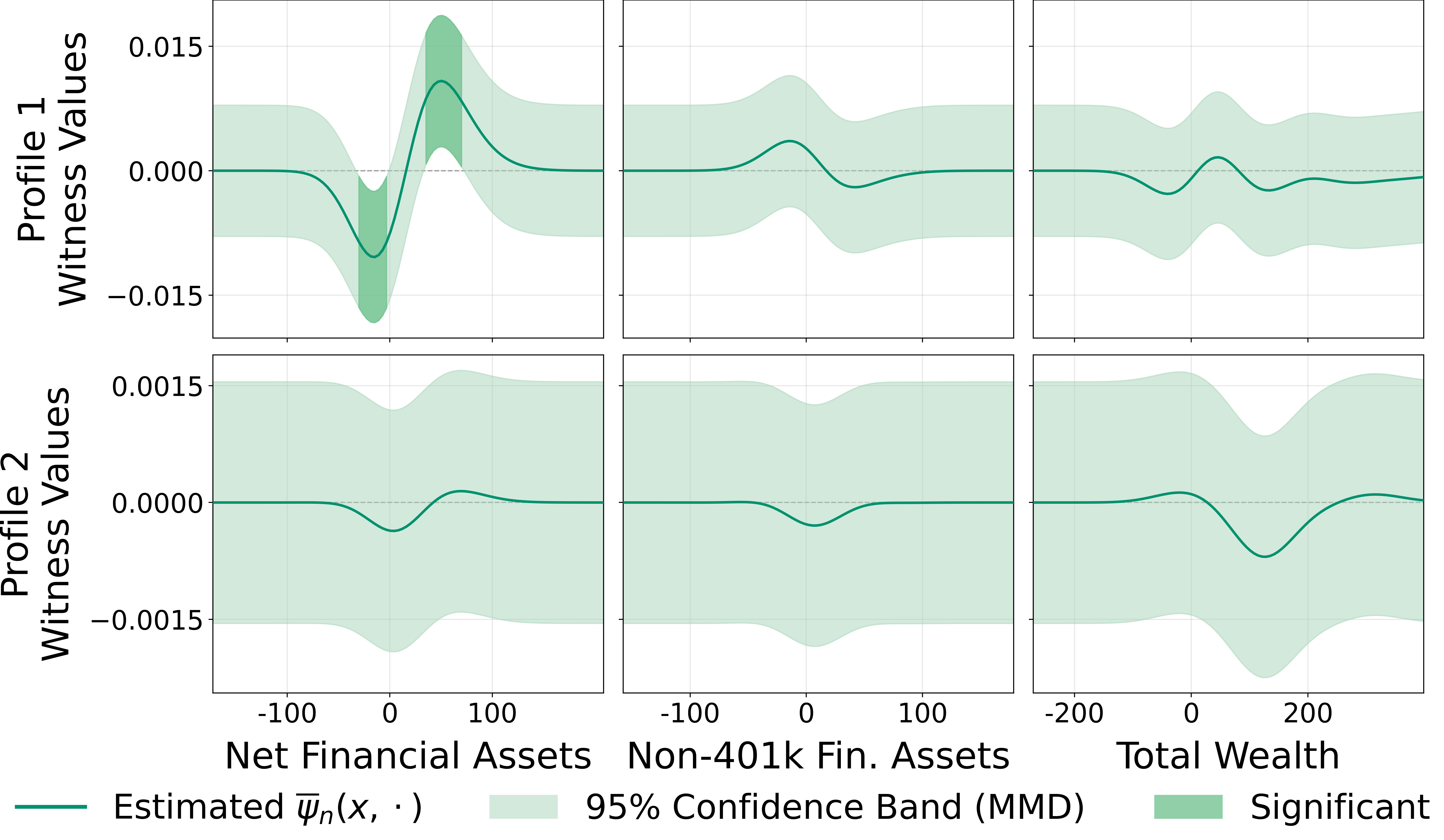}}
    \caption{Estimated witness functions with 95\% uniform confidence bands for two household profiles (rows) across three wealth outcomes (columns; in $1k$). Shaded regions indicate statistical significance. While Profile 1 exhibits a distributional shift along the first axis, Profile 2 shows no detectable effect.}
    \label{fig:401k-results}
  \end{center}
\end{figure}

\section{Discussion}\label{sec:discussion}
We introduce the SCoDiTE framework, bridging kernel mean embeddings and semiparametric efficiency theory to rigorously test for conditional distributional treatment effects. We provide the first doubly robust, asymptotically optimal estimator for this setting, along with a permutation-free test for valid inference, for which we derive MMD and Wald-type test statistics in closed form.
Future work could focus on extending this framework to continuous treatments or instrumental variable settings.
Furthermore, while our Wald-type statistic improves power, data-driven selection of the regularization parameter $\varepsilon$ remains an open problem. Finally, appropriately incorporating kernel approximation methods into our closed-form expressions would allow their application to massive datasets.

\section*{Acknowledgments}
This work was supported by the Patient Centered Outcomes Research Initiative (PCORI, ME-2024C2-39990). The content is solely the responsibility of the authors and does not necessarily represent the official views of the funding agency.

\bibliography{ref}
\bibliographystyle{arxiv2026}

\appendix
\onecolumn

\makeatletter
\def\addcontentsline#1#2#3{%
  \addtocontents{#1}{\protect\contentsline{#2}{#3}{\thepage}{\@currentHref}}}
\makeatother

\renewcommand{\thepart}{}
\renewcommand{\partname}{}
\part{Appendix}

\etocsettocstyle{}{}
\localtableofcontents

\clearpage

\section{Additional Experiments}
\label{app:extra-expts}

\begin{figure}[ht]
    \centering
    \includegraphics[width=0.75\linewidth]{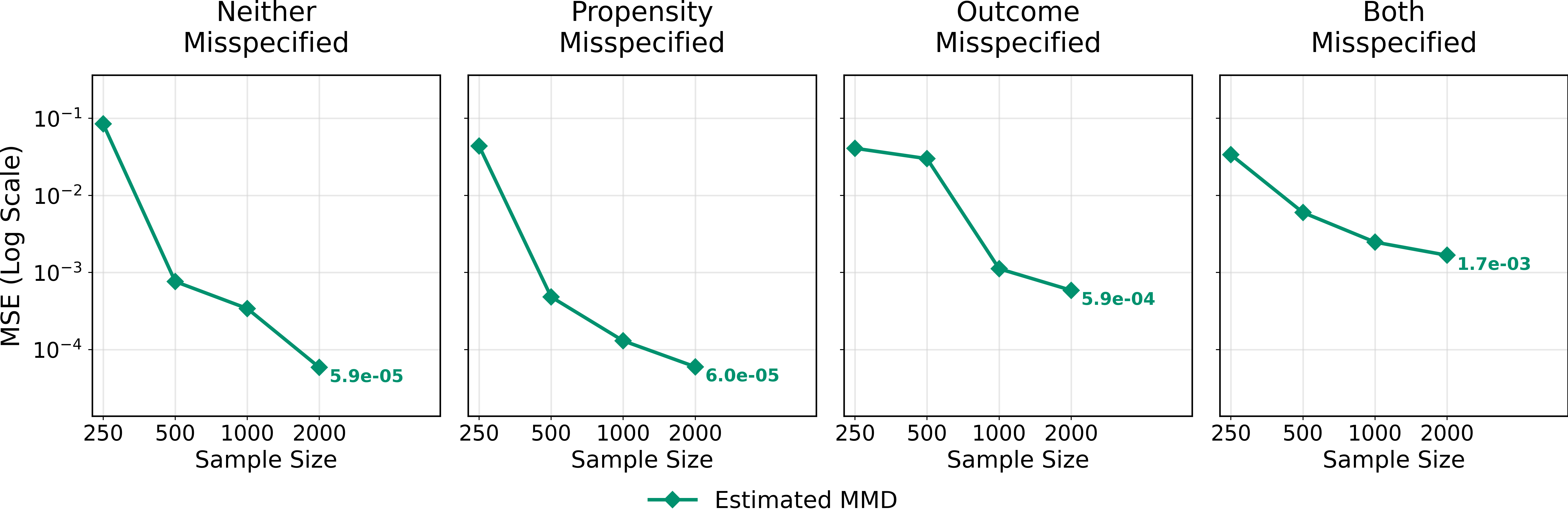}
    \caption{Empirical MSE of the SCoDiTE estimator $\bar{\psi}_n$ under the global null, across sample sizes and model misspecificatiom regimes. (\textbf{Left three panels}) The MSE decays sharply to zero when at least one of the nuisance models is correctly specified. (\textbf{Rightmost panel}) The MSE decays at a much slower rate when both propensity and outcome models are simultaneously misspecified.}
    \label{fig:mse-dr}
\end{figure}

\begin{figure}[ht]
    \centering
    \includegraphics[width=0.5\linewidth]{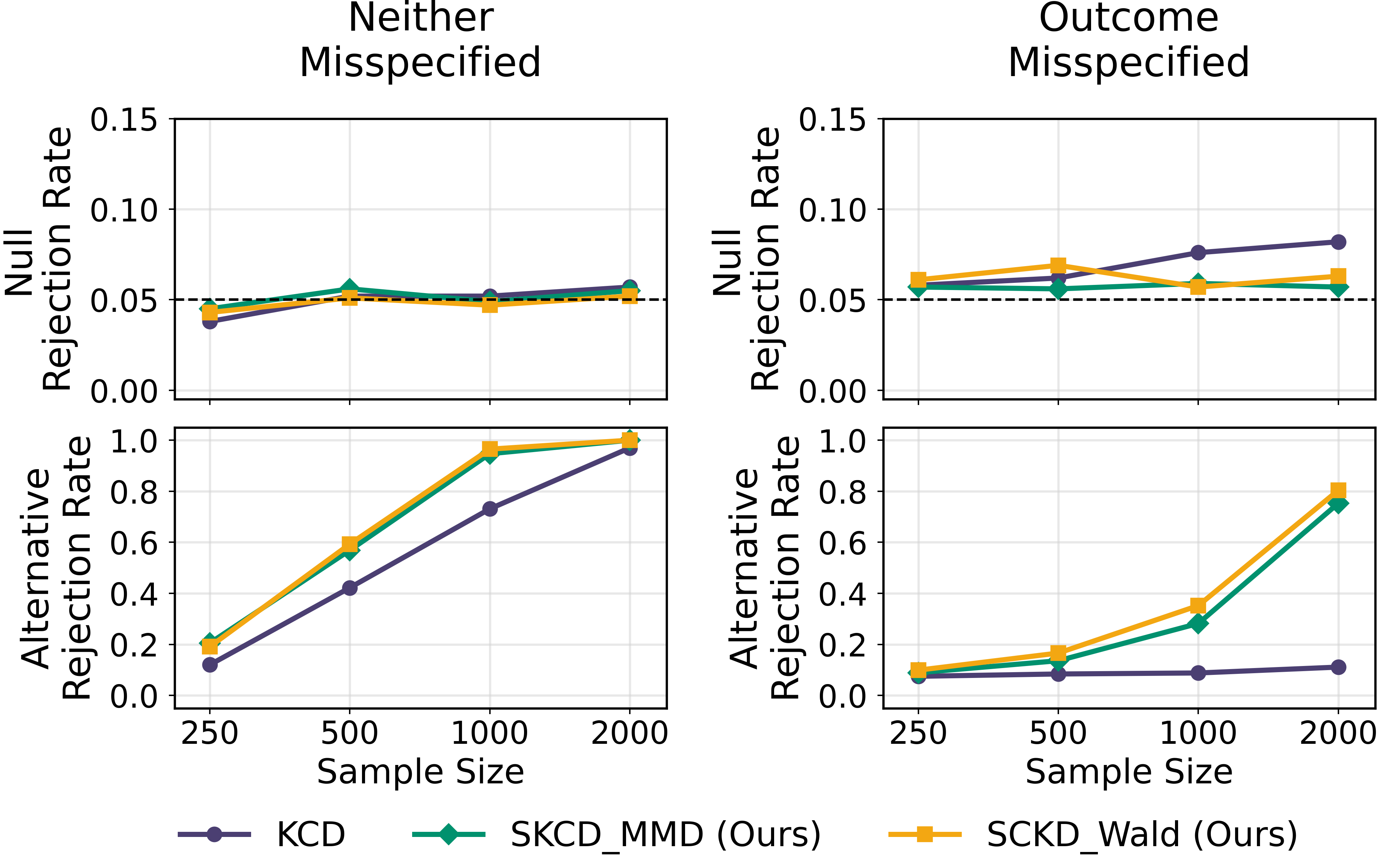}
    \caption{Type 1 error and power at $\alpha=0.05$ across sample sizes  when propensity scores are known. \textbf{(Left)} The outcome model is correctly specified. \textbf{(Right)} The outcome model is misspecified. Since the propensity scores are known, the product of nuisance estimation errors is $o_p(n^{-1/2})$ in both scenarios. Thus, in contrast to the baseline, type 1 error is controlled at the nominal level and power increases with sample size even under outcome misspecification.}
    \label{fig:sim-results-known-prop}
\end{figure}

\section{Extended Related Work}
\label{app:expanded-related-work}
\citet{muandet2021counterfactual} introduced kernel-based marginal DTE estimators. \citet{fawkesdoubly} developed MMD-based doubly robust test statistics for marginal DTEs though they did not provide complete theoretical guarantees such as type 1 error control for inference. \citet{martinez2023efficient} provided a test based on a doubly robust estimator to test marginal DTEs but their estimator incurs a loss in asymptotic efficiency relative to an optimal estimator by a factor of $\sqrt{2}$ due to their sample splitting-based approach. \citet{luedtke2024one} developed a one-step estimator for testing marginal DTEs that avoids this penalty, but did not derive closed-form test statistics or consider conditional DTEs.

\citet{eric2007testing} proposed a kernelized Hotelling's $T^2$ statistic using a plug-in regularized inverse covariance operator for standard two-sample testing. More recently, this framework has been studied for goodness-of-fit testing \citep{balasubramanian2021optimality} and distribution shifts \citep{kubler2022witness, mukherjee2025minimax}. However, these approaches are restricted to non-causal settings: \citet{kubler2022witness} use a two-stage (train/test split) procedure to construct a precision-weighted witness, while \citet{mukherjee2025minimax} use random features to achieve minimax optimality in the standard two-sample problem. To the best of our knowledge, such Wald-type discrepancies have not been extended to the conditional distributional causal setting

\section{Extended Problem Setup}
\label{app:expanded-setup}

Let $\Ps$ be the statistical model, a collection of distributions on a space $\Zs$. We assume $\Zs$ is a Polish space defined as $\Zs \coloneqq \Xs \times \As \times \Ys$, where $\As\coloneqq\{0,1\}$, equipped with its Borel $\sigma$-algebra $\Bs_{\Zs} \equiv \Bs_\Xs \otimes \Bs_{\As} \otimes \Bs_{\Ys}$. We observe an i.i.d.\ sample
\begin{align*}
    \Dsc \coloneqq  \{Z_i\}_{i=1}^n, \quad Z_i \coloneqq  (X_i, A_i, Y_i) \sim P \in \Ps,
\end{align*}
where $X_i\in\Xs$ are covariates, $A_i\in\{0,1\}$ is the treatment, and $Y_i\in\Ys$ is the outcome. For a given $P \in \Ps$, we denote the marginal distribution of $X$ by $P_X$ and the conditional distribution of $Y$ given $(A, X)$ by $P_{Y\mymid A,X}$. We assume $P_{Y\mymid A,X}$ is non-degenerate. We denote the conditional probability mass function of the treatment $A$ given $X=x$ by $g_P(\cdot \mymid x)$, and define the propensity score as $\pi_P(x) \coloneqq g_P(1 \mymid x) = P(A=1 \mymid X=x)$.

We assume the model $\Ps$ is dominated by a $\sigma$-finite measure $\mu$. For each $P \in \Ps$, we let $\Lsq(P)$ denote the usual Hilbert space of $P$-square-integrable real-valued functions on $\Zs$ with inner product $\inpd{h_1}{h_2}_{\Lsq(P)}\coloneqq \int h_1 h_2 dP$. 

We now state a smoothness assumption required for the model to support semiparametric efficiency theory. A submodel $\{P_\epsilon \in \Ps : \epsilon > 0\}$ is called quadratic mean differentiable (QMD) at $P\in\Ps$ if there exists a score function $s \in \Lsq(P)$ such that $\E{P}{s(Z)} = 0$ and
\begin{align}
    \norm{\sqrt{p_\epsilon} - \sqrt{p} - \frac{1}{2}\epsilon s \sqrt{p}}_{\Lsq(\mu)} = o(\epsilon),
\end{align}
where $p_\epsilon \coloneqq dP_\epsilon/d\mu$ and $p \coloneqq dP/d\mu$. The set of all such scores $s$, taken over all possible QMD submodels at $P$, forms the tangent set at $P$. Its closed linear span is the tangent space.

Finally, we assume that $\Ps$ is locally nonparametric. Specifically, for each $P\in\Ps$, that means the tangent space is the entire set of centered square-integrable functions: $\Lsq_0(P) \coloneqq \{h\in\Lsq(P): \E{P}{h(Z)} = 0\}$. Throughout, we assume that the tangent set is equal to the tangent space.

\paragraph{Causal identification assumptions.}
The conditional mean embedding $\mu_{P_{Y(a)\mymid X}}$ is identified with $\nu_{P,a}$ as defined in \eqref{eqn:nu} under the following standard assumptions~\citep{stone1993assumptions, mealli2003assumptions}:
\begin{enumerate}
    \item \textit{Consistency:} $Y = AY(1) + (1-A)Y(0)$.
    \item \textit{Unconfoundedness:} $Y(a) \perp\!\!\!\perp A \mymid X$ for $a\in \{0,1\}$.
    \item \textit{Overlap:} $0 < \pi_P(x) < 1$ for all $x \in \Xs$.
\end{enumerate}
In addition to the identification conditions above, we impose \textit{strong positivity}: the propensity scores $\pi_P$ are $P_X$-a.e. bounded away from $0$ and $1$ uniformly over $P \in \Ps$. Specifically, there exists $\eta > 0$ such that for all $P \in \Ps$, $\eta \leq \pi_P(x) \leq 1-\eta$ for $P_X$-almost all $x$.

\paragraph{RKHS structure.}
We utilize the following RKHSs:
\begin{itemize}
    \item $\Hy$: Associated with a bounded characteristic kernel $\ell: \Ys \times \Ys \to \R$ and feature map $L_y \coloneqq \ell(y, \cdot\,)$.
    \item $\Hx$: Associated with a bounded characteristic kernel $k: \Xs \times \Xs \to \R$ and feature map $K_x \coloneqq k(x, \cdot\,)$.
    \item $\Hs$: The tensor product RKHS $\Hs \coloneqq \Hx \otimes \Hy$~\citep{park2020measure}. This space is associated with the product kernel $\lambda((x,y), (x',y')) \coloneqq k(x,x')\ell(y,y')$ and has the feature map $\Lambda_{x,y} = K_x \otimes L_y$.
\end{itemize}
Since $k$ and $\ell$ are bounded and characteristic, the product kernel $\lambda$ is also bounded and characteristic. We assume throughout that $\Hs$ is real and separable.

\paragraph{Notational remark for proofs.}
We will frequently omit the tensor product notation $\otimes$ to declutter math displays. For instance, we will often write the feature map $\Lambda_{x,y} = K_x \otimes L_y$ as $K_x L_y$. Similarly, the kernel $\lambda((x,y), (x',y'))$ may appear as $k(x,x')\ell(y,y')$. When manipulating terms involving our one-step estimator, we will often rely on the bilinearity of the tensor product to factorize expressions, and will then omit $\otimes$. For example, the term $\frac{a}{\pi_P(x)} (K_x \otimes L_y - K_x \otimes \nu_{P,a}(x))$
may be written more compactly as $\frac{a}{\pi_P(x)} K_x (L_y - \nu_{P,a}(x))$. This should be interpreted strictly as the tensor product of the element $K_x \in \Hx$ with the element $(L_y - \nu_{P,a}(x)) \in \Hy$.

\section{Formulating the Estimand}
In this appendix, we justify the formulation of our estimand, the SCoDiTE, by showing that testing for conditional distributional invariance is equivalent to testing for the equality of joint distributions $P_{Y(a), X}$ for $a\in\{0,1\}$.

\subsection{Proof of Proposition~\ref{prop:equiv-null}}
\label{app:equiv-null}
In the potential outcomes framework, $X$ is \emph{pre}-treatment. This is key for the following proof.

\equivnull*
\begin{proof}
Suppose $H_0$ holds. For all $a\in\{0,1\}$ and Borel-measurable $B\subseteq \Ys$ and $C\subseteq \Xs$, it holds that
\begin{align*}
    P_{Y(a),X}(B\times C) &= \int_C\int_B P_{Y(a)\mymid X}(dy\mymid x) P_X(dx).
\end{align*}

Note that the disintegration theorem guarantees the uniqueness of conditional distributions in every Polish space equipped with its Borel $\sigma$-algebra. Now, if $P_{Y(1)\mymid X}(\cdot\mymid x) = P_{Y(0)\mymid X}(\cdot\mymid x)$ $P_X$-a.e., then direct substitution of the above yields
\begin{align*}
    P_{Y(1),X}(B\times C) &= \int_C\int_B P_{Y(1)\mymid X}(dy\mymid x) P_X(dx)= \int_C\int_B P_{Y(0)\mymid X}(dy\mymid x) P_X(dx)= P_{Y(0), X}(B\times C).
\end{align*}
Since this holds for all measurable Cartesian products $B \times C$, which form a $\pi$-system that generates the product $\sigma$-algebra, we conclude by the $\pi$-$\lambda$ theorem that $P_{Y(1),X} = P_{Y(0),X}$.

We now establish the other direction. Suppose $P_{Y(1),X} = P_{Y(0),X}$. For any Borel-measurable sets $B\subseteq \mathcal{Y}$ and $C\subseteq \mathcal{X}$, $P_{Y(1),X} = P_{Y(0),X}$ and the law of total expectation together yield
\begin{align*}
    0&= P_{Y(1),X}(B\times C) - P_{Y(0), X}(B\times C) =  \int_C\br{P_{Y(1)\mymid X}(B\mymid x) - P_{Y(0)\mymid X}(B\mymid x)}P_X(dx).
\end{align*}
Since this must hold for any measurable set $C \subseteq \mathcal{X}$, we have that $P_{Y(1)\mymid X}(B\mymid x) = P_{Y(0)\mymid X}(B\mymid x)$ $P_X$-almost all $x$. Thus, for any fixed set $B$, there exists a null set $N_B \subseteq \mathcal{X}$ such that:
$$
P_{Y(1)\mymid X}(B\mymid x) = P_{Y(0)\mymid X}(B\mymid x) \text{ for all } x \notin N_B.
$$
Let $\mathcal{G}$ be a countable $\pi$-system that generates the Borel $\sigma$-algebra on $\mathcal{Y}$. Let $N = \bigcup_{B' \in \mathcal{G}} N_{B'}$, and note that as a countable union of null sets, $N$ is also a null set. Now, for any $x \notin N$, it holds that $P_{Y(1)\mymid X}(B' \mymid x) = P_{Y(0)\mymid X}(B' \mymid x)$ for all $B'\in \mathcal{G}$. Thus, we conclude that $P_{Y(1)\mymid X}(\cdot\mymid x) = P_{Y(0)\mymid X}(\cdot\mymid x)$ for $P_X$-almost all $x$ by appealing again to the $\pi$-$\lambda$ theorem.
\end{proof}

\section{Derivation of the EIF}
\label{app:pwd-and-eif}

We use a one-step estimator of SCoDiTE, based on its nonparametric EIF. Here we establish the existence and functional form of that object.

\subsection{Pathwise Differentiability of \texorpdfstring{$\psi_P$}{psiP}}
\label{app:pathwise-diff}

The RKHS-valued SCoDiTE parameter $P\mapsto \psi_P$~\eqref{eqn:witness} will have an EIF if it is pathwise differentiable and a moment condition is satisfied. We begin by establishing pathwise differentaibility.  

Let $\{P_\epsilon : \epsilon \in [0, \delta)\} \subset \Ps$ be a QMD submodel passing through $P \in \Ps$ at $\epsilon=0$ with score function $s \in \Lsq_0(P)$. Let $\Psc(P,\Ps,s)$ be the set of all such submodels. A parameter $\psi: \Ps \to \Hs$ is pathwise differentiable at $P$ relative to the locally nonparametric model $\Ps$ if and only if there exists a continuous linear operator $\dot{\psi}_P: \Lsq_0(P) \to \Hs$ such that for all $s \in \Lsq_0(P)$ and every $\{P_\epsilon\} \in \Psc(P,\Ps,s)$,
\begin{align}
 \norm{\psi_{P_\epsilon} - \psi_P - \epsilon \dot{\psi}_P(s)}_{\Hs} = o(\epsilon).
\end{align}
The operator $\dot{\psi}_P$ is referred to as the local parameter or pathwise derivative of $\psi$ at $P$.

Let $\psi_{P,a}\coloneqq  \EP\E{P}{\Lambda_{X,Y} \mymid A=a, X}$. Then, by linearity of expectation, our estimand $\psi_P$~\eqref{eqn:witness} decomposes as
\begin{equation}
    \label{eqn:Psi-decomp}
    \psi_P = \psi_{P,1} - \psi_{P,0}.
\end{equation}

To establish the pathwise differentiability of $\psi_P$, we can first establish it for $\psi_{P,1}$, appeal to symmetry of the binary treatment, and then use the triangle inequality to conclude the argument. To this end, we leverage an existing result for the counterfactual kernel mean embedding (CKME) of a generic distribution $Q$ on $\mathcal{X}\times\{0,1\}\times \mathcal{W}$; in our subsequent arguments, $Q$ will be the distribution of $(X,A,W:=(X,Y))$ under sampling $(X,A,Y)\sim P$.
\begin{lemma}[Pathwise differentiability of the CKME]\label{lem:alic-CKME}
    Let $\mathcal{Q}$ be a locally nonparametric model comprising distributions on ${\Zs}\coloneqq \mathcal{X} \times \{0,1\} \times \mathcal{W}$ satisfying strong positivity, where $\mathcal{W}$ is a Polish space equipped with a bounded characteristic kernel $k_\mathcal{W}$ and associated RKHS $\mathcal{H}_\mathcal{W}$ with feature map $\Phi_w \coloneqq k_\mathcal{W}(w, \cdot\,)$.
    
    The parameter $\mu_a: \mathcal{Q} \to \mathcal{H}_\mathcal{W}$ defined by $\mu_{a}(Q) \coloneqq \E{Q}{\E{Q}{\Phi_W \mymid A=a, X}}$ is pathwise differentiable at any $Q \in \mathcal{Q}$. Its local parameter at score $s \in \Lsq_0(Q)$ is given by
    \begin{equation}
        \dot{\mu}_{Q,a}(s) \coloneqq \iint \Phi_w \left( s_{W \mymid A, X}(w \mymid a, x) + s_X(x) \right) Q_{W \mymid A, X}(dw \mymid a, x) Q_X(dx),
    \end{equation}
    where $s_{W \mymid A, X}$ and $s_X$ are the conditional and marginal score components defined as $s_{W\mymid A,X}(w|a,x) \coloneqq  s({z}) - \E{Q}{s({Z})\mymid A=a, X=x}$ and $s_X(x)\coloneqq  \E{Q}{s({Z})\mymid X=x}$.
\end{lemma}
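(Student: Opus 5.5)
Since $\mu_a(Q)$ depends on $Q$ only through $Q_X$ and $Q_{W\mid A=a,X}$, I will write $\mu_a(Q)=\iint \Phi_w\, Q_{W\mid A,X}(dw\mid a,x)\,Q_X(dx)$ and control the difference along a QMD submodel $\{Q_\epsilon\}$ with score $s$ by a density-based expansion. Let $q_\epsilon=dQ_\epsilon/d\mu$. QMD gives the factorization into the marginal of $X$, the treatment mechanism, and the conditional of $W$. The marginal and conditional components are themselves QMD with scores $s_X$ and $s_{W\mid A,X}$; this is the standard result that conditional and marginal submodels of a QMD path are QMD, with the scores obtained by conditional expectation. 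The key algebraic fact I will use is
\[
\psi_\epsilon-\psi_0-\epsilon\dot\mu(s)=\int \Phi_w\,\bigl(dQ_{\epsilon}^{(a)}-dQ^{(a)}-\epsilon\, s^{(a)}\,dQ^{(a)}\bigr).
\]
Here $Q^{(a)}(dx,dw):=Q_X(dx)Q_{W\mid A,X}(dw\mid a,x)$, and $s^{(a)}(x,w)=s_X(x)+s_{W\mid A,X}(w\mid a,x)$ is its score. The problem therefore reduces to showing that the path $\epsilon\mapsto Q^{(a)}_\epsilon$ is QMD with score $s^{(a)}$, and that for a bounded feature map the Bochner integral against a QMD path is differentiable in norm.

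First, I will express $Q^{(a)}$ as a reweighting of $Q$. Its density relative to $Q$ is $\mathbb{1}\{A=a\}/g_Q(a\mid X)$, which is bounded by $1/\eta$ by strong positivity. Under $Q_\epsilon$ the weight becomes $\mathbb{1}\{A=a\}/g_{Q_\epsilon}(a\mid X)$. I will show that $g_{Q_\epsilon}(a\mid x)$ is differentiable in $L^2$ with derivative $g_Q(a\mid x)\bigl(E_Q[s\mid A=a,X=x]-s_X(x)\bigr)$, which follows from QMD of the $(X,A)$-marginal. Combining this with QMD of $q_\epsilon$ yields, via the product and quotient rules for QMD paths, that $Q^{(a)}_\epsilon$ is QMD with score
\[
s(z)-\bigl(E[s\mid a,x]-s_X(x)\bigr)=s_{W\mid A,X}+s_X .
\]
The quotient rule needs the denominator bounded away from $0$. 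This holds uniformly in $\epsilon$ because the model imposes strong positivity at every element.

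Second, I will prove the general claim: if $R_\epsilon$ is QMD at $R$ with score $t$ and $\sup_w\|\Phi_w\|\le C$, then $\|\int\Phi\,d(R_\epsilon-R)-\epsilon\int\Phi t\,dR\|=o(\epsilon)$. I will write $r_\epsilon-r-\epsilon tr=(\sqrt{r_\epsilon}-\sqrt r)^2+2\sqrt r(\sqrt{r_\epsilon}-\sqrt r-\tfrac{\epsilon}{2}t\sqrt r)$ and bound the Bochner norm by $C$ times the $L^1(\mu)$ norm of each piece. The first piece is $O(\epsilon^2)$ by QMD. The second is at most $2\|\sqrt r\|_{L^2}\cdot o(\epsilon)$ by Cauchy–Schwarz. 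Continuity and linearity of $\dot\mu_{Q,a}$ on $L^2_0(Q)$ follow from $\|\dot\mu(s)\|\le C\|s^{(a)}\|_{L^1(Q^{(a)})}\le C\eta^{-1/2}\cdots\|s\|_{L^2(Q)}$, using the same reweighting bound.

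I expect the main obstacle to be the first step, namely establishing rigorously that the reweighted path is QMD. The division by $g_{Q_\epsilon}$ and the passage from joint QMD to QMD of conditional components need care. If the quotient argument becomes messy, I will instead bound the remainder directly. I will split $q^{(a)}_\epsilon-q^{(a)}$ into a marginal-$X$ change and a conditional change, and control each with Cauchy–Schwarz and the bound $1/\eta$.
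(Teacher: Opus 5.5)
Your argument is correct, and it takes a genuinely different route from the paper. The paper gives no proof of its own: it points to Appendix B.4.1 of Luedtke and Chung, where the hypotheses of their general sufficient condition for Hilbert-valued pathwise differentiability (their Lemma 2) are checked using kernel boundedness and strong positivity. You instead give a self-contained proof, in three steps.
\begin{itemize}
\item You write $\mu_a(Q)=\int \Phi_w\,dQ^{(a)}$, where $dQ^{(a)}=\mathbf{1}\{A=a\}\,g_Q(a\mid X)^{-1}\,dQ$.
\item You show the reweighted path $Q^{(a)}_\epsilon$ has score $s-s_{A\mid X}=s_X+s_{W\mid A,X}$, where $s_{A\mid X}:=E_Q[s\mid A,X]-s_X$.
\item You convert this into $\mathcal{H}_{\mathcal W}$-norm differentiability through the identity $r_\epsilon-r-\epsilon t r=(\sqrt{r_\epsilon}-\sqrt r)^2+2\sqrt r(\sqrt{r_\epsilon}-\sqrt r-\tfrac{\epsilon}{2}t\sqrt r)$ and the bound on $\Phi$.
\end{itemize}
Continuity then follows from $\|\dot\mu_{Q,a}(s)\|\le C\eta^{-1/2}\|s_X+s_{W\mid A,X}\|_{L^2(Q)}\le C\eta^{-1/2}\|s\|_{L^2(Q)}$, using orthogonality of the score components.

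Your route makes it transparent exactly where boundedness and positivity enter. Because you never use a density of $Q_{W\mid A,X}$, it also covers the degenerate conditional $\delta_x\otimes P_{Y\mid A,X}$ that arises when the paper applies the lemma with $W=(X,Y)$; the paper only asserts that case. The citation, for its part, buys brevity.

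Two details need tightening.
\begin{itemize}
\item Strong positivity for $Q_\epsilon$ holds only $Q_{\epsilon,X}$-a.e. Take the version $g_{Q_\epsilon}(a\mid\cdot)=g_Q(a\mid\cdot)$ off $\{q_{\epsilon,X}>0\}$, so the lower bound $\eta$ holds on both supports.
\item Differentiability of $g_{Q_\epsilon}$ is not a direct quotient computation, since you cannot divide by $q_{\epsilon,X}$. It follows instead from $\sqrt{q_X}\bigl(\sqrt{g_{Q_\epsilon}}-\sqrt{g_Q}\bigr)=(\sqrt{p_\epsilon}-\sqrt{p})-\sqrt{g_{Q_\epsilon}}\bigl(\sqrt{q_{\epsilon,X}}-\sqrt{q_X}\bigr)$, with $p$ the $(X,A)$-density, together with QMD of the $(X,A)$- and $X$-marginals and a dominated-convergence step for the cross term.
\end{itemize}
Finally, your last step only needs $L^1(\mu)$-differentiability of the reweighted density. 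So your fallback of bounding the remainder directly, without proving full QMD of $Q^{(a)}_\epsilon$, already suffices.
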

\begin{proof}
    See Appendix B.4.1 of \citet{luedtke2024one}, specifically the derivation of Eq.~19 and the verification of conditions for Lemma 2 therein. Their proof ultimately relies on the boundedness of the kernel and strong positivity, which are both satisfied here.
\end{proof}

Next, we establish that quadratic mean differentiability (QMD) and pathwise differentiability are preserved when pushed forward through an injective map.

\begin{lemma}[Invariance under injective pushforwards]\label{lem:invariance-reparam}
    Let $(\Zs, \mathcal{B}_\Zs)$ and $(\tilde{\Zs}, \mathcal{B}_{\tilde{\Zs}})$ be Polish spaces. Let $T: \Zs \to \tilde{\Zs}$ be a measurable injection such that $T^{-1}$ is measurable on the range $T(\Zs)$. For a locally nonparametric model $\Ps$ on $\Zs$, define the induced model $\Qs \coloneqq \{P \circ T^{-1} : P \in \Ps\}$ on $\tilde{\Zs}$, noting that each $Q\in\Qs$ is supported on $T(\Zs)$.
    \begin{enumerate}[label=(\roman*)]
        \item If a submodel $\{P_\epsilon\} \subset \Ps$ is QMD at $P$ with score $s \in \Lsq_0(P)$, then the induced submodel $\{Q_\epsilon \coloneqq P_\epsilon \circ T^{-1}\} \subset \Qs$ is QMD at $Q \coloneqq P \circ T^{-1}$ with score $\tilde{s} \coloneqq s \circ T^{-1} \in \Lsq_0(Q)$.
        \item Let $\Hs$ be the action space and let $\tilde{\psi}: \Qs \to \Hs$ be a parameter. Define $\psi: \Ps \to \Hs$ by $\psi(P) \coloneqq \tilde{\psi}(P \circ T^{-1})$. If $\tilde{\psi}$ is pathwise differentiable at $Q$ with local parameter $\dot{\tilde{\psi}}_Q$, then $\psi$ is pathwise differentiable at $P$ with local parameter $\dot{\psi}_P(s) \coloneqq \dot{\tilde{\psi}}_{P\circ T^{-1}}(s \circ T^{-1})$.
    \end{enumerate}
\end{lemma}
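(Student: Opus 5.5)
The plan is to prove (i) by a change of variables on densities, and then derive (ii) from (i) together with the definition of pathwise differentiability.

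\emph{Part (i).} Take the dominating measure on $\tilde{\Zs}$ to be the pushforward $\tilde\mu \coloneqq \mu\circ T^{-1}$. Since $T$ is injective and $T^{-1}$ is measurable on $T(\Zs)$, every $Q = P\circ T^{-1}$ has density $q = p\circ T^{-1}$ with respect to $\tilde\mu$ on $T(\Zs)$, and we set $q=0$ off $T(\Zs)$. To check this, take any measurable $\tilde B$. Then $Q(\tilde B)=P(T^{-1}\tilde B)=\int_{T^{-1}\tilde B} p\,d\mu=\int_{\tilde B} p\circ T^{-1}\, d\tilde\mu$, which holds because $p = (p\circ T^{-1})\circ T$. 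Define $\tilde s \coloneqq s\circ T^{-1}$ on $T(\Zs)$, extended by zero elsewhere. With this choice,
\[
\sqrt{q_\epsilon}-\sqrt{q}-\tfrac12\epsilon\tilde s\sqrt q = \Big(\sqrt{p_\epsilon}-\sqrt p-\tfrac12\epsilon s\sqrt p\Big)\circ T^{-1}.
\]
The same change of variables gives equality of the $\Lsq(\tilde\mu)$ and $\Lsq(\mu)$ norms, so the QMD remainder transfers unchanged and is $o(\epsilon)$. It also gives $\E{Q}{\tilde s}=\E{P}{s}=0$ and $\norm{\tilde s}_{\Lsq(Q)}=\norm{s}_{\Lsq(P)}$, hence $\tilde s\in\Lsq_0(Q)$. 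The only care needed is measurability of $T(\Zs)$ and of $p\circ T^{-1}$. Measurability of $T(\Zs)$ follows from the Lusin--Souslin theorem, since $T$ is an injective Borel map between Polish spaces. Measurability of $p\circ T^{-1}$ follows from the assumed measurability of $T^{-1}$ on the range.

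\emph{Part (ii).} Fix $s\in\Lsq_0(P)$ and a QMD submodel $\{P_\epsilon\}$ with score $s$. By (i), $\{Q_\epsilon\}$ is a QMD submodel of $\Qs$ through $Q$ with score $\tilde s$. Pathwise differentiability of $\tilde\psi$ at $Q$ then gives
\[
\norm{\psi(P_\epsilon)-\psi(P)-\epsilon\,\dot{\tilde\psi}_Q(\tilde s)}_\Hs=\norm{\tilde\psi(Q_\epsilon)-\tilde\psi(Q)-\epsilon\,\dot{\tilde\psi}_Q(\tilde s)}_\Hs=o(\epsilon).
\]
It remains to show that $\dot\psi_P: s\mapsto\dot{\tilde\psi}_Q(s\circ T^{-1})$ is a continuous linear operator on $\Lsq_0(P)$. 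Linearity is immediate. For continuity, the map $s\mapsto s\circ T^{-1}$ is an isometry from $\Lsq_0(P)$ into $\Lsq_0(Q)$, by the norm identity from (i); composing it with the bounded operator $\dot{\tilde\psi}_Q$ gives a bounded operator.

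\emph{Expected obstacle.} The main subtlety is that $\Qs$ need not be locally nonparametric in the full sense: its tangent space consists of functions supported on $T(\Zs)$, while $\dot{\tilde\psi}_Q$ is defined relative to $\Qs$ itself. I will handle this by noting that $\Lsq_0(Q)$ only sees $Q$-a.e.\ equivalence classes, and $Q$ concentrates on $T(\Zs)$. Consequently, every element of $\Lsq_0(Q)$ is of the form $s\circ T^{-1}$ for some $s\in\Lsq_0(P)$, and the isometry above is in fact onto. This surjectivity is what makes the scores of $\Qs$ exactly the images of scores of $\Ps$, and hence makes the two notions of pathwise differentiability match.
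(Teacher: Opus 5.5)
Your proposal is correct and follows essentially the same route as the paper: push $\mu$ forward to $\tilde\mu=\mu\circ T^{-1}$ and identify $q_\epsilon = p_\epsilon\circ T^{-1}$ by change of variables, so that the QMD remainder and the mean and norm of the score all transfer exactly; then (ii) follows by composing with $\dot{\tilde\psi}_Q$, with boundedness coming from the isometry $s\mapsto s\circ T^{-1}$. Your Lusin--Souslin measurability check and your surjectivity remark (which, combined with (i), shows the tangent space of $\mathcal{Q}$ at $Q$ is all of $L^2_0(Q)$) add rigor the paper leaves implicit; the latter is not needed for the lemma as stated, but it is what supports treating $\mathcal{Q}$ as locally nonparametric when the lemma is later combined with the CKME result.
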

\begin{proof}
Let $\mu$ be a $\sigma$-finite measure dominating the model $\mathcal{P}$. Define the pushforward measure on $\tilde{\mathcal{Z}}$ by $\tilde{\mu} \coloneqq \mu \circ T^{-1}$. We claim that $\tilde{\mu}$ dominates $\mathcal{Q}$. To see why, note that for any $Q = P \circ T^{-1} \in \mathcal{Q}$, if $\tilde{\mu}(C) = 0$, then $\mu(T^{-1}(C)) = 0$, which implies $P(T^{-1}(C)) = 0$, and thus $Q(C) = 0$.

\underline{\emph{Statement (i)}}: Let $p_\epsilon = dP_\epsilon / d\mu$ and $q_\epsilon = dQ_\epsilon / d\tilde{\mu}$. We first establish the following pointwise relationship between these densities: $q_\epsilon(T(z)) = p_\epsilon(z)$ for $\mu$-a.e. $z$ and $q_\epsilon(t) = p_\epsilon(T^{-1}(t))$ for $\tilde{\mu}$-a.e. $t$. Indeed, for any measurable set $C \in \mathcal{B}_{\tilde{\mathcal{Z}}}$, the change of variables theorem for integrals yields
\begin{align*}
    \int_{T^{-1}(C)} q_\epsilon(T(z)) d\mu(z) &= \int_C q_\epsilon(t) d\tilde{\mu}(t)= Q_\epsilon(C)= P_\epsilon(T^{-1}(C))= \int_{T^{-1}(C)} p_\epsilon(z) d\mu(z),
\end{align*}    
establishing the desired pointwise relationships.

Now, we examine the quadratic mean differentiability of $Q_\epsilon$ at $Q=P\circ T^{-1}$ using the candidate score $\tilde{s} = s \circ T^{-1}$. Observe that
\begin{align*}
    \norm{\sqrt{q_\epsilon} - \sqrt{q} - \frac{\epsilon}{2} \tilde{s} \sqrt{q}}_{\Lsq(\tilde\mu)}^2 &= 
    \int_{\tilde{\mathcal{Z}}} \left( \sqrt{q_\epsilon(t)} - \sqrt{q(t)} - \frac{\epsilon}{2} \tilde{s}(t) \sqrt{q(t)} \right)^2 d\tilde{\mu}(t).\\
    \intertext{With the change of variables $t=T(z)$, the above display becomes}
    &= \int_{{\mathcal{Z}}} \left( \sqrt{q_\epsilon(T(z))} - \sqrt{q(T(z))} - \frac{\epsilon}{2} \tilde{s}(T(z)) \sqrt{q(T(z))} \right)^2 d\mu(z)\\
    &= \int_{{\mathcal{Z}}} \left( \sqrt{p_\epsilon(z)} - \sqrt{p(z)} - \frac{\epsilon}{2} {s}(z) \sqrt{p(z)} \right)^2 d\mu(z)\\
    &= \norm{\sqrt{p_\epsilon} - \sqrt{p} - \frac{\epsilon}{2} {s} \sqrt{p}}_{\Lsq(\mu)}^2= o(\epsilon^2),
    \intertext{where the last equality holds by the quadratic mean differentiability of $P_\epsilon$ at $P$ with score $s$. This establishes QMD with score $\tilde{s}$ provided $\tilde{s}\in L_0^2(Q)$. This indeed holds since $s \in L^2_0(P)$ yields that }
    \int \tilde{s} dQ &= \int (s \circ T^{-1}) d(P \circ T^{-1}) = \int s  dP = 0,
\end{align*}
and similarly $\|\tilde{s}\|_{L^2(Q)}^2=\|s\|_{L^2(P)}^2$.

\underline{\emph{Statement (ii)}}: Assume $\tilde{\psi}$ is pathwise differentiable at $Q$. Then, by definition, there exists a continuous linear map $\dot{\tilde{\psi}}_Q: \Lsq_0(Q) \to \Hs$ such that for any QMD submodel $\{Q_\epsilon\} \subset \Qs$ with score $\tilde{s}\in\Lsq_0(Q)$, we have:
\begin{align*}
    \norm{\tilde{\psi}(Q_\epsilon) - \tilde{\psi}(Q) - \epsilon \dot{\tilde{\psi}}_Q(\tilde{s})}_\Hs &= o(\epsilon).
\end{align*}

Now, consider an arbitrary submodel $\{P_\epsilon\} \subset \mathcal{P}$ that is QMD at $P$ with score $s \in L^2_0(P)$. From Part (i), the induced submodel $\{Q_\epsilon \coloneqq P_\epsilon \circ T^{-1}\}$ is QMD at $Q$ with score $\tilde{s} = s \circ T^{-1} \in \Lsq_0(Q)$. By definition,
\begin{align*}
    \norm{\tilde{\psi}\p{Q_\epsilon} - \tilde{\psi}(Q) - \epsilon \dot{\tilde \psi}_Q(\tilde s)}_{\Hs} &= \norm{\tilde{\psi}\p{Q_\epsilon} - \tilde{\psi}(Q) - \epsilon \dot{\tilde{\psi}}_Q(s\circ T^{-1})}_{\Hs} = o(\epsilon).
\end{align*}
Recognizing that $\psi(P) = \tilde{\psi}(P\circ T^{-1})$ yields: 
\begin{align*}
    \norm{\psi\p{P_\epsilon} - \psi(P) - \epsilon \dot{\tilde{\psi}}_Q(\tilde s)}_{\Hs} = o(\epsilon).
\end{align*}
Hence, we will have established pathwise differentiability of $\psi$ with local parameter $\psidot_P \coloneqq \eta_P$ for $\eta_P(s):=\dot{\tilde{\psi}}_{P\circ T^{-1}}(s\circ T^{-1})$ provided we can show that $\eta_P$ is bounded and linear.

Linearity follows from the fact that $\eta_P$ is a composition of the linear map $\dot{\tilde{\psi}}_{P\circ T^{-1}}$ and the composition operator. For $\|\cdot\|_{\mathrm{op}}$ the usual operator norm, boundedness follows by the fact that, for any $s$ with $\|s\|_{L^2(P)}\le 1$,
\begin{align*}
    \norm{\eta_P(s)}_{\Hs} &= \norm{\dot{\tilde{\psi}}_Q(s\circ T^{-1})}_{\Hs}\leq \norm{\dot{\tilde{\psi}}_Q}_{\mathrm{op}}\norm{s\circ T^{-1}}_{\Lsq(Q)}\\
    &= \norm{\dot{\tilde{\psi}}_Q}_{\mathrm{op}}\br{\int\p{s\circ T^{-1}}^2 dQ}^{1/2}= \norm{\dot{\tilde{\psi}}_Q}_{\mathrm{op}}\|s\|_{L^2(P)}\le \norm{\dot{\tilde{\psi}}_Q}_{\mathrm{op}},
\end{align*}
where the right-hand side does not depend on $s$ and is finite since $\dot{\tilde{\psi}}_Q$ is the local parameter of $\tilde{\psi}$.
\end{proof}

We now establish the pathwise differentiability of SCoDiTE by identifying it as a linear combination (with respect to $a\in\{0,1\}$) of CKMEs~\eqref{eqn:Psi-decomp} on a reparameterized outcome space.

\begin{restatable}[Pathwise differentiability of the SCoDiTE]{proposition}{pathwisediff}
    \label{prop:pathwise-diff}
    $\psi$ is pathwise differentiable relative to the locally nonparametric model $\Ps$. 
    For an arbitrary score $s\in\Lsq_0(P)$, the local parameter $\psidot_P$ takes the form 
    \begin{equation}
    \label{eqn:local-param-form}
    \begin{split}
        \psidot_P(s)(\cdot) &\coloneqq \iint \Lambda_{x,y}(\cdot)\br{s_{Y\mymid A,X}(y\mymid 1,x) + s_X(x)}P_{Y\mymid A,X}(dy\mymid 1,x)P_X(dx)\\
        &\qquad - \iint \Lambda_{x,y}(\cdot)\br{s_{Y\mymid A,X}(y\mymid 0,x) + s_X(x)}P_{Y\mymid A,X}(dy\mymid 0,x)P_X(dx), 
    \end{split}
    \end{equation}
    where $s_{Y\mymid A,X}(y|a,x) \coloneqq  s(z) - \EP[s(Z)\mymid A=a, X=x]$ and $s_X(x)\coloneqq  \EP[s(Z)\mymid X=x]$.
\end{restatable}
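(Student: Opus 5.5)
The plan is to realize $\psi$ as a difference of two CKMEs on a reparameterized space and then combine Lemma~\ref{lem:alic-CKME} with Lemma~\ref{lem:invariance-reparam}.

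\textbf{Reparameterization.} Set $\mathcal{W}\coloneqq\Xs\times\Ys$, with kernel $k_{\mathcal{W}}\coloneqq\lambda$, RKHS $\mathcal{H}_{\mathcal{W}}=\Hs$ and feature map $\Phi_{(x,y)}=\Lambda_{x,y}$. This kernel is bounded and characteristic, as noted in Sec.~\ref{subsec:setup}. Define $T:\Zs\to\tilde{\Zs}\coloneqq\Xs\times\{0,1\}\times\mathcal{W}$ by $T(x,a,y)=(x,a,(x,y))$. Then $T$ is continuous and injective. On its range, which is the closed set $\{(x,a,(x',y)):x=x'\}$, its inverse is the coordinate projection, so it is measurable. $\tilde{\Zs}$ is Polish as a product of Polish spaces. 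Take $\Qs\coloneqq\{P\circ T^{-1}:P\in\Ps\}$.

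\textbf{Applying Lemma~\ref{lem:alic-CKME}.} I need $\Qs$ to satisfy the lemma's hypotheses. Strong positivity transfers directly, since $Q(A=1\mid X=x)=\pi_P(x)$. The delicate point is local nonparametricness, because each $Q$ is supported on the graph $T(\Zs)$. Two facts should handle this. First, by Lemma~\ref{lem:invariance-reparam}(i), every score $s\in\Lsq_0(P)$ yields a QMD submodel of $\Qs$ with score $s\circ T^{-1}$. Second, $s\mapsto s\circ T^{-1}$ is an isometry from $\Lsq_0(P)$ onto $\Lsq_0(Q)$, since $Q$-a.e.\ functions are determined by their values on $T(\Zs)$. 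Together these give that the tangent space of $\Qs$ at $Q$ is all of $\Lsq_0(Q)$.

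If the cited proof of \citet{luedtke2024one} does not care about degenerate supports (it uses only boundedness and positivity), Lemma~\ref{lem:alic-CKME} then gives pathwise differentiability of $\tilde\mu_a(Q)=\E{Q}{\E{Q}{\Phi_W\mid A=a,X}}$. By construction $\tilde\mu_a(P\circ T^{-1})=\psi_{P,a}$. Lemma~\ref{lem:invariance-reparam}(ii) then gives pathwise differentiability of $\psi_{P,a}$, with local parameter $\dot{\tilde\mu}_{Q,a}(s\circ T^{-1})$.

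\textbf{Identifying the formula.} Next I translate the local parameter back to the original coordinates.
\begin{itemize}
\item Since $W=(X,Y)$ and $X$ is in the conditioning set, $Q_{W\mid A,X}(\cdot\mid a,x)$ is the pushforward of $P_{Y\mid A,X}(\cdot\mid a,x)$ under $y\mapsto(x,y)$.
\item The conditional score satisfies $s_{W\mid A,X}((x,y)\mid a,x)=s(x,a,y)-\E{P}{s(Z)\mid A=a,X=x}=s_{Y\mid A,X}(y\mid a,x)$.
\item The marginal score satisfies $\E{Q}{s\circ T^{-1}\mid X=x}=s_X(x)$.
\end{itemize}
Substituting these into the lemma's expression gives each of the two double integrals in \eqref{eqn:local-param-form}.

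\textbf{Combining the two arms.} The decomposition \eqref{eqn:Psi-decomp} gives $\psi_P=\psi_{P,1}-\psi_{P,0}$. Set $\psidot_P\coloneqq\dot\psi_{P,1}-\dot\psi_{P,0}$, which is bounded and linear as a difference of bounded linear maps. The triangle inequality gives
\[
\|\psi_{P_\epsilon}-\psi_P-\epsilon\psidot_P(s)\|_{\Hs}\le o(\epsilon)+o(\epsilon),
\]
which completes the argument.

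The main obstacle I expect is justifying Lemma~\ref{lem:alic-CKME} on $\Qs$, whose members are degenerate (supported on a graph). The first attempt is the tangent-space isometry argument above. Failing that, I would verify directly that the cited proof of \citet{luedtke2024one} uses only kernel boundedness and strong positivity, never full support or nondegeneracy of $W$ given $X$.
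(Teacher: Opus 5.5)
Your proposal is correct and follows the paper's proof essentially step for step. It uses the same injective reparameterization $(x,a,y)\mapsto(x,a,(x,y))$, applies Lemma~\ref{lem:alic-CKME} to the pushforward model with $\mathcal{W}=\Xs\times\Ys$ and feature map $\Lambda_{x,y}$, transfers back via Lemma~\ref{lem:invariance-reparam}(ii), identifies the scores $s_{Y\mid A,X}$ and $s_X$, and combines the two arms with the triangle inequality. The paper simply asserts that $\Qs$ satisfies the hypotheses of Lemma~\ref{lem:alic-CKME} despite its degenerate support, whereas your isometry argument justifies this explicitly: every element of $\Lsq_0(Q)$ is of the form $s\circ T^{-1}$ with $s\in\Lsq_0(P)$, and is a QMD score by Lemma~\ref{lem:invariance-reparam}(i).
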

\begin{proof}
From the problem setup, we know that $Z = (X,A,Y)$ takes values in the Polish space $(\Zs, \Bs_{\Zs}) \equiv (\Xs\times \As\times \Ys, \Bs_\Xs \otimes \Bs_{\As} \otimes \Bs_{\Ys})$ for each probability measure $P \in \Ps$. Define the reparameterization map as the measurable embedding $g \colon \Xs\times \As\times \Ys \to \Xs\times \As\times (\Xs \times \Ys)$ given by $g(x,a,y) \coloneqq (x,a,(x,y))$.

For each $P\in \Ps$, let the pushforward of $P$ by $g$ be the measure $Q$ on the space $(\Xs\times \As\times (\Xs \times \Ys), \Bs_\Xs \otimes \Bs_{\As} \otimes (\Bs_{\Xs}\otimes\Bs_{\Ys}))$, given by
\begin{align*}
    Q(B) \coloneqq P(g^{-1} (B)) \text{ for all measurable sets } B.
\end{align*}
Let $\Qs \coloneqq \{P\circ g^{-1} : P\in \Ps\}$ be the collection of these pushforward measures. Note that every $Q \in \Qs$ is a singular measure on the product space, supported entirely on the set $\{(x, a, (x', y)) : x = x'\}$. Consider an arbitrary measure $Q\in \Qs$. By the disintegration theorem, $Q$ can be characterized by its conditional and marginal distributions. Crucially, $Q$ is a strict reparameterization of $P$ in the sense that its components satisfy:
\begin{enumerate}
    \item $Q_{X,Y\mymid A, X}(dx', dy \mymid a, x) = \delta_x(dx') \times P_{Y\mymid A, X}(dy \mymid a, x)$ for $P$-almost all $(a,x)$.
    \item $Q_{A\mymid X}(\cdot \mymid x) = P_{A\mymid X}(\cdot \mymid x)$ for $P_X$-almost all $x$.
    \item $Q_X = P_X$.
\end{enumerate}

Now, define a parameter $\tilde{\psi}_1: \Qs \to \Hxy$ such that $\tilde{\psi}_1(Q) \coloneqq \mathbb{E}_Q[\mathbb{E}_Q[\Lambda_{X,Y}\mymid A=1, X]]$, which simplifies as follows:
\begin{align*}
    \tilde{\psi}_1(Q) &= \int\left(\int_{\Xs\times\Ys} \Lambda_{x',y} Q_{X,Y\mymid A,X}(d(x', y)\mymid 1, x)\right)Q_X(dx)\\
    &= \int\left(\int_{\Xs\times \Ys} \Lambda_{x',y} (\delta_x\times P_{Y\mymid A,X})(d(x',y)\mymid 1, x)\right)Q_X(dx)\\
    &= \int\left(\iint \Lambda_{x',y} \delta_x(dx')P_{Y\mymid A,X}(dy\mymid 1, x)\right)Q_X(dx)\\
    &= \iint \Lambda_{x,y} P_{Y\mymid A,X}(dy\mymid 1, x)P_X(dx)= \EP[\theta_{P,1}(X)],
\end{align*}
where $\theta_{P,1}(x) = \EP\br{\Lambda_{x,Y}\mymid A=1, X=x} = K_x \EP\br{L_Y\mymid A=1, X=x} = K_x\nu_{P,1}(x)$, matching its definition in \eqref{eqn:thetalam}. It is evident that $\tilde{\psi}_1(Q) = \tilde{\psi}_1(P\circ g^{-1}) = \psi_{P,1}$ from the decomposition in Eq.~\ref{eqn:Psi-decomp}. 

Although $Q_{W \mymid A, X}(\cdot\mymid a,x)$ is a.e. degenerate (supported only on the slice $\{x\} \times \Ys$), the assumptions on $\Ps$ ensure the induced model $\Qs$ satisfies the conditions of Lemma~\ref{lem:alic-CKME}. Further, we know that $\tilde{\psi}_1$ is precisely the CKME parameter with outcome space $\mathcal{W}\coloneqq \Xs\times \Ys$ and associated RKHS $\Hs_{\mathcal{W}} = \Hx\otimes\Hy$ with feature map $\Phi_{(x,y)} \equiv \Lambda_{x,y}$. Thus, by Lemma~\ref{lem:alic-CKME}, $\tilde{\psi}_1$ is pathwise differentiable at $Q$, with local parameter $\dot{\tilde{\psi}}_{Q,1}(\tilde{s})$ for score $\tilde{s}\in\Lsq_0(Q)$ given by
\begin{align*}
    \dot{\tilde{\psi}}_{Q,1}(\tilde{s}) = \iiint \Lambda_{x,y} \br{\tilde{s}_{X,Y\mymid A,X}(x,y\mymid 1, x) + \tilde{s}_X(x)} Q_{X,Y\mymid A,X}(dx',dy\mymid 1,x) Q_X(dx),
\end{align*}
where $\tilde{s}_{X,Y\mymid A,X}(x,y | a, x) \coloneqq \tilde{s}(\tilde{z}) - \mathbb{E}_Q[\tilde{s}(\tilde{Z})\mymid A=a, X=x]$ and $\tilde{s}_X(x) \coloneqq \mathbb{E}_Q[\tilde{s}(\tilde{Z})\mymid X=x]$.

Consequently, Lemma~\ref{lem:invariance-reparam} yields that $\psi_{P,1}$ is pathwise differentiable at $P = Q\circ g$ with local parameter $\dot{\psi}_{P,1}(s) = \dot{\tilde{\psi}}_{Q,1}(s \circ g^{-1})$. Plugging in $\tilde{s}=s\circ g^{-1}$ and $\tilde{z}=g(z)$ yields that for any score $s\circ g^{-1} \in \Lsq_0(Q)$ and corresponding $s\in\Lsq_0(P)$,
\begin{align*}
    \tilde{s}_{X,Y\mymid A,X}(x,y\mymid a, x) &= \tilde{s}(g(z)) - \int\tilde{s}(\tilde{z}) [P\circ g^{-1}]_{X,Y\mymid A, X}(dx',dy\mymid a,x)\\
    &=  s\circ g^{-1}(g(z)) - \int(\tilde{s}\circ g)(z) P_{Y\mymid A, X}(dy\mymid a,x)\\
    &= s(z) - \int s(z) P_{Y\mymid A,X}(dy\mymid a,x)= s_{Y\mymid A,X}(y\mymid a,x),
\end{align*}
and similarly $\tilde{s}_X(x) = \EP\br{s(Z)\mymid X=x} = s_X(x)$. 
It follows that
\begin{align*}
    \psidot_{P,1}(s) &= \dot{\tilde{\psi}}_{Q,1}(\tilde{s})\\
    &= \iiint \Lambda_{x,y} \br{s_{Y\mymid A,X}(y\mymid 1, x) + s_X(x)} \p{\delta_x(dx')\times P_{Y\mymid A,X}(dy\mymid 1,x)} P_X(dx)\\
    &= \iint K_x L_y \br{s_{Y\mymid A,X}(y\mymid 1, x) + s_X(x)} P_{Y\mymid A,X}(dy\mymid 1,x) P_X(dx).
\end{align*}

An analogous argument holds for $\psi_{P,0}$, showing that, for score $s\in\Lsq_0(P)$,
\begin{align*}
    \psidot_{P,0}(s)\coloneqq\iint K_xL_y\br{s_{Y\mymid A,X}(y\mymid 0,x) + s_{X}(x)} P_{Y\mymid A,X}(dy\mymid 0,x) P_X(dx).
\end{align*}
By the triangle inequality, the fact that $\psi_P=\psi_{P,1}-\psi_{P,0}$ shows that $P\mapsto \psi_P$ is pathwise differentiable with local parameter $\dot{\psi}_P=\dot{\psi}_{P,1}-\dot{\psi}_{P,0}$. Since $Q\in\Qs$ (and thus $P\in\Ps$) was arbitrary, we have that $\psi$ is pathwise differentiable at each $P\in \Ps$.
\end{proof}

\subsection{Proof of Lemma~\ref{lem:eif-form}}
\label{app:eif-derivation}

To derive the form of the EIF of our parameter, we first introduce the efficient influence operator (EIO). Let $\dot{\psi}_P: L^2_0(P) \to \mathcal{H}$ be the local parameter. Note that its image is a closed subspace of $\Hs$, denoted by $\dot{\Hs}_P$ and referred to as the local parameter space. As $\Hs$ is a real separable RKHS in our setting, $\dot{\Hs}_P$ inherits this structure. The efficient influence operator is the adjoint of the local parameter, $\dot{\psi}_P^*: \mathcal{H} \to L^2_0(P)$, i.e. the continuous linear operator uniquely defined by the duality condition:
\begin{align}
    \label{eqn:adjoint-def}
    \inpd{h}{\psidot_P(s)}_\Hs = \inpd{\psidot_P^*(h)}{s}_{\Lsq(P)} \quad \text{for all } h \in \Hs \text{ and } s \in \Lsq_0(P).
\end{align}
Unlike finite-dimensional calculus where gradients are vectors, here the EIF $\phi_P$ is an $\Hs$-valued random variable. As detailed in Theorem 1 of \citet{luedtke2024one}, the EIF can be constructed via the Riesz representation of the EIO applied to the RKHS feature map. 
We use this to prove that our proposed form for the EIF of $\psi_P$ is correct.

\eifform*
\begin{proof}
\underline{\emph{Part 1: deriving the EIO, $\dot{\psi}_P^*$}}: Fix any $P\in\Ps$ and $s\in\Lsq_0(P)$, and let $s_{Y\mymid A,X}(y\mymid a, x)$ and $s_X(x)$ be as defined in~\cref{prop:pathwise-diff}. Recall the definition of $\psi_{P,1}$ from \eqref{eqn:Psi-decomp}, and recall from~\cref{prop:pathwise-diff} that we have the corresponding local parameter as follows:
\begin{align*}
    \psidot_{P,1}(s) 
    &= \underbrace{\iint \Lambda_{x,y}s_{Y\mymid A,X}(y\mymid 1, x) P_{Y\mymid A,X}(dy\mymid 1,x) P_X(dx)}_{{\color{Firebrick3}\RN{1}}} + \underbrace{\iint \Lambda_{x,y} s_X(x) P_{Y\mymid A,X}(dy\mymid 1,x) P_X(dx)}_{{\color{SpringGreen3}\RN{2}}}.
\end{align*}
By the law of total expectation, Term {\color{Firebrick3}\RN{1}} rewrites as
\begin{align*}
    &\iint \Lambda_{x,y}s_{Y\mymid A,X}(y\mymid 1, x) P_{Y\mymid A,X}(dy\mymid 1,x) P_X(dx)\\
    &= \int\frac{1}{g_P(1\mymid x)}\int a \int \Lambda_{x,y}\cl{s(x,a,y) - \EP\br{s(x,a,Y)\mymid A=a, X=x}}P_{Y\mymid A,X}(dy\mymid a,x)g_P(a\mymid x)P_X(dx).
\end{align*}

We distribute the integral and recognize that $P_{Y|A,X}(dy|a,x)g_P(a|x)P_X(dx) = P(dz)$ is the joint distribution, so that
\begin{align*}
{\color{Firebrick3}\RN{1}} &= \int \frac{a}{g_P(1\mymid x)}K_x L_y s(z)P(dz)\\
    &\qquad - \iint\frac{a}{g_P(1\mymid x)} K_x \p{\int L_y P_{Y\mymid A,X}(dy\mymid a,x)}\left(\int s(x,a,y)P_{Y\mymid A,X}(dy\mymid a,x)\right)g_P(a\mymid x)P_X(dx)
    \intertext{Applying the law of total expectation (conditioning on $A,X$) to the second term and recognizing the inner integral as the conditional expectation $\mathbb{E}_P[L_Y|A=a,X=x]$ yields}
    {\color{Firebrick3}\RN{1}} &= \int\frac{a}{\pi_P(x)}K_x L_y s(z)P(dz) - \int \frac{a}{\pi_P(x)} K_x \mathbb{E}_P[L_Y\mymid A=a, X=x] s(z)P(dz)\\
    &= \int\frac{a}{g_P(1\mymid x)}\cl{K_x L_y - K_x\EP\br{L_Y\mymid A=a, X=x}}s(z)P(dz).
\end{align*}

Next, we rewrite Term {\color{SpringGreen3}\RN{2}}  as
\begin{align*}
    &\iint \Lambda_{x,y} s_X(x) P_{Y\mymid A,X}(dy\mymid 1,x) P_X(dx)\\
    &= \iint \Lambda_{x,y} P_{Y\mymid A,X}(dy\mymid 1,x) \EP[s(x,A,Y)\mymid X=x]P_X(dx)\\
    &= \int K_x \p{\int L_y P_{Y\mymid A,X}(dy\mymid 1,x)} \left(\iint s(x,a,y) P_{Y\mymid A,X}(dy\mymid a,x)g_P(a\mymid x)\right) P_X(dx).
    \intertext{The first parenthesis is $\mathbb{E}_P[L_Y|A=1,X=x]$ and the second parenthesis is $\mathbb{E}_P[s(Z)|X=x]$. Recall that $\int s(z)P(dz)=0$ by definition. Thus, applying the law of total expectation (conditioning on $X$) to the above display and subtracting zero from it yields}
    {\color{SpringGreen3}\RN{2}} &= \int K_x\EP[L_Y\mymid A=1,X=x] s(z)P(dz) - \EP\EP[K_X L_Y\mymid A=1,X]\int s(z)P(dz)\\
    &= \int \p{K_x \EP[L_Y\mymid A=1,X=x] - \EP\EP[\Lambda_{X,Y}\mymid A=1,X]}s(z)P(dz),
\end{align*}

Combining terms {\color{Firebrick3}\RN{1}} and {\color{SpringGreen3}\RN{2}} yields
\begin{align*}
    \psidot_{P,1}(s) &= \int\frac{a}{g_P(1\mymid x)}\cl{K_x L_y - K_x\EP\br{L_Y\mymid A=a, X=x}}s(z)P(dz)\\
    &\qquad +  \int \p{K_x \EP[L_Y\mymid A=1,X=x] - \EP\EP[\Lambda_{X,Y}\mymid A=1,X]}s(z)P(dz).
    \intertext{By an analogous argument,}
    \psidot_{P,0}(s) &= \int\frac{1-a}{g_P(0\mymid x)}\cl{K_x L_y - K_x\EP\br{L_Y\mymid A=a, X=x}}s(z)P(dz)\\
    &\qquad +  \int \p{K_x \EP[L_Y\mymid A=0,X=x] - \EP\EP[\Lambda_{X,Y}\mymid A=0,X]}s(z)P(dz).
\end{align*}

Therefore (recalling that $g_P(1\mymid x) = \pi_P(x)$),    
\begin{align*}
    \psidot_P(s) &= \psidot_{P,1}(s) - \psidot_{P,0}(s)\\
    &= \int\frac{a}{\pi_P(x)}\cl{K_x L_y - K_x\EP\br{L_Y\mymid A=a, X=x}}s(z)P(dz)\\
    &\qquad + \int \p{K_x \EP[L_Y\mymid A=1,X=x] - \EP\EP[\Lambda_{X,Y}\mymid A=1,X]}s(z)P(dz)\\
    &\qquad- \int\frac{1-a}{1-\pi_P(x)}\cl{K_x L_y - K_x\EP\br{L_Y\mymid A=a, X=x}}s(z)P(dz)\\
    &\qquad -  \int \p{K_x \EP[L_Y\mymid A=0,X=x] - \EP\EP[\Lambda_{X,Y}\mymid A=0,X]}s(z)P(dz)\\
    &= \int\cl{\frac{a}{\pi_P(x)} -\frac{1-a}{1-\pi_P(x)} }\cl{K_x L_y - K_x\EP\br{L_Y\mymid A=a, X=x}}s(z)P(dz)\\
    &\qquad + \int \left\{K_x \EP[L_Y\mymid A=1,X=x] - K_x \EP[L_Y\mymid A=0,X=x]\right.\\
    &\qquad\qquad \left. - \p{\EP\EP[\Lambda_{X,Y}\mymid A=1,X] - \EP\EP[\Lambda_{X,Y}\mymid A=0,X]}\right\}s(z)P(dz).
\end{align*}
Consequently, for any $s\in\Lsq_0(P)$ and $h\in\Hs$, we have that
\begin{align*}
    \inpd{\psidot_P(s)}{h}_\Hs &= \int\cl{\frac{a}{\pi_P(x)} -\frac{1-a}{1-\pi_P(x)} }\cl{h(x,y) - \EP\br{h(x,Y)\mymid A=a, X=x}}s(z)P(dz)\\
    &\qquad + \int \left\{\EP[h(x,Y)\mymid A=1,X=x] - \EP[h(x,Y)\mymid A=0,X=x]\right.\\
    &\qquad\qquad \left. - \p{\EP\EP[h(X,Y)\mymid A=1,X] - \EP\EP[h(X,Y)\mymid A=0,X]}\right\}s(z)P(dz).
    \intertext{The Hermitian adjoint $\dot{\psi}_P^*$ is identified from the integrand multiplying $s(z)$, and is given by}
    \psidot_P^*(h)(z) &= \cl{\frac{a}{\pi_P(x)} -\frac{1-a}{1-\pi_P(x)} }\cl{h(x,y) - \EP\br{h(x,Y)\mymid A=a, X=x}}\\
    &\qquad + \EP[h(x,Y)\mymid A=1,X=x] - \EP[h(x,Y)\mymid A=0,X=x]\\
    &\qquad - \EP\EP[h(X,Y)\mymid A=1,X] + \EP\EP[h(X,Y)\mymid A=0,X].
\end{align*}
\underline{\emph{Part 2: deriving the EIF, $\phi_P$}}: Now, for each $(\tilde{x},\tilde{y})\in\Xs\times\Ys$, define $\Tilde{\phi}_P:\Zs\to\Hs$ as $\Tilde{\phi}_P(z)(\tilde{x},\tilde{y}):= \psidot^*_P(\Lambda_{\tilde{x},\tilde{y}})(z)$ $P$-a.s. $z$, which takes the form
\begin{align*}
    \Tilde{\phi}_P(z)(\tilde{x},\tilde{y}) &= \cl{\frac{a}{\pi_P(x)} -\frac{1-a}{1-\pi_P(x)} }\cl{\Lambda_{\tilde{x},\tilde{y}}(x,y) - \EP\br{\Lambda_{\tilde{x},\tilde{y}}(x,Y)\mymid A=a, X=x}}\\
    &\qquad + \EP[\Lambda_{\tilde{x},\tilde{y}}(x,Y)\mymid A=1,X=x] - \EP[\Lambda_{\tilde{x},\tilde{y}}(x,Y)\mymid A=0,X=x]\\
    &\qquad - \EP\EP[\Lambda_{\tilde{x},\tilde{y}}(X,Y)\mymid A=1,X] + \EP\EP[\Lambda_{\tilde{x},\tilde{y}}(X,Y)\mymid A=0,X]\\
    &= \cl{\frac{a}{\pi_P(x)} -\frac{1-a}{1-\pi_P(x)} }\cl{\Lambda_{x,y}(\tilde{x},\tilde{y}) - \EP\br{\Lambda_{x,Y}(\tilde{x},\tilde{y})\mymid A=a, X=x}}\\
    &\qquad + \EP[\Lambda_{x,Y}(\tilde{x},\tilde{y})\mymid A=1,X=x] - \EP[\Lambda_{x,Y}(\tilde{x},\tilde{y})\mymid A=0,X=x]\\
    &\qquad - \p{\EP\EP[\Lambda_{X,Y}(\tilde{x},\tilde{y})\mymid A=1,X] - \EP\EP[\Lambda_{X,Y}(\tilde{x},\tilde{y})\mymid A=0,X]},
\end{align*}
where the second equality holds by the symmetry of kernel functions $k$ and $\ell$. We then have by the definitions of $\thetalam_{P,a}$~\eqref{eqn:thetalam} and $\psi(P)$~\eqref{eqn:witness} respectively that
\begin{align*}
    \Tilde{\phi}_P(z) &= \cl{\frac{a}{\pi_P(x)} -\frac{1-a}{1-\pi_P(x)} }\cl{\Lambda_{x,y} - \thetalam_{P,a}(x)} + \thetalam_{P,1}(x) - \thetalam_{P,0}(x) - \psi(P).
\end{align*}
It follows that
\begin{align*}
    \norm{\Tilde{\phi}_P}_{\Lsq(P;\Hs)}^2 &= \EP\norm{\cl{\frac{A}{\pi_P(X)} -\frac{1-A}{1-\pi_P(X)} }\cl{\Lambda_{X,Y} - \thetalam_{P,A}(X)} + \thetalam_{P,1}(X) - \thetalam_{P,0}(X) - \psi(P)}^2_{\Hs}\\
    &= \EP\norm{\cl{\frac{A}{\pi_P(X)} -\frac{1-A}{1-\pi_P(X)} }\cl{\Lambda_{X,Y} - \thetalam_{P,A}(X)}}^2_{\Hs} \\
    &\qquad + \EP\norm{\thetalam_{P,1}(X) - \thetalam_{P,0}(X) - \psi(P)}^2_{\Hs}\\
    &\qquad - \EP \br{\frac{A}{\pi_P(X)} -\frac{1-A}{1-\pi_P(X)} \inpd{\EP\br{\Lambda_{X,Y}\mymid A,X}  - \thetalam_{P,A}(X)}{\thetalam_{P,1}(X) - \thetalam_{P,0}(X) - \psi(P)}_{\Hs}}
    \intertext{via the law of total expectation (conditioning on $A,X$) applied to the cross term. Further, by~\eqref{eqn:thetalam}, we have $\EP[\Lambda_{X,Y}\mymid A,X] - \thetalam_{P,A}(X) = 0$, so the cross-term vanishes. The display simplifies to}
    &= \EP\norm{\cl{\frac{A}{\pi_P(X)} -\frac{1-A}{1-\pi_P(X)} }\cl{\Lambda_{X,Y} - \thetalam_{P,A}(X)}}^2_{\Hs} + \EP\norm{\thetalam_{P,1}(X) - \thetalam_{P,0}(X) - \psi(P)}^2_{\Hs},
    \intertext{where, using the non-negativity of the second term, we can lower bound the expression by}
    &\geq \EP\br{\abs{\frac{A}{\pi_P(X)} -\frac{1-A}{1-\pi_P(X)}}^2\norm{\Lambda_{X,Y} - \thetalam_{P,A}(X)}^2_{\Hs}}.
    \intertext{Applying the law of total expectation (conditioning on $A,X$) again, and noting that $A^2=A$, $(1-A)^2=(1-A)$, and $A(1-A)=0$, yields}
    &= \EP\br{\p{\frac{A}{\pi_P(X)^2} + \frac{1-A}{\p{1-\pi_P(X)}^2} } \EP\br{\norm{\Lambda_{X,Y} - \thetalam_{P,A}(X)}^2_{\Hs} \mymid A,X}},
    \intertext{which, upon using~\eqref{eqn:thetalam} followed by the law of total expectation (conditioning on $X$), simplifies to}
    &= \EP\br{ \EP\br{ \p{\frac{A}{\pi_P(X)^2} + \frac{1-A}{\p{1-\pi_P(X)}^2} } \textit{Var}_P\p{\Lambda_{X,Y} \mymid A,X} \,\bigg\vert\; X} }\\
    & >0,
\end{align*}
where the strict inequality holds because the term in the parentheses is strictly positive by strong positivity, and the conditional variance is strictly positive since $P_{Y \mid A,X}$ is non-degenerate and the kernel $\ell$ is characteristic.

Next, the boundedness of $k$ and $\ell$ as well as strong positivity together imply that $\norm{\Tilde{\phi}_P}_{\Lsq(P;\Hs)} < \infty$, i.e., that $\Tilde{\phi}_P$ is $P$-Bochner square integrable. Now, Proposition~\ref{prop:pathwise-diff} and the fact that $\dot{\Hs}_P$ inherits the RKHS structure from $\Hs$ in our setting, together satisfy the conditions of Theorem 1 in~\citet{luedtke2024one},
which yields that $\psi$ has an EIF $\phi_P$ at $P$, and that $\phi_P=\Tilde{\phi}_P$ $P$-almost surely. Finally, since $P\in\Ps$ was arbitrary, we have the desired result.
\end{proof}

With the explicit form of the EIF established, the following lemma verifies that it respects the additive structure of the parameter.

\begin{lemma}[Decomposition of the EIF]
\label{lem:eif-decomp}
For any $P\in\Ps$, let $\phi_{P,1}$ and $\phi_{P,0}$ be defined as
\begin{equation}
    \begin{split}\label{eqn:phi-decomp}
        \phi_{P,1}(Z) &\coloneqq \frac{A}{\pi_P(X)}\p{\Lambda_{X,Y} - \thetalam_{P,1}(X)} + \thetalam_{P,1}(X) - \psi_{P,1},\\
        \phi_{P,0}(Z) &\coloneqq \frac{1-A}{1-\pi_P(X)}\p{\Lambda_{X,Y} - \thetalam_{P,0}(X)} + \thetalam_{P,0}(X) - \psi_{P,0}.
    \end{split}
\end{equation}
Then, $\phi_{P,1}$ and $\phi_{P,0}$ are the EIFs of $\psi_{P,1}$ and $\psi_{P,0}$ from the decomposition in \eqref{eqn:Psi-decomp} and the EIF $\phi_P$ derived in~\cref{lem:eif-form} satisfies the linear decomposition $\phi_P(Z) = \phi_{P,1}(Z) - \phi_{P,0}(Z)$ $P$-a.s.
\end{lemma}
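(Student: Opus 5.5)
The plan is to reuse the two-part structure of the proof of \cref{lem:eif-form}, once for each arm $a\in\{0,1\}$, and then obtain the decomposition by direct algebra.

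For $\psi_{P,1}$, pathwise differentiability and its local parameter $\dot\psi_{P,1}$ were already established in the proof of \cref{prop:pathwise-diff}. This came from \cref{lem:alic-CKME} applied through the pushforward $g(x,a,y)=(x,a,(x,y))$, together with \cref{lem:invariance-reparam}.

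\begin{enumerate}
\item Derive the efficient influence operator $\dot\psi_{P,1}^*$. In Part 1 of the proof of \cref{lem:eif-form}, terms I and II were already rewritten as integrals of the form $\int(\cdot)\,s(z)\,P(dz)$. Reading off the integrand gives
\[
\dot\psi_{P,1}^*(h)(z)=\frac{a}{\pi_P(x)}\bigl\{h(x,y)-\EP[h(x,Y)\mymid A=1,X=x]\bigr\}+\EP[h(x,Y)\mymid A=1,X=x]-\EP\EP[h(X,Y)\mymid A=1,X].
\]
In the first term, conditioning on $A=a$ versus $A=1$ makes no difference, because the factor $a$ kills the term unless $a=1$.
\item Evaluate at $h=\Lambda_{\tilde x,\tilde y}$ and use the reproducing property and symmetry of $k$ and $\ell$. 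This yields the candidate $\phi_{P,1}(z)$ in \eqref{eqn:phi-decomp}.
\item Check Bochner square integrability. This follows from boundedness of $\lambda$ and strong positivity, since $\|\phi_{P,1}(z)\|_\Hs\le (1/\eta+2)\cdot 2\sup\lambda^{1/2}$.
\item Invoke Theorem 1 of \citet{luedtke2024one}, exactly as at the end of the proof of \cref{lem:eif-form}, to conclude that $\phi_{P,1}$ is the EIF of $\psi_{P,1}$.
\item Repeat symmetrically for $a=0$, with $\pi_P$ replaced by $1-\pi_P$ and $A$ replaced by $1-A$, to obtain $\phi_{P,0}$.
\end{enumerate}

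For the decomposition, subtract the two expressions in \eqref{eqn:phi-decomp}. The residual terms combine as $\thetalam_{P,1}(x)-\thetalam_{P,0}(x)-(\psi_{P,1}-\psi_{P,0})$, which equals $\thetalam_{P,1}(x)-\thetalam_{P,0}(x)-\psi_P$ by \eqref{eqn:Psi-decomp}. For the weighted terms, note that $a\,\thetalam_{P,1}(x)=a\,\thetalam_{P,a}(x)$ and $(1-a)\,\thetalam_{P,0}(x)=(1-a)\,\thetalam_{P,a}(x)$ for $a\in\{0,1\}$. They therefore combine as
\[
\Bigl(\tfrac{a}{\pi_P(x)}-\tfrac{1-a}{1-\pi_P(x)}\Bigr)\bigl(\Lambda_{x,y}-\thetalam_{P,a}(x)\bigr).
\]
Together these give exactly the expression for $\phi_P$ in \cref{lem:eif-form}. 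Because EIFs are unique $P$-a.s., the identity $\phi_P=\phi_{P,1}-\phi_{P,0}$ holds $P$-a.s.

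An alternative, more abstract route uses linearity of adjoints: $\dot\psi_P^*=\dot\psi_{P,1}^*-\dot\psi_{P,0}^*$, and the Riesz construction is linear in the operator.

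The main obstacle is the EIF existence step for each arm separately. Theorem 1 of \citet{luedtke2024one} requires pathwise differentiability, which holds for each arm, and Bochner square integrability of the candidate. The paper previously also checked strict positivity of the second moment for $\phi_P$, and I would confirm whether Theorem 1 actually needs it. If it does, the same lower bound argument applies arm-wise: $\EP\|\phi_{P,1}\|^2\ge\EP[A\pi_P(X)^{-2}\,\mathrm{Var}_P(\Lambda_{X,Y}\mymid A,X)]>0$, by non-degeneracy of $P_{Y\mid A,X}$.
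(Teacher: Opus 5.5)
Your proposal is correct and follows essentially the route the paper has in mind; the paper omits this proof as nearly identical to that of \cref{lem:eif-form}. You read off the arm-wise adjoints from the integral representations of $\dot\psi_{P,1}(s)$ and $\dot\psi_{P,0}(s)$ already obtained in Part~1 of that proof, Riesz-represent them through $\Lambda_{\tilde x,\tilde y}$, verify Bochner square integrability, and apply Theorem~1 of \citet{luedtke2024one}; the decomposition then follows from your algebra using $a\,\thetalam_{P,1}=a\,\thetalam_{P,a}$ and $(1-a)\,\thetalam_{P,0}=(1-a)\,\thetalam_{P,a}$. The positivity check you flag is not needed here: in the proof of \cref{lem:eif-form} the paper invokes Theorem~1 using only pathwise differentiability and square integrability, and proves $\int\norm{\phi_P}^2_{\Hs}\,dP>0$ solely for its separate ``moreover'' claim.
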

The proof is nearly identical to that of \Cref{lem:eif-form} and so is omitted.

\section{Weak Convergence and Efficiency of \texorpdfstring{$\psibar$}{psibar}}

This appendix establishes the asymptotic properties of the proposed estimator $\psibar$, whose estimation error decomposes into a leading EIF term, a remainder term, and a drift term. The analysis proceeds in three steps. First, we prove results establishing the conditions for convergence of the remainder and drift terms. Second, we show that the remainder and drift terms vanish sufficiently fast for our estimator to converge to a tight Gaussian Hilbert-element $\mathbb{H}$. 
Third, we prove that $\mathbb{H}$ is the optimal limit distribution in the local asymptotic minimax sense. 

We begin by introducing some notation and additional definitions required for the analysis. We define the space $\Lsq(P; \Hs)$ as the Hilbert space of all $P$-Bochner measurable functions $f: \Zs \to \Hs$ such that
\begin{align*}
    \norm{f}_{\Lsq(P;\Hs)} \coloneqq \p{\int \norm{f(z)}^2_{\Hs} P(dz)}^{1/2} < \infty.
\end{align*}
We use the empirical process notation where $Qf \coloneqq \E{Q}{f(Z)} = \int f(z) Q(dz)$ and $Q_nf\coloneqq \E{Q_n}{f(Z)} = \frac{1}{n}\sum_{i=1}^n f(Z_i)$. For brevity, when $P_\star$ appears in a subscript, we replace it by $\star$---e.g., we write $f_\star$ rather than $f_{P_\star}$. Similarly, we write $f_n^r$ instead of $f_{\widehat{P}_n^r}$ and $f_n$ instead of $f_{\widehat{P}_n}$.

\subsection{Supporting Technical Results}

Recall the cross-fitted one-step estimator $\psibar$ defined in~Eq.~\ref{eqn:ose}. Using empirical process notation, it rewrites as
\begin{equation}\label{eqn:emp-proc-ose}
    \psibar = \frac{1}{2}\sum_{r=1}^2 \p{ \psi_n^r + P_n^s\phi_n^r }.
\end{equation}

We restate the remainder and drift terms for each split $r \in \{1,2\}$ and $s=3-r$:
\begin{equation}\label{eqn:emp-proc-rem-drift}
    \begin{split}
        \Rs_n^r &\coloneqq \psi_n^r + P_\star\phi_n^r - \psi_\star,\qquad  \Ds_n^r \coloneqq (P_n^s - P_\star)(\phi_n^r - \phi_\star).
    \end{split}
\end{equation}
Adding and subtracting terms shows that the one-step estimator satisfies the decomposition
\begin{align*}
    \psibar - \psistar = \frac{1}{n}\sum_{i=1}^n \phi_\star(Z_i) + \frac{1}{2}\sum_{r=1}^2 \p{ \Rs_n^r + \Ds_n^r }. 
\end{align*}

Thus, to establish asymptotic linearity, it suffices to show that for each split $r$, $\norm{\Rs_n^r}_{\Hs} = o_p(n^{-1/2})$ and $\norm{\Ds_n^r}_{\Hs} = o_p(n^{-1/2})$. The following lemma provides a sufficient condition on the EIF estimator for the drift term to vanish at this rate.

\begin{lemma}[Lemma 3 in \citealp{luedtke2024one}] 
    \label{lem:alic-suff-for-drift}
    Suppose $\psi$ is pathwise differentiable at $P_\star$ with EIF $\phi_\star\in \Lsq(P_\star;\Hs)$. For each data split $r \in \{1, 2\}$,
    \begin{align*}
        \norm{\phi_n^r - \phi_\star}_{\Lsq(P_\star;\Hs)} = o_p(1) \implies \norm{\Ds^r_n}_{\Hs} = o_p(n^{-1/2}).
    \end{align*}
\end{lemma}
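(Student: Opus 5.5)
The plan is to condition on the training split $\Dsc_r$ and use the independence of the nuisance fit from the evaluation sample $\Dsc_s$. Fix $r$ and write $f_n \coloneqq \phi_n^r - \phi_\star$. Given $\Dsc_r$, the function $f_n$ is a fixed element of $\Lsq(P_\star;\Hs)$. The points $\{Z_i\}_{i\in\Is_s}$ are i.i.d.\ draws from $P_\star$ and independent of $f_n$. Hence $\Ds_n^r = (P_n^s - P_\star) f_n$ is, conditionally, a centered average of $n_s$ i.i.d.\ $\Hs$-valued random elements $f_n(Z_i) - P_\star f_n$.

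The first step is to compute the conditional second moment. Because $\Hs$ is a separable Hilbert space, cross terms vanish for independent centered Bochner-integrable elements, giving
\begin{align*}
\E{}{\norm{\Ds_n^r}_\Hs^2 \mymid \Dsc_r} = \frac{1}{n_s}\,\E{\star}{\norm{f_n(Z) - P_\star f_n}_\Hs^2} \le \frac{1}{n_s}\norm{f_n}^2_{\Lsq(P_\star;\Hs)}.
\end{align*}
The inequality holds because centering does not increase the second moment. The Bochner mean $P_\star f_n$ is well defined since $f_n \in \Lsq \subset L^1$. This uses the boundedness of the kernels and strong positivity, applied to both $\phi_n^r$ (with $\Pnh^r\in\Ps$) and $\phi_\star$.

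The second step is a conditional Chebyshev bound. For any $\delta>0$,
\begin{align*}
P\p{\sqrt{n}\norm{\Ds_n^r}_\Hs > \delta \mymid \Dsc_r} \le \min\cl{1, \frac{n}{n_s\delta^2}\norm{f_n}^2_{\Lsq(P_\star;\Hs)}}.
\end{align*}
The ratio $n/n_s$ is bounded, since the splits have proportional sizes. By hypothesis $\norm{f_n}_{\Lsq(P_\star;\Hs)} = o_p(1)$, so the right-hand side converges to zero in probability. It is also bounded by one, so dominated convergence lets us take the unconditional expectation and conclude $P(\sqrt n\norm{\Ds_n^r}_\Hs>\delta)\to 0$, that is, $\norm{\Ds_n^r}_\Hs = o_p(n^{-1/2})$.

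The only delicate point is measure-theoretic. We need $f_n(Z)$ to be a Bochner-measurable $\Hs$-valued random element jointly in $(\Dsc_r, Z)$, so that the conditional expectation and the Hilbert-space variance identity are legitimate. Separability of $\Hs$ resolves this via Pettis's theorem. After that the argument is routine.
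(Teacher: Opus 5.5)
Your proposal is correct and takes essentially the same route as the source. The paper gives no proof of its own and imports the result from Lemma 3 of Luedtke and Chung (2024). That argument likewise conditions on the training fold, bounds the conditional second moment of the drift term by $n_s^{-1}\|\phi_n^r-\phi_\star\|^2_{L^2(P_\star;\mathcal{H})}$, and passes to unconditional $o_p(n^{-1/2})$ via a conditional Chebyshev inequality and bounded convergence. The bounded ratio $n/n_s$ that you flag is implicit in the paper's equal-size two-fold cross-fitting.
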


Next we establish that consistency of the nuisance estimators is sufficient for consistency of the EIF estimator.

\begin{lemma}
    \label{lem:suff-neg-drift}
    Let $\Pnh\in\Ps$ be an initial estimate of the data-generating distribution $P_\star$ that is independent of the empirical measure $P_n$. 
    If the following conditions are also satisfied:
    \begin{enumerate}[label = (\roman*)]
        \item $\norm{\pi_n - \pi_\star}_{\Lsq(P_{\star, X})} = o_p(1)$, and 
        \item $\norm{\theta_{n, a} - \theta_{\star, a}}_{\Lsq(P_{\star, X};\Hs)} = o_p(1)$ for each $a\in\{0,1\}$,
    \end{enumerate}
    then $\norm{\phi_n-\phi_\star}_{\Lsq(P_\star; \Hs)} = o_p(1)$.
\end{lemma}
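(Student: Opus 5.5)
We will bound $\norm{\phi_n-\phi_\star}_{\Lsq(P_\star;\Hs)}$ directly, using the explicit form of the EIF from Lem.~\ref{lem:eif-form}. Throughout, write $\omega_P(x,a) \coloneqq a/\pi_P(x) - (1-a)/(1-\pi_P(x))$. The difference then decomposes as
\begin{align*}
\phi_n(z)-\phi_\star(z) &= \big(\omega_n(x,a)-\omega_\star(x,a)\big)\big(\Lambda_{x,y}-\theta_{\star,a}(x)\big) - \omega_n(x,a)\big(\theta_{n,a}(x)-\theta_{\star,a}(x)\big)\\
&\qquad + \big(\theta_{n,1}-\theta_{\star,1}\big)(x) - \big(\theta_{n,0}-\theta_{\star,0}\big)(x) - (\psi_n-\psi_\star).
\end{align*}
We apply the triangle inequality in $\Lsq(P_\star;\Hs)$ and control each of the four pieces separately. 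Because $\Pnh$ is independent of the data, all of these norms are conditional on $\Pnh$, and $o_p(1)$ statements transfer directly.

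\textbf{Bounded constants.} The kernels are bounded, say $\sup\norm{\Lambda_{x,y}}_\Hs\le\kappa$, and this bound carries over to conditional means, giving $\norm{\theta_{P,a}(x)}_\Hs\le\kappa$. Strong positivity holds uniformly over $\Ps$, and $\Pnh\in\Ps$, so $|\omega_n|\le 1/\eta$. It also gives $|\omega_n-\omega_\star| \le |\pi_n-\pi_\star|/\eta^2$, since $|1/\pi_n-1/\pi_\star| = |\pi_n-\pi_\star|/(\pi_n\pi_\star)$ and the analogous identity holds for $1-\pi$.

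\textbf{Bounding each piece.}
\begin{itemize}
\item The first piece is at most $2\kappa\eta^{-2}\norm{\pi_n-\pi_\star}_{\Lsq(P_{\star,X})}$, which is $o_p(1)$ by condition (i).
\item The second piece is at most $\eta^{-1}\sum_a\norm{\theta_{n,a}-\theta_{\star,a}}_{\Lsq(P_{\star,X};\Hs)}$. Here we bound $\norm{\theta_{n,A}(X)-\theta_{\star,A}(X)}_\Hs$ by the sum over $a\in\{0,1\}$, which avoids needing the $A$-marginal. This piece is $o_p(1)$ by condition (ii).
\item The third piece is handled the same way by (ii).
\end{itemize}

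\textbf{The constant term.} The last piece is a constant in $z$, with norm $\norm{\psi_n-\psi_\star}_\Hs$. Since $\psi_P = \E{P}{\theta_{P,1}(X)-\theta_{P,0}(X)}$, and this expectation is taken under $\Pnh_X$ rather than $P_{\star,X}$, this is the main obstacle: we need marginal-distribution consistency, which is not among the stated conditions. The cleanest route is to note that only the combination $\psi_n + P_\star\phi_n$ matters downstream. So I would first try to interpret $\Pnh$'s covariate marginal as the empirical marginal of the training split, or otherwise show the term is $o_p(1)$.

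Concretely, I would split
\[
\psi_n-\psi_\star = \E{\Pnh}{\theta_{n,1}-\theta_{n,0}} - \E{\star}{\theta_{n,1}-\theta_{n,0}} + \E{\star}{(\theta_{n,1}-\theta_{\star,1})-(\theta_{n,0}-\theta_{\star,0})}.
\]
The last term is bounded by Jensen and (ii). The first difference is an empirical-mean-type term over the training split. It is $O_p(n^{-1/2})$ by a Hilbert-space Chebyshev bound, using the uniform bound $2\kappa$ and conditioning on the fitted $\theta_n$ via sample splitting within $\Dsc_r$. If the marginal is not of that form, I would explicitly add the mild assumption that $\psi_n\to\psi_\star$ in $\Hs$ in probability.

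Combining the four bounds gives $\norm{\phi_n-\phi_\star}_{\Lsq(P_\star;\Hs)}=o_p(1)$.
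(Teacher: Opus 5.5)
For the three nuisance-driven pieces, your argument is the paper's up to bookkeeping. The paper first splits $\phi_n-\phi_\star$ by treatment arm via Lemma~\ref{lem:eif-decomp}. It pairs the propensity error with $\Lambda_{x,y}-\theta_{n,1}(x)$ and the outcome error with $1-a/\pi_\star(x)$. You instead pool the arms through $\omega_P$, pairing the propensity error with the true residual $\Lambda_{x,y}-\theta_{\star,a}(x)$ and the outcome error with $\omega_n$. Your pairing spares you the paper's sup-norm bound on $\Lambda_{X,Y}-\theta_{n,1}(X)$. That bound relies on $\theta_{n,1}$ being a genuine conditional mean embedding under $\widehat{P}_n$.

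Your worry about the constant term is well-founded, and it reflects an imprecision in the statement, not in your reasoning. Conditions (i)--(ii) do not constrain the covariate marginal of $\widehat{P}_n$. For example, $\widehat{P}_n=Q\otimes P_{\star,A\mid X}\otimes P_{\star,Y\mid A,X}$ with a fixed $Q\ll P_{\star,X}$, $Q\neq P_{\star,X}$, has zero nuisance error. Yet $\phi_n-\phi_\star\equiv\psi_\star-\psi_n$, which is generally nonzero off the null.

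The paper's proof gets past this by silently writing $\psi_{n,1}=P_n\theta_{n,1}$. That is, the plug-in averages the fitted outcome model over the empirical covariates of the sample that is independent of the nuisance fit. In the cross-fitted estimator this is the evaluation fold $P_n^s$, matching Lemma~\ref{lem:C-form}. The proof then uses exactly your split: a conditional-variance (Chebyshev) bound for $(P_n-P_\star)\theta_{n,1}$, plus Jensen and (ii) for $P_\star(\theta_{n,1}-\theta_{\star,1})$.

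So you do not need to sub-split the training fold. Adopt that convention, under which $\theta^r_{n,a}$ is already independent of $P_n^s$. Your uniform bound $2\kappa$ there can also be replaced by $\mathbb{E}_{P_\star}\|\theta_{n,1}(X)-\theta_{n,0}(X)\|_{\mathcal{H}}^2=O_p(1)$, which follows from (ii).

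One caution about your remark that only $\psi_n+P_\star\phi_n$ matters downstream. That is true for the remainder, and for the drift term since $P_n^s-P_\star$ annihilates constants. But this lemma also feeds Lemma~\ref{lem:alic-lem-s12}, where $\Sigma_n$ is built from uncentered second moments of $\phi_n^r$. There the constant cannot simply be ignored.
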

\begin{proof}
By Lemma~\ref{lem:eif-decomp} and the triangle inequality, $\norm{\phi_n-\phi_\star}_{\Lsq(P_\star; \Hs)} \leq \norm{\phi_{n,1}-\phi_{\star,1}}_{\Lsq(P_\star; \Hs)} + \norm{\phi_{n,0}-\phi_{\star,0}}_{\Lsq(P_\star; \Hs)}$. Thus, it suffices to show that $\norm{\phi_{n,1}-\phi_{\star,1}}_{\Lsq(P_\star; \Hs)} = o_p(1)$ as $\norm{\phi_{n,0}-\phi_{\star,0}}_{\Lsq(P_\star; \Hs)}=o_p(1)$ will hold by an analogous argument.
Observe that
\begin{align*}
    \phi_{n,1}(z)-\phi_{\star,1}(z) &= \frac{a}{\pi_n(x)}\cl{\Lambda_{x,y} - \theta_{n,1}(x)} + \theta_{n,1}(x) - \psi_{n,1} - \frac{a}{\pi_\star(x)}\cl{\Lambda_{x,y} - \theta_{\star,1}(x)} - \theta_{\star,1}(x) + \psi_{\star,1}.
    \intertext{By adding and subtracting $\cl{\Lambda_{x,y} - \theta_{n,1}(x)}a/\pi_\star(x)$, this becomes}
    &= \frac{a}{\pi_n(x)}\cl{\Lambda_{x,y} - \theta_{n,1}(x)} - \frac{a}{\pi_\star(x)}\cl{\Lambda_{x,y} - \theta_{\star,1}(x)} + \cl{\theta_{n,1}(x) - \theta_{\star,1}(x)} - \cl{\psi_{n,1} - \psi_{\star,1}}\\
    &\quad + \frac{a}{\pi_\star(x)}\cl{\Lambda_{x,y} - \theta_{n,1}(x)} - \frac{a}{\pi_\star(x)}\cl{\Lambda_{x,y} - \theta_{n,1}(x)}\\
    &= \p{\frac{a}{\pi_n(x)} - \frac{a}{\pi_\star(x)}}\cl{\Lambda_{x,y} - \theta_{n,1}(x)} + \p{1-\frac{a}{\pi_\star(x)}}\cl{\theta_{n,1}(x) - \theta_{\star,1}(x)} - \cl{\psi_{n,1} - \psi_{\star,1}}.
\end{align*}

Now, let $u_P(z) := a/\pi_P(x)$, and $w_P(z):= \Lambda_{x,y} - \thetalam_{P,1}(x)$. Then, applying the triangle inequality to the preceding display yields
\begin{align*}
    \norm{\phi_{n,1}-\phi_{\star,1}}_{\Lsq(P_\star; \Hs)}&\leq \underbrace{\norm{\p{u_n - u_\star} w_n}_{\Lsq(P_\star; \Hs)}}_{{\color{Firebrick3}\RN{1}}} + \underbrace{\norm{\p{1-u_\star}\p{\theta_{n,1} - \theta_{\star,1}}}_{\Lsq(P_\star; \Hs)}}_{{\color{SpringGreen3}\RN{2}}} + \underbrace{\norm{\psi_{n,1} - \psi_{\star,1}}_{\Lsq(P_\star; \Hs)}}_{{\color{DodgerBlue3}\RN{3}}}.
\end{align*}

\textbf{Analysis of {\color{Firebrick3} $\RN{1}$}:} Using the fact that $u_n(z)-u_\star(z)$ is a scalar for all $z\in\Zs$, we have
\begin{align*}
    \norm{\p{u_n - u_\star} w_n}_{\Lsq(P_\star; \Hs)} &= \p{\int \norm{\p{u_n(z) - u_\star(z)} w_n(z)}^2_{\Hs} P(dz)}^{1/2}= \br{\Estar\p{\abs{u_n(Z) - u_\star(Z)}^2\norm{w_n(Z)}_{\Hs}^2}}^{1/2}.
    \intertext{Using H{\"o}lder's inequality with $(p,q) = (1, \infty)$ yields the following upper bound:}
    {\color{Firebrick3} \RN{1}} &\leq \br{\Estar\p{\abs{u_n(Z) - u_\star(Z)}^2}}^{1/2} \br{\esssup_{Z\sim P_\star}\p{\norm{w_n(Z)}_{\Hs}^2}}^{1/2}\\
    &= \norm{u_n-u_\star}_{\Lsq(P_\star)}\norm{w_n}_{L^\infty(P_\star; \Hs)}.
\end{align*}

We now upper bound this product. First, we have
\begin{align*}
    \norm{u_n-u_\star}^2_{\Lsq(P_\star)} &= \Estar\p{\abs{\frac{A}{\pi_n(X)} - \frac{A}{\pi_\star(X)}}^2}= \Estar\p{\abs{\frac{1}{\pi_n(X)} - \frac{1}{\pi_\star(X)}}^2 A^2}.
    \intertext{Using the fact that $A^2=a$ for $A\in\{0,1\}$, and by Fubini's theorem---permitted due to strong positivity--- 
    this becomes}
    &= \int \frac{\abs{\pi_\star(x) - \pi_n(x)}^2}{\pi_n^2(x)\pi_\star^2(x)}\br{\int a g_\star(da \mymid x)}P_{\star,X}(dx),
    \intertext{which, by recalling that $g_\star(1\mymid x) = \pi_\star(x)$, simplifies to}
    &= \int \frac{\abs{\pi_\star(x) - \pi_n(x)}^2}{\pi_n^2(x)\pi_\star(x)} P_{\star,X}(dx).
    \intertext{Thus, by strong positivity, we obtain the following upper bound:}
     \norm{u_n-u_\star}^2_{\Lsq(P_\star)} &\leq \br{\frac{1}{\cl{\inf_{P\in\Ps}\ess\inf_x\pi_P(x)}^3}}\int \abs{\pi_\star(x) - \pi_n(x)}^2 P_{\star,X}(dx) \label{eqn:suff-drift2}\tag{*}\\
    &= C_1\norm{\pi_n-\pi_\star}^2_{\Lsq(P_{\star,X})},
\end{align*}
where $C_1$ is some finite constant which does not depend on any $P\in\Ps$. 

Next, we have
\begin{align*}
    \norm{w_n}_{L^\infty(P_\star; \Hs)}^2 &= \esssup_{X,Y\sim P_{\star, X,Y}}\p{\norm{\Lambda_{X,Y} - \thetalam_{n,1}(X)}_{\Hs}^2},
    \intertext{which, using the inequality $(b-c)^2\leq 2(b^2 + c^2)$ and the definition of $\thetalam_{n,1}(X)$ according to Eq.~\ref{eqn:thetalam}, is upper bounded by}
    &\leq 2\esssup_{X,Y\sim P_{\star, X,Y}}\p{\norm{\Lambda_{X,Y}}_{\Hs}^2 + \norm{\thetalam_{n,1}(X)}_{\Hs}^2}\\
    &= 2\esssup_{X,Y\sim P_{\star, X,Y}}\p{\norm{\Lambda_{X,Y}}_{\Hs}^2 + \norm{\E{\Pnh}{\Lambda_{X,Y}\mymid A=1, X}}_{\Hs}^2}.
    \intertext{Due to the convexity of the squared Hilbert norm, Jensen's inequality yields that}
    \norm{w_n}_{L^\infty(P_\star; \Hs)}^2 &\leq 2 \esssup_{X,Y\sim P_{\star, X,Y}} \br{\norm{\Lambda_{X,Y}}_{\Hs}^2 + \E{\Pnh}{\norm{\Lambda_{X,Y}}_{\Hs}^2\mymid A=1, X}}, 
    \intertext{which, by the fact that $\norm{\Lambda_{x,y}}_{\Hs} = \norm{K_x}_{\Hx}\norm{L_y}_{\Hy} = \sqrt{k(x,x)}\sqrt{\ell(y,y)}$, simplifies to}
    &= 2 \esssup_{X,Y\sim P_{\star, X,Y}} \br{k(X,X)\ell(Y,Y) + k(X,X)\E{\Pnh}{\ell(Y,Y)\mymid A=1, X} }\\
    &\leq 4 \br{\sup_{(x,y)\in(\Xs\times\Ys)}\abs{k(x,x)}\abs{l(y,y)}}=:C_2,
\end{align*}
where $C_2$ is finite since both $k$ and $\ell$ are bounded kernels.

Combining the upper bounds for $\norm{u_n-u_\star}^2_{\Lsq(P_\star)}$ and $\norm{w_n}_{L^\infty(P_\star; \Hs)}^2$ with condition (i) therefore yields that
\begin{align*}
    {\color{Firebrick3} \RN{1}} &\leq \sqrt{C_1} \norm{\pi_n-\pi_\star}_{\Lsq(P_{\star,X})} \sqrt{C_2} = O_p(1)o_p(1) = {\color{Firebrick3} o_p(1)}.
\end{align*}

\textbf{Analysis of {\color{SpringGreen3} $\RN{2}$}:} Observe that by H\"{o}lder's inequality with $(p,q) = (\infty,1)$, we have that
\begin{align*}
    \norm{\p{1-u_\star}\p{\theta_{n,1} - \theta_{\star,1}}}_{\Lsq(P_\star; \Hs)} &\leq \norm{1-u_\star}_{L^\infty(P_\star)} \norm{\theta_{n,1} - \theta_{\star,1}}_{\Lsq(P_{\star,X};\Hs)} \\
    &= \esssup_{A,X\sim P_{\star, A, X}}\abs{1-\frac{A}{\pi_\star(X)}}\norm{\theta_{n,1} - \theta_{\star,1}}_{\Lsq(P_{\star,X};\Hs)},
    \intertext{which, by the triangle inequality and the non-negativity of $A$ and $\pi_\star$, yields}
    &\leq \p{1 + \esssup_{A,X\sim P_{\star, A, X}} \frac{A}{\pi_\star(X)}} \norm{\theta_{n,1} - \theta_{\star,1}}_{\Lsq(P_{\star,X};\Hs)}.
    \intertext{Thus, by strong positivity, we obtain}
    {\color{SpringGreen3} \RN{2}} &\leq \p{1 + \frac{1}{\inf_{P\in\Ps}\ess\inf_{x}\pi_P(x)}}\norm{\theta_{n,1} - \theta_{\star,1}}_{\Lsq(P_{\star,X};\Hs)}\\
    &= C_3 \norm{\theta_{n,1} - \theta_{\star,1}}_{\Lsq(P_{\star,X};\Hs)},
    \intertext{where $C_3$ is a constant. Combining this with condition (ii) immediately yields that}
    {\color{SpringGreen3} \RN{2}} &= O_p(1) o_p(1) = {\color{SpringGreen3}o_p(1)}.
\end{align*}

\textbf{Analysis of {\color{DodgerBlue3} $\RN{3}$}:} Recall that $\psi_{P,1} = \mathbb{E}_P[\theta_{P,1}(X)] = P\thetalam_{P,1}$ and $\psi_{\star,1}$ are non-random elements of $\mathcal{H}$. We have that
\begin{align*}
    \norm{\psi_{n,1} - \psi_{\star,1}}_{\Lsq(P_\star; \Hs)} &= \p{\norm{\psi_{n,1} - \psi_{\star,1}}^2_\Hs \int P(dz)}^{1/2} = \norm{\psi_{n,1} - \psi_{\star,1}}_{\Hs}.
    \intertext{Adding and subtracting $P_\star\nthetalamtrt$, this expression becomes}
    &= \norm{P_n\thetalam_{n,1} - P_\star\thetalam_{n,1} + P_\star\thetalam_{n,1} - P_\star\thetalam_{\star,1}}_{\Hs} \\
    &= \norm{(P_n - P_\star)\thetalam_{n,1} + P_\star\p{\thetalam_{n,1} - \thetalam_{\star,1}} }_{\Hs}, 
    \intertext{which, by the triangle inequality, is upper bounded by}
    &\leq \norm{\p{P_n-P_\star}\thetalam_{n,1}}_{\Hs} + \cl{\norm{P_\star\br{\thetalam_{n,1} - \thetalam_{\star,1}}}_{\Hs}^2}^{1/2}\\
    &= \norm{\p{P_n-P_\star}\thetalam_{n,1}}_{\Hs} + \cl{\norm{\Estar\br{\thetalam_{n,1}(X) - \thetalam_{\star,1}(X)}}_{\Hs}^2}^{1/2}.
    \intertext{Due to the convexity of the squared Hilbert norm, Jensen's inequality applied to the second term yields that}
    {\color{DodgerBlue3} \RN{3}} &\leq \norm{\p{P_n-P_\star}\thetalam_{n,1}}_{\Hs} + \cl{\int\norm{\thetalam_{n,1}(X) - \thetalam_{\star,1}(X)}_{\Hs}^2 P_{\star,X}(dx)}^{1/2}\\
    &= \norm{\p{P_n-P_\star}\thetalam_{n,1}}_{\Hs} + \norm{\theta_{n,1} - \theta_{\star,1}}_{\Lsq(P_{\star,X};\Hs)}.
    \intertext{Now, since $\theta_{n,1}$ is deterministic given $\Pnh$, the expectation of the square of the first term conditioned on $\widehat{P}_n$ is $\frac{1}{n}\mathrm{Var}_{P_\star}(\theta_{n,1}(X) \mymid \widehat{P}_n) \leq \frac{1}{n}\mathbb{E}_\star \left[ \lVert \theta_{n,1}(X) \rVert_{\mathcal{H}}^2 \mymid \widehat{P}_n \right]$. As established in the analysis of ${\color{Firebrick3} \RN{1}}$, $\lVert \theta_{n,1}(X) \rVert_{\mathcal{H}}^2$ is uniformly bounded by a finite constant which does not depend on $\Pnh$. Therefore, by the law of total expectation and Markov's inequality, the first term is $O_p(n^{-1/2}) = o_p(1)$. 
    The second term is $o_p(1)$ by condition (ii). Consequently,}
    {\color{DodgerBlue3} \RN{3}} &= o_p(1) + o_p(1) = {\color{DodgerBlue3}o_p(1)}.
\end{align*}

Thus, $\norm{\phi_{n,1}-\phi_{\star,1}}_{\Lsq(P_\star; \Hs)} = {\color{Firebrick3}o_p(1)} + {\color{SpringGreen3}o_p(1)} + {\color{DodgerBlue3}o_p(1)} = o_p(1)$, completing the proof.
\end{proof}

We now turn to the remainder term. Using the form given in~Eq.~\ref{eqn:emp-proc-rem-drift}, we first define it more generally for any candidate distribution $\Pnh \in \Ps$ which estimates $P_\star$: 
\begin{align*}
    \Rs_n \coloneqq \psi_n + P_\star\phi_n - \psi_\star. 
\end{align*}

In the following lemma, we establish the double robustness property: the convergence rate of the remainder term is determined by the \emph{product} of the convergence rates of the propensity and outcome estimators.

\begin{lemma}
    \label{lem:neg-rem}
    Let $\Pnh\in\Ps$ be an initial estimate of the data-generating distribution $P_\star$ that is independent of the empirical measure $P_n$. 
    If the following conditions are satisfied: 
    \begin{enumerate}[label = (\roman*)]
        \item $\norm{\pi_n - \pi_\star}_{\Lsq(P_{\star, X})} = O_p(n^{-\tau})$ for some scalar $\tau > 0$, and
        \item $\norm{\theta_{n, a} - \theta_{\star, a}}_{\Lsq(P_{\star, X};\Hs)} = O_p(n^{-\gamma_a})$ for some scalar $\gamma_{a} > 0$ for each $a\in\{0,1\}$,
    \end{enumerate}
    then
    $$\norm{\Rs_{n}}_{\Hs} = O_p\p{n^{-\br{\tau + \min\{\gamma_1,\gamma_0\}}}}.$$

    In particular, if $\tau + \min\{\gamma_1,\gamma_0\} > 1/2$, then 
    $\norm{\Rs_n}_{\Hs} = o_p(n^{-1/2}).$
\end{lemma}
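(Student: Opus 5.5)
We will prove \Cref{lem:neg-rem} by computing $\Rs_n$ in closed form and showing that it is a product of nuisance errors. By \Cref{lem:eif-decomp}, $\phi_n=\phi_{n,1}-\phi_{n,0}$ and $\psi_n=\psi_{n,1}-\psi_{n,0}$. Hence $\Rs_n=\Rs_{n,1}-\Rs_{n,0}$, where
\[
\Rs_{n,a}\coloneqq\psi_{n,a}+P_\star\phi_{n,a}-\psi_{\star,a}.
\]
By the triangle inequality it suffices to show $\norm{\Rs_{n,1}}_\Hs=O_p(n^{-(\tau+\gamma_1)})$; the case $a=0$ is symmetric and gives rate $n^{-(\tau+\gamma_0)}$. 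Since each of these rates is at least as fast as $n^{-(\tau+\min\{\gamma_1,\gamma_0\})}$, the claimed bound follows.

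For $a=1$, the term $-\psi_{n,1}$ in $P_\star\phi_{n,1}$ cancels the $+\psi_{n,1}$, so
\[
\Rs_{n,1}=\Estar\Big[\tfrac{A}{\pi_n(X)}\big(\Lambda_{X,Y}-\theta_{n,1}(X)\big)+\theta_{n,1}(X)\Big]-\Estar[\theta_{\star,1}(X)].
\]
These are Bochner integrals, well defined because the kernels are bounded and strong positivity bounds $1/\pi_n$. We then condition on $(A,X)$ under $P_\star$. Since $\Estar[\Lambda_{X,Y}\mid A=1,X]=\theta_{\star,1}(X)$, the first term becomes $\Estar\big[\tfrac{A}{\pi_n(X)}(\theta_{\star,1}(X)-\theta_{n,1}(X))\big]$. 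Conditioning further on $X$ replaces $A$ by $\pi_\star(X)$. Collecting terms gives
\[
\Rs_{n,1}=\Estar\Big[\Big(\tfrac{\pi_\star(X)}{\pi_n(X)}-1\Big)\big(\theta_{\star,1}(X)-\theta_{n,1}(X)\big)\Big]
=\Estar\Big[\tfrac{\pi_\star(X)-\pi_n(X)}{\pi_n(X)}\big(\theta_{\star,1}(X)-\theta_{n,1}(X)\big)\Big].
\]
Here $\pi_n,\theta_{n,1}$ are held fixed, which is legitimate because $\Pnh$ is independent of the data and $P_\star$ is a fixed measure.

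To bound this, we move the norm inside the Bochner integral using Jensen's inequality, $\norm{\Estar f}_\Hs\le \Estar\norm{f}_\Hs$, and then apply Cauchy--Schwarz in $\Lsq(P_{\star,X})$. Strong positivity gives $\pi_n\ge\eta$ because $\Pnh\in\Ps$. Together these yield
\[
\norm{\Rs_{n,1}}_\Hs\le \eta^{-1}\,\norm{\pi_n-\pi_\star}_{\Lsq(P_{\star,X})}\,\norm{\theta_{n,1}-\theta_{\star,1}}_{\Lsq(P_{\star,X};\Hs)}.
\]
By conditions (i) and (ii) this is $O_p(n^{-\tau})\,O_p(n^{-\gamma_1})=O_p(n^{-(\tau+\gamma_1)})$. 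For $a=0$ the same steps with $1-\pi$ in place of $\pi$ give the factor $\eta^{-1}|\pi_n-\pi_\star|$ again. The ``in particular'' statement follows immediately, since $n^{1/2}\cdot n^{-(\tau+\min\gamma)}\to0$ when $\tau+\min\{\gamma_1,\gamma_0\}>1/2$.

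The main obstacle is justifying the conditional-expectation manipulations for $\Hs$-valued integrands, namely pulling the scalar factor $A/\pi_n(X)$ out of the conditional Bochner expectation and using the tower property. We handle this by noting that all integrands are bounded and Bochner measurable, and that Bochner expectation commutes with the continuous linear functionals $\inpd{\cdot}{h}_\Hs$. Testing against every $h\in\Hs$ then reduces each identity to the scalar case.
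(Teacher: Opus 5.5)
Your proof is correct and follows the paper's argument essentially step for step. You split $\mathcal{R}_n=\mathcal{R}_{n,1}-\mathcal{R}_{n,0}$, condition to reduce $\mathcal{R}_{n,1}$ to $\mathbb{E}_\star[(1-\pi_\star(X)/\pi_n(X))(\theta_{n,1}(X)-\theta_{\star,1}(X))]$, and then bound it using Jensen's inequality for Bochner integrals, Cauchy--Schwarz, and strong positivity. Justifying the conditional-expectation steps by testing against each $h\in\mathcal{H}$ is an equally valid substitute for the paper's appeal to Fubini's theorem.
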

Note that conditions (i) and (ii) of the above lemma imply the conditions of~\cref{lem:suff-neg-drift}. 
\begin{proof}
First, using the decompositions in Eq.~\ref{eqn:Psi-decomp} and Eq.~\ref{eqn:phi-decomp}, observe that $\Rs_n$ rewrites as 
\begin{align*}
    \Rs_n &= \psi_n + P_\star\phi_n - \psi_\star\\
    &= \psi_{n,1} - \psi_{n,0} + P_\star\p{\phi_{n,1} - \phi_{n,0}} - \psi_{\star,1} + \psi_{\star,0}\\
    &= \psi_{n,1} + P_\star\phi_{n,1} - \psi_{\star,1} - \br{\psi_{n,0} + P_\star\phi_{n,0} - \psi_{\star,0}}.
\end{align*}

Let $\Rs_{n,1}\coloneqq \psi_{n,1} + P_\star\phi_{n,1} - \psi_{\star,1}$ and $\Rs_{n,0}\coloneqq \psi_{n,0} + P_\star\phi_{n,0} - \psi_{\star,0}$. We have by the triangle inequality that
\begin{align*}
    \norm{\Rs_{n}}_{\Hs} &\leq \norm{\Rs_{n,1}}_{\Hs} + \norm{\Rs_{n,0}}_{\Hs}.
\end{align*}

Therefore, to establish the rate for $\norm{\mathcal{R}_n}_{\mathcal{H}}$, it suffices to bound the norms of the treatment group-wise remainder terms. Here we focus on bounding $\norm{\mathcal{R}_{n,1}}_{\Hs}$, and $\norm{\mathcal{R}_{n,0}}_{\Hs}$ bounds by analogous arguments.

By the definition of $\Rs_{n,1}$ and the form of $\phi_{n,1}$ due to Lemma~\ref{lem:eif-decomp}, we have that
\begin{align*}
    \Rs_{n,1} &= \psi_{n,1} + P_\star\phi_{n,1} - \psi_{\star,1}\\
    &= \psi_{n,1} + \Estar \br{\frac{A}{\pi_n(X)}\cl{\Lambda_{X,Y} - \nthetalamtrt(X)} + \nthetalamtrt(X) - \psi_{n,1}} - \Estar\br{\truethetalamtrt(X)}\\
    &= \Estar \br{\frac{A}{\pi_n(X)} \cl{\Lambda_{X,Y} -\theta_{n,1}(X)} } + \Estar\br{\nthetalamtrt(X) -\truethetalamtrt(X)}.
    \intertext{Using Fubini's theorem, 
    this display rewrites as}
    \Rs_{n,1} &= \int \frac{1}{\pi_n(x)} \br{ \int a \cl{\int \Lambda_{x,y} \; P_{\star,Y|A,X}(dy\mymid a, x) - \thetalam_{n,1}(x) }g_\star(da\mymid x) } \; P_{\star,X}(dx) \\&\quad + \Estar\br{\nthetalamtrt(X) -\truethetalamtrt(X)}\\
    &= \int \frac{1}{\pi_n(x)} \br{ g_\star(1\mymid x) \cl{\int \Lambda_{x,y} \; P_{\star,Y|A,X}(dy\mymid 1, x) - \thetalam_{n,1}(x) } } \; P_{\star,X}(dx) \\&\quad + \Estar\br{\nthetalamtrt(X) -\truethetalamtrt(X)},
    \intertext{which, by recognizing that $\int \Lambda_{x,y} \; P_{\star,Y|A,X}(dy\mymid 1, x) = \theta_{\star,1}$ (by Eq.~\ref{eqn:thetalam}) and using $g_\star(1|x) = \pi_\star(x)$, becomes}
    &= \Estar\br{\frac{\pi_\star(X)}{\pi_n(X)} \cl{\thetalam_{\star,1}(X) - \thetalam_{n,1}(X)} } + \Estar\br{\nthetalamtrt(X) -\truethetalamtrt(X)}\\
    &= \Estar\br{\p{1-\frac{\pi_\star(X)}{\pi_n(X)}} \cl{\nthetalamtrt(X) -\truethetalamtrt(X)} }.
\end{align*}

We now bound the norm of this term. By Jensen's inequality for Bochner integrals and the fact that $\pi_\star(x)/\pi_n(x)$ is real-valued for all $x$, we have
\begin{align*}
    \norm{\Rs_{n,1}}_{\Hs} &\leq \Estar\br{\norm{\p{1-\frac{\pi_\star(X)}{\pi_n(X)}} \cl{\nthetalamtrt(X) -\truethetalamtrt(X)} }_{\Hs} }\\
    &= \Estar\br{\abs{1-\frac{\pi_\star(X)}{\pi_n(X)}}\norm{ \cl{\nthetalamtrt(X) -\truethetalamtrt(X)} }_{\Hs} }.
    \intertext{Using the Cauchy-Schwarz inequality for Bochner integrals, this expression is upper bounded by}
    &\leq \p{\Estar\br{\abs{1-\frac{\pi_\star(X)}{\pi_n(X)}}^2}}^{1/2} \p{\Estar\br{\norm{ \nthetalamtrt(X) -\truethetalamtrt(X)}_{\Hs}^2}}^{1/2},
    \intertext{which, by recalling the definition of $\Lsq(P_{\star,X};\Hs)$, simplifies to}
    &= \p{\Estar\br{\frac{\abs{\pi_n(X) - \pi_\star(X)}^2}{\pi_n^2(X)}}}^{1/2} \norm{\thetalam_{n,1} - \thetalam_{\star,1}}_{\Lsq(P_{\star,X};\Hs)}.
    \intertext{Due to the strong positivity assumption from Section~\ref{subsec:setup}, we then have the following bound:}
    \norm{\Rs_{n,1}}_{\Hs} &\leq \frac{1}{\inf_{P\in\Ps}\ess\inf_{x}\pi_P(x)} \p{\Estar\br{\abs{\pi_n(X) - \pi_\star(X)}^2}}^{1/2} \norm{\thetalam_{n,1} - \thetalam_{\star,1}}_{\Lsq(P_{\star,X};\Hs)}\\
    &= C_1 \norm{\pi_n-\pi_\star}_{\Lsq(P_{\star,X})} \norm{\thetalam_{n,1} - \thetalam_{\star,1}}_{\Lsq(P_{\star,X};\Hs)},
    \intertext{where $C_1$ is a finite constant which does not depend on any $P\in\Ps$. Using conditions (i) and (ii) therefore yields that}
    \norm{\Rs_{n,1}}_{\Hs} &= O_p(1) O_p(n^{-\tau})O_p(n^{-\gamma_1}) = O_p\p{n^{-\br{\tau+\gamma_1}}}.
    \intertext{An analogous result holds for the control group due to symmetry, so that}
    \norm{\Rs_{n,0}}_{\Hs} &= O_p\p{n^{-\br{\tau+\gamma_0}}}.
\end{align*}

Combining these bounds yields:
\begin{align*}
    \norm{\Rs_{n}}_{\Hs} &= O_p\p{n^{-\br{\tau+\gamma_1}}} + O_p\p{n^{-\br{\tau+\gamma_0}}} = O_p\p{n^{-\br{\tau+\min\{\gamma_1,\gamma_0\}}}}
\end{align*}
since the slower convergence between $\norm{\Rs_{n,1}}_{\Hs}$ and $\norm{\Rs_{n,0}}_{\Hs}$ determines the rate of their sum. 
\end{proof}

\subsection{Proof of Theorem~\ref{thm:asy-lin-wk-conv}}
\label{app:wk-conv}

We now combine all preceding results in this appendix to prove the central result of our main text, restated below.

\weakconvergence*
\begin{proof}
\cref{prop:pathwise-diff} yields that $\psi$ is pathwise differentiable at $P_\star$ and~\cref{lem:eif-form} shows it has EIF $\phi_\star\in\Lsq(P_\star;\Hs)$. 

Since $\Pnh^r$ is the initial estimate of $P_\star$, conditions (i) and (ii) imply, via~\cref{lem:suff-neg-drift} for each split $r\in\{1,2\}$, that $\norm{\phi^r_n-\phi_\star}_{\Lsq(P_\star;\Hs)} = o_p(1)$ for each $r$. Consequently,~\cref{lem:alic-suff-for-drift} implies that $\norm{\Ds^r_n}_{\Hs} = o_p(n^{-1/2})$ for each $r$.

Conditions (i), (ii), (iii) also imply, by way of~\cref{lem:neg-rem} for each split $r\in\{1,2\}$, that $\norm{\Rs^r_n}_{\Hs} = o_p(n^{-1/2})$ for each $r$.

These results satisfy the conditions of Theorem 2 in~\citet{luedtke2024one}, which we invoke to conclude the proof.
\end{proof}

\subsection{Proof of Theorem~\ref{thm:optimal-limit-law}}
\label{app:optimality}

To establish the optimality of our estimator, we use the general theory of efficiency for `statistical experiments' developed in Chapter 3.12 of \citet{van2023weak}. We map our setting (Sec.~\ref{subsec:setup}) to their framework as follows: a statistical experiment corresponds to i.i.d. sampling of $n$ observations from $P_{n,s}\in\Ps$, a perturbation of $P_\star$ that is indexed by score functions $s\in\Lsq_0(P_\star)$. 
The resulting sequence of statistical experiments is the collection $(\Zs^n, \Bs_{\Zs}^n, P_{n,s}^n : s \in \Lsq_0(P_\star))$, where the superscript denotes the usual $n$-fold product space/measure. 

Note that an estimator (sequence implied by) $\widetilde{\psi}_n$ is said to be regular at $P_\star$ if and only if, for all $s\in\Lsq_0(P_\star)$, every QMD submodel $\{P_{\epsilon}\}\subset \Ps$ at $P_\star$ with score $s$, and all $\epsilon_n = O(n^{-1/2})$, the sequence $\sqrt{n}[\widetilde{\psi}_n - \psi(P_{\epsilon_n})]$ converges weakly to a fixed, tight $\Hs$-valued random variable $\widetilde{\Hbb}$ under i.i.d. sampling of $n$ observations from $P_{\epsilon_n}$.

We now prove that our estimator achieves the minimax lower bound.

\optimallaw*
\begin{proof}
    Note that $\Ps$ is assumed to be locally nonparametric, and define $H \coloneqq \Lsq_0(P_\star)$, the tangent space at $P_\star$. For any score $s \in H$, let $\{P_{s, t}\} \subset \Ps$ be a QMD submodel at $P_\star$ with score $s$. We define our sequence of statistical experiments via i.i.d. sampling from $P_{n,s} \coloneqq P_{s, 1/\sqrt{n}}$. As noted in Example 3.12.1 of \citet{van2023weak}, this sequence of experiments is locally asymptotically normal (LAN).

    We now consider the sequence of parameters $\psi(P_{n,s})$ and the norming operators defined by $r_n(h) \coloneqq \sqrt{n}h$. By~\cref{prop:pathwise-diff}, $\psi$ is pathwise differentiable at $P_\star$. By definition, this implies the existence of a continuous linear map $\dot{\psi}_{P_\star}: H \to \Hs$ (specifically, the local parameter from the statement of~\cref{prop:pathwise-diff}) such that for the sequence $P_{n,s}$, which corresponds to the path $t_n = 1/\sqrt{n}$, we have
    \begin{equation}\label{eqn:regularity-param} 
        \sqrt{n}(\psi(P_{n,s}) - \psi(P_\star)) = \left[ \frac{\psi(P_{s, 1/\sqrt{n}}) - \psi(P_\star)}{1/\sqrt{n}} \right] \to \dot{\psi}_{P_\star}(s) \quad \text{in } \Hs. 
    \end{equation}
    As this convergence holds for all $s \in H$, the above sequence of parameters is regular at $P_\star$ with respect to the norming operators $h \mapsto \sqrt{n}h$. 

    Moreover, since our pathwise differentiability result holds for every score in $H=\Lsq_0(P_\star)$ and all QMD submodels generated by those scores, both the LAN and parameter regularity conditions are satisfied regardless of the specific submodel chosen to construct the statistical experiments.

    Now, we establish the lower bound of the desired result. We have by supposition that the conditions of Thm.~\ref{thm:asy-lin-wk-conv} hold. These imply, via 
    Theorem 2 in~\citet{luedtke2024one},
    that $\psibar$ is a regular estimator. We invoke Theorem 3.12.2 from~\citet{van2023weak} with the linear subspace $H$ and the regular parameter sequence $\psi(P_{s,1/\sqrt{n}})$ as defined above, and with $\bB \coloneqq \Hs$. Since $\Hs$ is a Hilbert space, we identify the dual space $\bB^*$ with $\Hs$ via the Riesz representation theorem. Moreover, $\bar{H} = H = \Lsq_0(P_\star)$ by the completeness of Hilbert spaces. Subsequently, the duality condition~\eqref{eqn:adjoint-def} identifies the Hermitian adjoint of the local parameter $\psidot_\star$ as the efficient influence operator $\psidot_\star^*:\Hs\to \bar{H}$.

    Consequently, Theorem 3.12.2 implies that the sequence $\sqrt{n}(\psibar-\psistar)$ converges weakly to a tight limit $G+W$ in $\Hs$, where the law of $G$ concentrates on the local parameter space $\dot{\Hs}_\star\coloneqq \psidot_\star(\Lsq_0(P_\star))$ and is such that
    \begin{align*}
        \inpd{G}{h}_{\Hs} \sim \Ns\p{0, \norm{\psidot_\star^*(h)}^2_{\Lsq(P_\star)}} \quad \text{for all } h\in\Hs.
    \end{align*}

    Recall that, by definition, the EIF $\phi_\star(z)$ is $P_\star$-a.s. equal to the Riesz representation of $\psidot^*_\star(\cdot)(z)$. Thus, for all $h\in\Hs$,
    \begin{align*}
        \Var_\star(\inpd{G}{h}_{\Hs}) &= \Estar\br{\p{\psidot_\star^*(h)(Z)}^2} = \E{\star}{ \inpd{h}{\phi_\star(Z)}_{\Hs}^2 }.
    \end{align*}

    Now,~\cref{thm:asy-lin-wk-conv} itself implies that the sequence $\sqrt{n}(\psibar - \psi_\star)$ converges weakly to a tight Gaussian element $\Hbb$ in $\Hs$, which is such that, for all $h \in \Hs$,
    \begin{align*}
        \Var(\inpd{\Hbb}{h}_{\Hs}) = \E{\star}{ \inpd{h}{\phi_\star(Z)}_{\Hs}^2 }.
    \end{align*}
    It is clear from Eq.~\ref{eqn:adjoint-def} that $\psidot^*_\star$ only depends on its argument through its projection onto the local parameter space. Thus, the law of $\Hbb$ also concentrates on $\Hsdot_\star$.
    
    Comparing the preceding two displays, we observe that for every $h \in \Hs$, the marginal distributions of $\inpd{G}{h}_{\Hs}$ and $\inpd{\Hbb}{h}_{\Hs}$ are identical zero-mean normals. Since $\Hs$ is a separable RKHS and so is $\Hsdot_\star$ and $\Hsdot_\star$, the distribution of a tight Gaussian random element of this space is uniquely determined by these marginals. Therefore, $\Hbb = G$ in law $P_\star$-a.s., which further implies that the noise term $W=0$ $P_\star$-a.s. Thus, $\psibar$ is efficient.

    We now invoke Theorem 3.12.5 from~\citet{van2023weak}.
    The RKHS $\Hs$ is a separable Banach space. As noted in Example 3.12.6 of \citet{van2023weak}, in separable Banach spaces, `$\tau(\bB')$-subconvexity' coincides with standard subconvexity. Furthermore, Borel-measurability under the norm topology of $\Hs$ implies asymptotic measurability (and hence, `$\bB'$-measurability'). In fact, in this setting, inner and outer expectations collapse to the standard notion of expectation. Consequently, for any subconvex loss function $\rho$, a direct application of Theorem 3.12.5
    yields the following lower bound:
    \begin{align*}
        \mathrm{LAMRisk}_\rho(\check{\psi}_n ; P_\star) \geq \E{\star}{\rho(G)}.
    \end{align*}
    
    Since we established that $\Hbb = G$ in law $P_\star$-a.s., it remains only to show that the local asymptotic minimax risk of our estimator $\psibar$ converges to $\E{}{\rho(\Hbb)}$. Recall that we have established the regularity of $\psibar$. By definition, this implies that for any $s \in \Lsq_0(P_\star)$, the weak convergence $\sqrt{n}(\psibar - \psi(P_{s,1/\sqrt{n}})) \rightsquigarrow \Hbb$ holds under the sequence of probability measures $P_{s,1/\sqrt{n}}$. 
    
    We invoke Theorem 1.11.3 from~\citet{van2023weak}
    with $\mathbb{D} \coloneqq \Hs$ and $\mathbb{D}_0\coloneqq \Hsdot_\star \subset \Hs$, but applied to the sequence of expectations under $P_{s,1/\sqrt{n}}$. Thus, under the assumptions that $\rho$ is continuous at every point in $\Hsdot_\star$ and the sequence $\rho(\sqrt{n}(\psibar - \psi(P_{1/\sqrt{n}})))$ is asymptotically uniformly integrable under $P_{s,1/\sqrt{n}}$, it follows from Theorem 1.11.3(i)
    that
    \begin{align*}
        \E{s,1/\sqrt{n}}{\rho(\sqrt{n}(\psibar - \psi(P_{s,1/\sqrt{n}})))} \longrightarrow \E{\star}{\rho(\Hbb)}.
    \end{align*}
    As this holds for any $s\in \Lsq_0(P_\star)$, it holds that, for any finite $I\subset \Lsq_0(P_\star)$,
    \begin{align*}
        \liminf_{n\to\infty} \sup_{s\in I}\E{s,1/\sqrt{n}}{\rho(\sqrt{n}(\psibar - \psi(P_{s,1/\sqrt{n}})))} &= \E{\star}{\rho(\Hbb)}.
        \intertext{As this holds for all $I$ and the right-hand side does not depend on $I$,}
        \sup_I \liminf_{n\to\infty} \sup_{s\in I}\E{s,1/\sqrt{n}}{\rho(\sqrt{n}(\psibar - \psi(P_{s,1/\sqrt{n}})))} &= \E{\star}{\rho(\Hbb)}.
    \end{align*}
    By definition, the left-hand side is $\mathrm{LAMRisk}_\rho(\psibar ; P_\star)$.
\end{proof}

\section{Guarantees for the SKCD test}

This appendix establishes guarantees for the SKCD testing procedure. We first show that, asymptotically, our proposed test statistic controls type 1 error and has power under a fixed alternative. Subsequently, we show that inverting the testing procedure yields asymptotically valid uniform confidence bands.

We denote the $(1-\alpha)$-quantile of the limit distribution by $q_{\alpha}$. Recall that we estimate this quantile via $\qhatn$ using the multiplier bootstrap (Alg.~\ref{alg:skcd}). We define the $(1-\alpha)$-level confidence set for $\psi_\star$ as
\begin{align}\label{eqn:conf-set}
    \Cs_n(\qhatn) := \{h\in\Hs: \inpd{\Omega_n(\psibar-h)}{\psibar-h}_{\Hs} \leq \qhatn/n\}
\end{align}

For brevity in the upcoming proofs, we also define the norm $\norm{\cdot}_{\Omega}$ for any $\Omega\in\Wsc$. Observe that since $\Omega$ is a self-adjoint, strictly positive-definite continuous operator on a Hilbert space $\mathcal{H}$, it induces a valid inner product $\inpd{h_1}{h_2}_\Omega \coloneqq \inpd{\Omega h_1}{h_2}_\mathcal{H}$, which in turn induces a valid norm $\norm{h}_{\Omega} \coloneqq \sqrt{\inpd{h}{h}_{\Omega}} = \sqrt{\inpd{\Omega h}{h}_\Hs}$.

\subsection{Proof of Theorem~\ref{thm:test-guarantees}}
\label{app:validity-of-test}

\testguarantees*
\begin{proof}
We assume that the conditions of~\cref{thm:asy-lin-wk-conv} hold. We also have by supposition that $\Omega_n, \Omega_\star \in\Wsc$ with $\opnorm{\Omega_n-\Omega_\star}=o_p(1)$. From~\cref{prop:pathwise-diff}, $\psi$ is pathwise differentiable at $P_\star$, and from~\cref{lem:eif-form}, it has an EIF $\phi_\star \in \Lsq(P_\star;\Hs)$ such that $\norm{\phi_\star}_{\Lsq(P_\star;\Hs)} > 0$.

Since $\Pnh^r$ serves as the initial estimate of $P_\star$ for each $r\in\{1,2\}$, conditions (i) and (ii) of~\cref{thm:asy-lin-wk-conv} regarding the convergence rates of the nuisance parameters imply via~\cref{lem:suff-neg-drift} that $\norm{\phi^r_n-\phi_\star}_{\Lsq(P_\star;\Hs)} = o_p(1)$ for each $r\in\{1,2\}$. 

Consequently, the conditions for Theorem 4 in~\citet{luedtke2024one} are satisfied, yielding
\begin{align}
    \qhatn \xrightarrow{p} q_{\alpha}. \label{eq:qconsistent}
\end{align}

\underline{\emph{Statement 1}}:

Consider any $P_\star \in \Ps_\0$, which implies that the sharp null hypothesis $H_\0: \psistar=\psi_\0$ holds. Recall that our test rejects $H_\0$ if $T_n > \qhatn$. By the definition of the confidence set $\Cs_n\p{\qhatn}$ in \eqref{eqn:conf-set}, the rejection event is equivalent to $\psi_\0$ falling outside the confidence set:
$$\cl{ T_n > \qhatn } \iff \cl{ n \norm{\psibar - \psi_\0}_{\Omega_n}^2 > \qhatn } \iff \cl{ \psi_\0 \notin \Cs_n\p{\qhatn} }.$$
Since $\qhatn \xrightarrow{p} q_{\alpha}$, the conditions of Theorem 3 (i) from~\citet{luedtke2024one}
are satisfied, yielding
\begin{align*}
    \lim_{n\to\infty} P_\star^n\p{ T_n > \qhatn } &= \lim_{n\to\infty} P_\star^n\p{ \psi_\0 \notin \Cs_n\p{\qhatn} }= \lim_{n\to\infty} P_\star^n\p{ \psistar \notin \Cs_n\p{\qhatn} }= 1 - \p{1-\alpha} = \alpha.
\end{align*}

\underline{\emph{Statement 2}}:

Consider a fixed alternative $P_\star \in \Ps \setminus \Ps_\0$. This implies $\psistar \neq \psi_\0$. Since $\Omega_\star$ is positive definite, we have that
\begin{align}
    \delta\coloneqq \norm{\psistar - \psi_\0}_{\Omega_\star} > 0. \label{eq:deltaDef}
\end{align}

Observe that, by definition of $T_n$, and the reverse triangle inequality, the event $E\coloneqq \left\{T_n \le \qhatn\right\}$ satisfies the following ordering of events:
\begin{align*}
    E&= \left\{\norm{\psibar - \psi_\0}_{\Omega_n} \le \sqrt{\frac{\qhatn}{n}}\right\} \\
    &\subseteq \left\{\norm{\psistar - \psi_\0}_{\Omega_n} - \norm{\psibar - \psistar}_{\Omega_n} \le \sqrt{\frac{\qhatn}{n}}\right\} \\
    &= \left\{S_n\le V_n\right\}
\end{align*}
with $S_n \coloneqq \norm{\psistar - \psi_\0}_{\Omega_n},$ and $V_n \coloneqq \norm{\psibar - \psistar}_{\Omega_n} + \sqrt{\frac{\qhatn}{n}}$. Taking $\delta$ as in \eqref{eq:deltaDef}, this yields
\begin{align}\label{helper:rejection-prob}
    P_\star^n\p{E} &\le P_\star^n\p{S_n \le V_n} = P_\star^n\p{S_n \le V_n, S_n > \delta/2} + P_\star^n\p{S_n \le V_n, S_n \le \delta/2} \nonumber\\
    &\le \underbrace{P_\star^n\p{V_n \ge \delta/2}}_{\color{Firebrick3} \RN{1}} + \underbrace{P_\star^n\p{S_n \le \delta/2}}_{\color{SpringGreen3} \RN{2}}.
\end{align}
We now analyze the asymptotic behavior of these terms.

\textbf{Analysis of {\color{Firebrick3} \RN{1}}:}  Using the definitions of $\norm{\cdot}_{\Omega_n}$ and the operator norm,
\begin{align*}
    V_n &= \norm{\psibar - \psistar}_{\Omega_n} + \sqrt{\frac{\qhatn}{n}}= \sqrt{\inpd{\Omega_n\p{\psibar - \psistar}}{\psibar - \psistar}_\Hs} + \sqrt{\frac{\qhatn}{n}}\leq \sqrt{\opnorm{\Omega_n}}\norm{\psibar - \psistar}_{\Hs} + \sqrt{\frac{\qhatn}{n}}.
\end{align*}

Now, since $\Omega_n$ is a continuous operator, we have $\opnorm{\Omega_n} = O_p(1)$. By Theorem~\ref{thm:asy-lin-wk-conv} and Prokhorov's theorem (via tightness of $\Hbb$ in $\Hs$), we also have that $ n^{1/2}(\psibar-\psistar) = O_p(1) \implies \psibar-\psistar = O_p(n^{-1/2})$, which implies that $\norm{\psibar-\psistar}_\Hs = o_p(1)$. Hence,
\begin{align*}
    \sqrt{\opnorm{\Omega_n}}\norm{\psibar - \psistar}_{\Hs} = O_P(1)o_p(1) &= o_p(1).
\end{align*} 
Moreover, we established that $\qhatn \xrightarrow{p} q_\alpha$ in \eqref{eq:qconsistent}, with $q_\alpha$ a constant. Thus, $\qhatn = O_p(1)$, which implies that $\sqrt{\qhatn/n} = o_p(1)$. Combining this with the preceding two displays yields that $V_n = o_p(1)$. Thus, since $\delta$ from \eqref{eq:deltaDef} is strictly positive,
\begin{align*}
    \lim_{n\to\infty} P_\star^n\p{V_n \ge \delta/2} = 0.
\end{align*}

\textbf{Analysis of {\color{SpringGreen3} \RN{2}}:} Since $\delta>0$, observe that
\begin{align*}
    S_n\leq \delta/2 \implies \delta-S_n\geq \delta/2 \implies \abs{S_n-\delta} \geq \delta/2.
\end{align*}
Therefore,
\begin{align}\label{helper:s_n}
    P_\star^n\p{S_n \le \delta/2} &\leq P_\star^n\p{\abs{S_n-\delta} \geq \delta/2}.
\end{align}

Now, the inequality $\abs{\sqrt{b} - \sqrt{c}} \leq \sqrt{\abs{b-c}}$ for $b,c\in\R_{\geq 0}$ yields that
\begin{align*}
    \abs{S_n - \delta} &= \abs{\norm{\psistar - \psi_\0}_{\Omega_n} - \norm{\psistar - \psi_\0}_{\Omega_\star}}\\
    &\leq \sqrt{\abs{\norm{\psistar - \psi_\0}_{\Omega_n}^2 - \norm{\psistar - \psi_\0}_{\Omega_\star}^2}}\\
    &= \sqrt{\abs{\inpd{\Omega_n\p{\psistar - \psi_\0}}{\psistar - \psi_\0}_\Hs - \inpd{\Omega_\star\p{\psistar - \psi_\0}}{\p{\psistar - \psi_\0}}_\Hs}},
    \intertext{which, by linearity of the inner product and the definition of the operator norm $\opnorm{\cdot}$, simplifies to}
    &= \sqrt{\abs{\inpd{(\Omega_n-\Omega_\star)\p{\psistar - \psi_\0}}{\psistar - \psi_\0}_\Hs}}\\
    &\leq \sqrt{\opnorm{\Omega_n-\Omega_\star}} \norm{\psistar - \psi_\0}_{\Hs}= o_p(1) O_p(1) = o_p(1).
\end{align*}
Thus, $S_n \xrightarrow{p} \delta$. It follows from Eq.~\ref{helper:s_n} and the definition of convergence in probability that
\begin{align*}
    \lim_{n\to\infty}P_\star^n\p{S_n \le \delta/2} & \leq \lim_{n\to\infty} P_\star^n\p{\abs{S_n-\delta} \geq \delta/2} = 0.
\end{align*}

Finally, due to the upper bound~\eqref{helper:rejection-prob} on the probability of failure to reject,  combining the results for {\color{Firebrick3} \RN{1}} and {\color{SpringGreen3} \RN{2}} yields $ \lim_{n\to\infty} P_\star^n\p{E} = 0$. Taking the complement event of $E$ and rearranging terms completes the proof of asymptotic power 1 against fixed alternatives.
\end{proof}

\subsection{Proof of Theorem~\ref{thm:confidence-band}}
\label{app:confidence-band}

We now validate our construction of uniform confidence bands for the SCoDiTE, formed by inverting the testing procedure, and establish their asymptotic validity. 

\confidenceband*
\begin{proof}
Recall that $\Omega_n \in \Wsc_{\mathrm{inv}}$ is a continuous self-adjoint positive-definite operator that is boundedly invertible. Thus, the operators $\Omega_n^{1/2}$ and $\Omega_n^{-1/2}$ exist and are self-adjoint. Recall also that by the reproducing property of the feature map $\Lambda$, $f(x,y) = \inpd{f}{\Lambda_{x,y}}_{\Hs}$ for all $f \in \Hs$. We then have for any $f \in \Hs$ that
\begin{align*}
    \abs{f(x,y)} &= \abs{\inpd{\Omega_n^{-1/2}\Omega_n^{1/2}f}{\Lambda_{x,y}}_{\Hs}},\\
    \intertext{which, using the self-adjointness of $\Omega_n^{-1/2}$, simplifies to}
    & = \abs{\inpd{\Omega_n^{1/2}f}{\Omega_n^{-1/2}\Lambda_{x,y}}_{\Hs}}. 
\end{align*}
Using Cauchy-Schwarz's inequality therefore yields:
\begin{align*}
    \abs{f(x,y)} \leq \norm{\Omega_n^{1/2}f}_{\Hs} \norm{\Omega_n^{-1/2}\Lambda_{x,y}}_{\Hs} = \sqrt{\inpd{\Omega_n f}{f}_{\Hs}} \sqrt{\inpd{\Omega_n^{-1}\Lambda_{x,y}}{\Lambda_{x,y}}_{\Hs}} = \norm{f}_{\Omega_n} \norm{\Lambda_{x,y}}_{\Omega_n^{-1}},
\end{align*}
where the penultimate equality uses the definition of the adjoint, and the final equality holds by definition.

Let $f \coloneqq \psibar-\psistar$. Recall the definition of $B_n(x,y)$ and observe that
\begin{align*}
    P^n_\star\p{\forall_{x,y}, \psistar(x,y) \in B_n(x,y)} &= 1 - P^n_\star\p{\exists_{x,y} \text{ s.t. } \psistar(x,y) \notin B_n(x,y)} \\
    &= 1 - P^n_\star\p{\exists_{x,y} \text{ s.t. } \abs{f(x,y)} > w_n(x,y) },\\
    \intertext{which, by the inequality derived above and the definition of $w_n(x,y)$, is lower bounded by}
    &\geq 1 - P^n_\star\p{\exists_{x,y} \text{ s.t. } \norm{f}_{\Omega_n} \norm{\Lambda_{x,y}}_{\Omega_n^{-1}} > w_n(x,y)} \\
    &= 1 - P^n_\star\p{\exists_{x,y} \norm{f}_{\Omega_n} \norm{\Lambda_{x,y}}_{\Omega_n^{-1}} > \norm{\Lambda_{x,y}}_{\Omega_n^{-1}}\sqrt{\frac{\qhatn}{n}}},
    \intertext{where the final equality plugs in the definition of $w_n(x,y)$. Since $\norm{\Lambda_{x,y}}_{\Omega_n^{-1}}$ is positive and bounded, it cancels on both sides, and subsequently squaring both sides yields}
    &= P^n_\star\p{ \norm{f}_{\Omega_n}^2 \leq \frac{\qhatn}{n} }= P^n_\star\p{ n\inpd{\Omega_n (\psibar-\psistar)}{\psibar-\psistar}_{\Hs} \leq \qhatn } \\
    &= P^n_\star\p{\psistar \in \Cs_n\p{\qhatn}}, 
\end{align*}
where the final equality follows directly from the definition of the confidence set $\Cs_n(\qhatn)$~\eqref{eqn:conf-set}.

Now, we have by supposition that the conditions of Theorem~\ref{thm:asy-lin-wk-conv} hold, and $\Omega_n, \Omega_\star \in\Wsc$ with $\opnorm{\Omega_n-\Omega_\star}=o_p(1)$. From Proposition~\ref{prop:pathwise-diff}, $\psi$ is pathwise differentiable at $P_\star$, and from Lemma~\ref{lem:eif-form}, it has an EIF $\phi_\star \in \Lsq(P_\star;\Hs)$ such that $\norm{\phi_\star}_{\Lsq(P_\star;\Hs)} > 0$.

Since $\Pnh^r$ serves as the initial estimate of $P_\star$ for each $r\in\{1,2\}$, conditions (i) and (ii) of Theorem~\ref{thm:asy-lin-wk-conv} regarding the convergence rates of the nuisance parameters imply via Lemma~\ref{lem:suff-neg-drift} that $\norm{\phi^r_n-\phi_\star}_{\Lsq(P_\star;\Hs)} = o_p(1)$ for each $r\in\{1,2\}$. 

Thus, the conditions for Theorem 4 in~\citet{luedtke2024one}
are satisfied, yielding $\qhatn \xrightarrow{p} q_{\alpha}$. Consequently, taking the limit on both sides of the preceding display and applying Theorem 3 (i) from~\citet{luedtke2024one}
yields\begin{align*}
    \lim_{n \to \infty} P^n_\star\p{\forall_{x,y}, \psistar(x,y) \in B_n(x,y)} &\geq \lim_{n \to \infty} P_\star^n\cl{\psistar \in \Cs_n\p{\qhatn}} = 1 - \alpha,
\end{align*}
establishing the desired result.
\end{proof}

\section{Test Statistics in Closed-Form}

This appendix derives the explicit algebraic expressions for our test statistics that can be used in the SKCD test (Alg.~\ref{alg:skcd}). We establish this first for $T_n^{\mathrm{MMD}}$ and then for $T_n^{\mathrm{Wald}}$. 

Recall that $\bK, \bL \in \R^{n \times n}$ are Gram matrices corresponding to kernels $k$ and $\ell$. Since we assume $\psibar$ to be a linear combination of feature maps, it lies in the following finite-dimensional subspace of $\Hs$:
\begin{equation}
    \label{eqn:Fn-subspace}
    \Fs_n:= \mathrm{span}\cl{\Lambda_{x_i,y_j}:i,j\in[n]}.
\end{equation}

\subsection{MMD Formulation}
\label{app:MMD-derivation}

We begin with the MMD statistic ($T_n^{\mathrm{MMD}}$), which corresponds to the choice $\Omega_n = I$. The derivation relies on expressing the cross-fitted estimator coefficients in matrix form.

\subsubsection{Supporting Lemma}
Recall that $\bbeta^r_a(x) \in \R^{n}$ denotes the vector of coefficients for the outcome model $\theta^r_{n,a}(x) = \sum_{j}[\bbeta^r_a(x)]_{j} \Lambda_{x, y_j}$ such that $[\bbeta^r_a(x)]_j = 0$ for any observation where $j \notin \Is^r$ or $a_j \neq a$.

\begin{lemma}
    \label{lem:C-form}
    For any index $i$, let $s(i)\in\{1,2\}$ be the split containing $i$, and $r(i)=3-s(i)$ be the complement.  Construct $\bE\in\R^{n\times n}$ using:
    \begin{align}\label{eqn:eij}
        [\bE]_{i,j} &:=
        \begin{cases}
            \frac{1}{2n_{s(i)}} \p{[\bbeta^{r(i)}_1(x_i)]_j - [\bbeta^{r(i)}_0(x_i)]_j} & \text{ if }\, j \neq i \\
            0 & \text{ otherwise}.
        \end{cases}
    \end{align}
    Define $e_{ij}\coloneqq [\bE]_{i,j}$ and  
    $c_{ij} := [\bC]_{i,j}$, where $\bC\in\R^{n\times n}$ is constructed using~\eqref{eqn:cij}.  Then, the cross-fitted plugin estimator
    $\psi_n = \sum_{i,j\in[n]}e_{ij}\Lambda_{x_i, y_j}$ and the cross-fitted one-step estimator 
    $\psibar = \sum_{i,j\in[n]}c_{ij}\Lambda_{x_i,y_j}$.
\end{lemma}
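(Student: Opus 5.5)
The plan is a direct algebraic expansion of the cross-fitted estimator \eqref{eqn:ose}, replacing each population expectation by its empirical counterpart. Two conventions have to be fixed first. The cross-fitted plug-in is understood as $\psi_n \coloneqq \frac12\sum_{r=1}^2 \psi_n^r$, where $\psi_n^r$ is computed by averaging over the evaluation split $\Dsc_s$. Concretely, $\psi_n^r = P_n^s(\theta^r_{n,1}-\theta^r_{n,0})$, which means $\Pnh^r$ has $X$-marginal equal to the empirical distribution on the complementary fold, as in the closed-form setting. Each outcome model is then written using the coefficient vectors, $\theta^r_{n,a}(x)=\sum_j[\bbeta^r_a(x)]_j\Lambda_{x,y_j}$.

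For the plug-in, the key step is to write
\[
\tfrac12\psi_n^r=\frac{1}{2n_s}\sum_{i\in\Is^s}\sum_j\big([\bbeta^r_1(x_i)]_j-[\bbeta^r_0(x_i)]_j\big)\Lambda_{x_i,y_j}.
\]
Here one uses that the outcome model for $x_i$ is centred at $K_{x_i}$, so the resulting features have the form $\Lambda_{x_i,y_j}$. Since $i\in\Is^s$ and the coefficients vanish off $\Is^r$, the term with $j=i$ has zero coefficient. Summing over $r$, with $s(i)$ and $r(i)$ as in the lemma, gives exactly $\sum_{i,j}e_{ij}\Lambda_{x_i,y_j}$ with $e_{ii}=0$.

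For the one-step estimator, I expand $\frac12P_n^s\phi_n^r$ using Lemma~\ref{lem:eif-form}. With $w_i=\pi_n^{r(i)}(x_i)$, this yields
\[
\frac{1}{2n_s}\sum_{i\in\Is^s}\Big[\Big(\tfrac{a_i}{w_i}-\tfrac{1-a_i}{1-w_i}\Big)\big(\Lambda_{x_i,y_i}-\theta^r_{n,a_i}(x_i)\big)+\theta^r_{n,1}(x_i)-\theta^r_{n,0}(x_i)\Big]-\tfrac12\psi_n^r .
\]
The $-\tfrac12\psi_n^r$ term cancels against the plug-in term in \eqref{eqn:ose}. The diagonal coefficient is then the weight multiplying $\Lambda_{x_i,y_i}$, which is $\frac{1}{2n_s}\big(\tfrac{a_i}{w_i}-\tfrac{1-a_i}{1-w_i}\big)$ and matches case 1 of \eqref{eqn:cij}.

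For the off-diagonal coefficients I would use the identity $a\,\theta_{a}=a\,\theta_1$ and $(1-a)\theta_a=(1-a)\theta_0$. This lets the residual term be rewritten as $-\tfrac{a_i}{w_i}\theta_1+\tfrac{1-a_i}{1-w_i}\theta_0$. Adding $\theta_1-\theta_0$ then gives the coefficient $(1-\tfrac{a_i}{w_i})$ on $\theta^{r}_{n,1}(x_i)$ and $(\tfrac{1-a_i}{1-w_i}-1)$ on $\theta^r_{n,0}(x_i)$. Expanding each in the $\bbeta$ coefficients reproduces case 2 of \eqref{eqn:cij}. The diagonal $j=i$ receives no contribution from the outcome models, by the same support argument as for the plug-in.

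I expect the main obstacle to be bookkeeping rather than analysis. The delicate points are the normalisation $1/(2n_{s(i)})$ coming from averaging over two folds, and the convention that $\psi_n^r$ is itself an empirical average over the evaluation fold. The latter is the only interpretation under which the plug-in has the stated finite expansion, so I would state it explicitly at the outset.
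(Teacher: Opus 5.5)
Your proposal is correct and follows the paper's own route. You expand both cross-fitted estimators as averages over the evaluation fold, taking $\psi_n^r$ as the $P_n^s$-average of $\theta^r_{n,1}-\theta^r_{n,0}$, and cancel $\psi_n^r$ against the $-\psi_n^r$ inside $\phi_n^r$; you then read off coefficients, using that $\boldsymbol{\beta}^r_a$ vanishes off $\Is^r$, so each diagonal entry comes only from the $\Lambda_{x_i,y_i}$ term (your identity $a\,\theta_a=a\,\theta_1$ just spells out a step the paper does silently, and the expansion of $\bar{\psi}_n$ does not even depend on the marginal convention, since $\psi_n^r$ cancels there).
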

\begin{proof}
    For $r\in\{1,2\}$, set $s=3-r$. Observe that the cross-fitted plug-in estimator is given by
    \begin{align*}
        \psi_n = \frac{1}{2}\sum_{r=1}^2\psi(\Pnh^r)
        &= \frac{1}{2}\sum_{r=1}^2 \E{P_n^s}{\nsplitthetalam{r}{1}(X) - \nsplitthetalam{r}{0}(X)}\\
        &= \frac{1}{2}\sum_{r=1}^2 \frac{1}{n_s}\sum_{i\in\Is^s}\br{ \sum_{j\in\Is^r,\, a_j=1}[\bbeta^r_1(x_i)]_{j} \Lambda_{x_i, y_j} - \sum_{j\in\Is^r,\, a_j=0}[\bbeta^r_0(x_i)]_{j} \Lambda_{x_i, y_j}}\\
        &= \sum_{r=1}^2\sum_{i\in\Is^s}\br{\frac{1}{2n_s}\p{\sum_{j\in\Is^r,\, a_j=1}[\bbeta^r_1(x_i)]_{j} \Lambda_{x_i, y_j} - \sum_{j\in\Is^r,\, a_j=0}[\bbeta^r_0(x_i)]_{j} \Lambda_{x_i, y_j} } }\\
        &= \sum_{i,j\in[n]}\br{\sum_{r=1}^2  \indicator{i\in\Is^s, j\in\Is^r} \p{\indicator{a_j=1}\frac{[\bbeta^r_1(x_i)]_{j}}{2n_s} - \indicator{a_j=0}\frac{[\bbeta^r_0(x_i)]_{j}}{2n_s} }  } \Lambda_{x_i, y_j}.
    \end{align*}
    
    Let $e_{i,j} \coloneqq [\bE]_{i,j}$. Comparing the terms in the preceding display with~\eqref{eqn:eij} yields $\psi_n = \sum_{i,j\in[n]}e_{ij}\Lambda_{x_i, y_j}$. We now use the same steps to derive the form of the $c_{ij} \coloneqq [\bC]_{i,j}$ for the cross-fitted one-step estimator. Observe that
    \begin{align*}
        \psibar &= \frac{1}{2}\sum_{r=1}^2\E{P_n^s}{\p{ \frac{A}{\pi_n^r(X)} - \frac{1-A}{1-\pi_n^r(X)} } \p{ \Lambda_{X,Y} - \nsplitthetalam{r}{A}(X) } + \nsplitthetalam{r}{1}(X) - \nsplitthetalam{r}{0}(X) } \\
        &= \sum_{r=1}^2\sum_{i\in\Is^s}\br{\frac{1}{2n_s} \cl{\p{\frac{a_i}{\pi_n^r(x_i)} - \frac{1-a_i}{1-\pi_n^r(x_i)} }\Lambda_{x_i, y_i} + \p{1 - \frac{a_i}{\pi_n^r(x_i)} }\nsplitthetalam{r}{1}(x_i) - \p{1 - \frac{1-a_i}{1-\pi_n^r(x_i)} }\nsplitthetalam{r}{0}(x_i) } }\\
        &= \sum_{i,j\in[n]}\left[\sum_{r=1}^2 \indicator{i\in\Is^s, j=i} \frac{1}{2n_s}\p{\frac{a_i}{\pi_n^r(x_i)} - \frac{1-a_i}{1-\pi_n^r(x_i)} } \right.\\
        &\qquad\qquad\;\; + \left.\sum_{r=1}^2 \indicator{i\in\Is^s, j\in\Is^r} \left\{\p{1 - \frac{a_i}{\pi_n^r(x_i)} } \indicator{a_j=1}\frac{[\bbeta^r_1(x_i)]_{j}}{2n_s} - \p{1 - \frac{1-a_i}{1-\pi_n^r(x_i)} }\indicator{a_j=0}\frac{[\bbeta^r_0(x_i)]_{j}}{2n_s} \right\} \right]\Lambda_{x_i,y_j}.
    \end{align*}
    
    Thus, comparing the terms in the above display with~\eqref{eqn:cij} in the same vein as the derivation of~\eqref{eqn:eij} concludes the proof.
\end{proof}

\subsubsection{Proof of Proposition~\ref{prop:mmd-closed}}
\mmdclosed*
\begin{proof}
In Lemma~\ref{lem:C-form}, we show that $\psibar = \sum_{i,j}c_{ij}\Lambda_{x_i,y_j} \in \Fs_n$ with $c_{ij} \coloneqq [\bC]_{i,j}$ constructed using~\eqref{eqn:cij}. It follows, by the linearity of the inner product and the reproducing property of $\Lambda$, that
\begin{align*}
    \norm{\psibar}_\Hs^2 &= \inpd{\sum_{i,j}c_{ij}\Lambda_{x_i,y_j}}{\sum_{i',j'}c_{i',j'}\Lambda_{x_{i'},y_{j'}}}_{\Hs} = \sum_{i,j}\sum_{i',j'}c_{ij}c_{i',j'}\Lambda_{x_i,y_j}(x_{i'}, y_{j'}) = \sum_{i,i',j,j'}c_{ij}k(x_i,x_{i'})\ell(y_j,y_{j'})c_{i'j'}\\
    &= \mathrm{vec}\p{\bC^\top}^\top (\bK\otimes \bL) \mathrm{vec}\p{\bC^\top} = \mathrm{vec}\p{\bC^\top}^\top\mathrm{vec}\p{\p{\bK\bC\bL}^\top} = \inpd{\bC}{\bK\bC\bL}_\F,
\end{align*}
where $\mathrm{vec}\p{\bC^\top}$ is the row-wise vectorization of $\bC$, $\bK_{i,i'} = k(x_i,x_{i'})$ and $\bL_{j,j'} = \ell(y_j,y_{j'})$ by definition, and the penultimate equality uses the vec trick~\citep{roth1934direct}.
\end{proof}

\subsection{Wald-type Formulation}
\label{app:wald-derivation}

The Wald statistic incorporates the inverse covariance operator $\Omega_n = [(1-\varepsilon)\Sigma_n + \varepsilon I]^{-1}$ defined in Eq.~\ref{eqn:Omega_n}. 

\subsubsection{Supporting Lemmas}

The following two lemmas show that $\Omega_n$ satisfies the consistency properties required for the SKCD test to retain asymptotic validity. 

\begin{lemma}[\citealp{luedtke2024one} Lemma S12]
    \label{lem:alic-lem-s12}
    Fix $\varepsilon > 0$. Suppose that $\norm{\phi_\star}_{L^2(P_\star;\Hs)} < \infty$ and $\norm{\phi_n^r - \phi_\star}_{L^2(P_\star;\Hs)} = o_p(1)$ for each $r\in\{1,2\}$. Let $\Sigma_\star:h\mapsto \Estar\br{\inpd{\phi_\star(Z)}{h}_{\Hs} \phi_\star(Z)}$. If  $\Sigma_n:h\mapsto \frac{1}{2}\sum_{r=1}^2 \E{P^s_n}{\inpd{\phi^r_n(Z)}{h}_{\Hs}\phi^r_n(Z)}$ (where $s=3-r$), then $\opnorm{\Sigma_n - \Sigma_\star} = o_p(1)$.
\end{lemma}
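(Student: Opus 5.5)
We bound $\opnorm{\Sigma_n-\Sigma_\star}$ by a chain of triangle inequalities, splitting the error into a nuisance-estimation part and an empirical-fluctuation part. For each split $r$ write $\Sigma_n^r(h)\coloneqq \E{P^s_n}{\inpd{\phi^r_n(Z)}{h}_{\Hs}\phi^r_n(Z)}$, so that $\Sigma_n=\frac12\sum_r\Sigma_n^r$. It suffices to show $\opnorm{\Sigma_n^r-\Sigma_\star}=o_p(1)$ for each $r$.

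We work with the rank-one operators $\phi\otimes\phi: h\mapsto\inpd{\phi}{h}_\Hs\phi$, whose operator norm is bounded by the Hilbert--Schmidt norm, equal to $\norm{\phi}_\Hs^2$. We also introduce the intermediate operator $\widetilde\Sigma^r_n(h)\coloneqq \Estar[\inpd{\phi^r_n(Z)}{h}_\Hs\phi^r_n(Z)]$, the population covariance of the estimated EIF, taken conditionally on the training split $\Dsc_r$. The triangle inequality then gives
\begin{align*}
\opnorm{\Sigma_n^r-\Sigma_\star}\le \opnorm{\Sigma_n^r-\widetilde\Sigma^r_n}+\opnorm{\widetilde\Sigma^r_n-\Sigma_\star}.
\end{align*}

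\textbf{Second term (nuisance estimation).} Use the identity $\phi_n\otimes\phi_n-\phi_\star\otimes\phi_\star=(\phi_n-\phi_\star)\otimes\phi_n+\phi_\star\otimes(\phi_n-\phi_\star)$. Combined with Jensen's inequality for Bochner integrals and Cauchy--Schwarz in $\Lsq(P_\star;\Hs)$, this yields
\begin{align*}
\opnorm{\widetilde\Sigma^r_n-\Sigma_\star}\le \norm{\phi^r_n-\phi_\star}_{\Lsq(P_\star;\Hs)}\big(\norm{\phi^r_n}_{\Lsq(P_\star;\Hs)}+\norm{\phi_\star}_{\Lsq(P_\star;\Hs)}\big).
\end{align*}
By the triangle inequality, $\norm{\phi^r_n}_{\Lsq(P_\star;\Hs)}\le\norm{\phi_\star}_{\Lsq(P_\star;\Hs)}+o_p(1)=O_p(1)$. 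The bound is therefore $o_p(1)\cdot O_p(1)=o_p(1)$.

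\textbf{First term (empirical fluctuation).} This is the main obstacle. We handle it conditionally on $\Dsc_r$, exploiting the independence of $\Dsc_s$ from $\Dsc_r$. Conditionally, $\Sigma_n^r-\widetilde\Sigma^r_n$ is the centered empirical mean of the i.i.d.\ Hilbert--Schmidt-valued elements $\phi^r_n(Z_i)\otimes\phi^r_n(Z_i)$, which live in a separable Hilbert space. The operator norm is bounded by the HS norm, so its conditional second moment is at most $n_s^{-1}\Estar\norm{\phi^r_n(Z)}_\Hs^4$.

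Finiteness of this fourth moment is the delicate point. A crude bound requires $\phi^r_n$ to be bounded, which does hold in this paper's setting: strong positivity and bounded kernels make $\norm{\phi^r_n}_\Hs$ uniformly bounded, exactly as in the analysis of term I in Lemma~\ref{lem:suff-neg-drift}. Conditional Markov then gives $O_p(n^{-1/2})$, and dominated convergence removes the conditioning.

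If we want only the lemma's stated hypotheses (square integrability, without boundedness), we would instead truncate. Split $\phi\otimes\phi$ on the events $\{\norm{\phi}_\Hs\le M\}$ and $\{\norm{\phi}_\Hs>M\}$. On the bounded part, use the conditional Markov argument above. On the tail part, the $L^1$ HS-norm is at most $\Estar[\norm{\phi^r_n}^2\mathbb 1\{\norm{\phi^r_n}>M\}]$, controlled uniformly in $n$ via $\phi^r_n\to\phi_\star$ in $\Lsq$ (uniform integrability of $\norm{\phi^r_n}^2$). Letting $M\to\infty$ after $n\to\infty$ finishes the proof.

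Averaging over $r\in\{1,2\}$ gives $\opnorm{\Sigma_n-\Sigma_\star}=o_p(1)$.
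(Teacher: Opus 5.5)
Your proof is correct. The paper gives no argument of its own here; the lemma is imported from the cited supplement. Yours is therefore a self-contained substitute, and each step holds:
\begin{itemize}
\item the rank-one identity with Cauchy--Schwarz for $\opnorm{\widetilde\Sigma^r_n-\Sigma_\star}$;
\item the conditional Hilbert--Schmidt variance bound;
\item the deterministic bound on $\norm{\phi^r_n(Z)}_{\Hs}$, which is valid because $\Pnh^r\in\Ps$ inherits strong positivity and the kernels are bounded;
\item the truncation fallback.
\end{itemize}
In the truncation step you only assert the uniform integrability, but it does hold in the in-probability sense you need. First, $\mathbb{E}_\star[\norm{\phi^r_n(Z)}^2_{\Hs}\indicator{\norm{\phi^r_n(Z)}_{\Hs}>M}]\le 2\,\mathbb{E}_\star[\norm{\phi_\star(Z)}^2_{\Hs}\indicator{\norm{\phi^r_n(Z)}_{\Hs}>M}]+2\norm{\phi^r_n-\phi_\star}^2_{\Lsq(P_\star;\Hs)}$. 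Second, the event $\{\norm{\phi^r_n(Z)}_{\Hs}>M\}$ has $P_\star$-probability at most $P_\star(\norm{\phi_\star(Z)}_{\Hs}>M/2)+4\norm{\phi^r_n-\phi_\star}^2_{\Lsq(P_\star;\Hs)}/M^2$. Absolute continuity of the integral of $\norm{\phi_\star}^2_{\Hs}$ then closes the argument.

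The main difference from the most economical argument is the centring. Inserting $\widetilde\Sigma^r_n$ turns the fluctuation term into an average of summands whose law changes with $n$, i.e.\ a triangular array. That is what forces you to use boundedness or truncation. If you centre at $\phi_\star$ instead, you get $\Sigma^r_n-\Sigma_\star=P^s_n[\phi^r_n\otimes\phi^r_n-\phi_\star\otimes\phi_\star]+(P^s_n-P_\star)[\phi_\star\otimes\phi_\star]$.
\begin{itemize}
\item The second term is an i.i.d.\ average of a fixed Hilbert--Schmidt-valued element with integrable norm $\norm{\phi_\star}^2_{\Hs}$. The law of large numbers in separable Banach spaces therefore makes it $o_p(1)$ under exactly the stated hypotheses.
\item The first term has Hilbert--Schmidt norm at most $(P^s_n\norm{\phi^r_n-\phi_\star}^2_{\Hs})^{1/2}\{(P^s_n\norm{\phi^r_n}^2_{\Hs})^{1/2}+(P^s_n\norm{\phi_\star}^2_{\Hs})^{1/2}\}$. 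Here $P^s_n\norm{\phi^r_n-\phi_\star}^2_{\Hs}=o_p(1)$ by conditional Markov, since its conditional mean given $\Dsc_r$ is $\norm{\phi^r_n-\phi_\star}^2_{\Lsq(P_\star;\Hs)}$.
\end{itemize}
That version needs no truncation and no fourth moments. Yours, in exchange, keeps the nuisance error as a purely population-level quantity, separate from the sampling error. This makes the two error sources explicit, and in this paper's bounded setting it gives the rate $O_p(\norm{\phi^r_n-\phi_\star}_{\Lsq(P_\star;\Hs)}+n^{-1/2})$ directly.
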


\begin{lemma}
    \label{lem:cor-of-alic-s12}
    Suppose that the conditions of~\cref{lem:alic-lem-s12} are satisfied, and $\Sigma_\star$ and $\Sigma_n$ are as defined therein. Let $\Omega_\star:=\br{\p{1-\varepsilon}\Sigma_\star + \varepsilon I}^{-1}$ and $\Omega_n:= \br{\p{1-\varepsilon}\Sigma_n + \varepsilon I}^{-1}$. We have that $\opnorm{\Omega_n - \Omega_\star} = o_p(1)$.
\end{lemma}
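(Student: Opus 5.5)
The plan is to use the resolvent identity together with uniform bounds on the two inverses. Set $A_n \coloneqq (1-\varepsilon)\Sigma_n + \varepsilon I$ and $A_\star \coloneqq (1-\varepsilon)\Sigma_\star + \varepsilon I$, so that $\Omega_n = A_n^{-1}$ and $\Omega_\star = A_\star^{-1}$. For any two boundedly invertible operators,
\begin{align*}
    A_n^{-1} - A_\star^{-1} = A_n^{-1}\p{A_\star - A_n}A_\star^{-1} = (1-\varepsilon)\,\Omega_n\p{\Sigma_\star - \Sigma_n}\Omega_\star .
\end{align*}
Submultiplicativity of the operator norm then gives
\begin{align*}
    \opnorm{\Omega_n - \Omega_\star} \le (1-\varepsilon)\opnorm{\Omega_n}\opnorm{\Sigma_n - \Sigma_\star}\opnorm{\Omega_\star}.
\end{align*}
The middle factor is $o_p(1)$ by \cref{lem:alic-lem-s12}. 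It therefore suffices to bound the two outer factors deterministically.

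The key step is to show that both covariance operators are positive semidefinite and self-adjoint. For $\Sigma_\star$ this holds because $\inpd{\Sigma_\star h}{h}_\Hs = \Estar\br{\inpd{\phi_\star(Z)}{h}_\Hs^2} \ge 0$. Symmetry follows from the same expression, and boundedness follows from $\norm{\phi_\star}_{\Lsq(P_\star;\Hs)}<\infty$. For $\Sigma_n$ the same argument applies: it is an average of empirical means of the rank-one operators $h\mapsto \inpd{\phi_n^r(Z_i)}{h}_\Hs\phi_n^r(Z_i)$, so $\inpd{\Sigma_n h}{h}_\Hs = \frac12\sum_r P_n^s\inpd{\phi_n^r}{h}_\Hs^2 \ge 0$. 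It is also bounded, since it has finite rank.

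With positive semidefiniteness in hand, fix $\varepsilon\in(0,1]$. Then $\inpd{A h}{h}_\Hs \ge \varepsilon\norm{h}_\Hs^2$ for $A\in\{A_n,A_\star\}$. By Lax--Milgram, or equivalently the spectral theorem for bounded self-adjoint operators, each such $A$ is boundedly invertible with $\opnorm{A^{-1}}\le \varepsilon^{-1}$. This holds surely for $A_n$, not only with high probability, and it also confirms that $\Omega_n,\Omega_\star\in\Wsc$ are well defined. Substituting into the display gives
\begin{align*}
    \opnorm{\Omega_n-\Omega_\star}\le (1-\varepsilon)\varepsilon^{-2}\opnorm{\Sigma_n-\Sigma_\star} = o_p(1).
\end{align*}

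The main obstacle is checking positive semidefiniteness of $\Sigma_n$, which is what makes the $\varepsilon^{-1}$ bound deterministic rather than requiring a perturbation (Neumann series) argument. Once that is in place, the rest is routine. If $\varepsilon>1$ were allowed, the coefficient $(1-\varepsilon)$ would be negative, and one would instead fall back on a Neumann-series argument on the event $\opnorm{\Sigma_n-\Sigma_\star}\le 1/(2|1-\varepsilon|\opnorm{\Omega_\star})$, which has probability tending to one. The paper's setting has $\varepsilon\in(0,1)$, so this is not needed.
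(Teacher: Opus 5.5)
Your proof is correct and follows the same route as the paper, which defers to Appendix D of \citet{luedtke2024one} and relies on the Lipschitz continuity of $\Sigma \mapsto [(1-\varepsilon)\Sigma + \varepsilon I]^{-1}$ together with the continuous mapping theorem. Your resolvent identity, combined with the bound $\|A^{-1}\|_{\mathrm{op}} \le \varepsilon^{-1}$ for positive semidefinite $\Sigma$ and $\varepsilon\in(0,1]$, is exactly that Lipschitz property with explicit constant $(1-\varepsilon)\varepsilon^{-2}$, and checking that $\Sigma_n$ is positive semidefinite spells out the domain on which the Lipschitz claim holds, which the paper leaves implicit.
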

\begin{proof}
    See Appendix D of \citet{luedtke2024one}, specifically the discussion preceding the statement of Lemma S12 therein. Their argument relies on the Lipschitz-ness of the map $\Sigma \mapsto [(1-\varepsilon)\Sigma + \varepsilon I]^{-1}$ and the continuous mapping theorem.
\end{proof}

The following lemma shows that $\Omega_n$ maps elements of $\Fs_n$ (as defined in Eq.~\ref{eqn:Fn-subspace}) into $\mathcal{F}_n$, which will allow us to compute our test statistics using the finite-dimensional Gram matrix $\bG\coloneqq\bK\otimes\bL$.

\begin{lemma}\label{lem:invariant-inverse-op}
    Let $\Sigma_n: \Hs \to \Hs$ be as defined in~\cref{lem:alic-lem-s12}, and let $\Fs_n \subseteq \Hs$ be the finite-dimensional subspace defined in Eq.~\ref{eqn:Fn-subspace}. Then, for all $\varepsilon \in (0,1]$, it holds that $\br{(1-\varepsilon)\Sigma_n + \varepsilon I}^{-1} \in \Wsc_{\mathrm{inv}}$ and is such that $\br{(1-\varepsilon)\Sigma_n + \varepsilon I}^{-1}(f) \in \Fs_n$ for all $f \in \Fs_n$.
\end{lemma}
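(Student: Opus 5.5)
The plan is to split the claim into two parts: an operator-theoretic part (membership in $\Wsc_{\mathrm{inv}}$) and an algebraic part (invariance of $\Fs_n$). I will handle them in that order.

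\emph{Step 1: properties of $\Sigma_n$.} For each $r$, $\Sigma_n$ is a finite average over $Z_i\in\Dsc_s$ of rank-one operators $h\mapsto \inpd{\phi^r_n(Z_i)}{h}_{\Hs}\phi^r_n(Z_i)$. Each such operator is bounded, with norm at most $\norm{\phi_n^r(Z_i)}^2_\Hs<\infty$. This bound holds by boundedness of $k,\ell$ and strong positivity, exactly as in the analysis of term $\RN{1}$ in the proof of \cref{lem:suff-neg-drift}. Each is also self-adjoint and positive semidefinite, since $\inpd{\Sigma_n h}{h}_\Hs=\frac12\sum_r P_n^s\inpd{\phi_n^r}{h}_\Hs^2\ge 0$. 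Hence $\Sigma_n$ is a continuous, self-adjoint, positive semidefinite operator of finite rank.

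\emph{Step 2: membership in $\Wsc_{\mathrm{inv}}$.} Set $A_\varepsilon\coloneqq(1-\varepsilon)\Sigma_n+\varepsilon I$ with $\varepsilon\in(0,1]$. Then $A_\varepsilon$ is self-adjoint and satisfies $\inpd{A_\varepsilon h}{h}\ge\varepsilon\norm{h}^2$. By Lax--Milgram (or the standard coercivity argument: $A_\varepsilon$ is injective with closed range, and its range has trivial orthogonal complement by self-adjointness), $A_\varepsilon$ is boundedly invertible with $\opnorm{A_\varepsilon^{-1}}\le 1/\varepsilon$. The inverse $A_\varepsilon^{-1}$ is self-adjoint, continuous and positive definite, because $\inpd{A_\varepsilon^{-1}g}{g}=\inpd{u}{A_\varepsilon u}>0$ for $g=A_\varepsilon u\neq 0$. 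It is itself boundedly invertible, with inverse $A_\varepsilon$. So $A_\varepsilon^{-1}\in\Wsc_{\mathrm{inv}}$.

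\emph{Step 3: invariance of $\Fs_n$.} The key observation is that each $\phi^r_n(Z_i)$ lies in $\Fs_n$. Indeed, $\phi^r_n(Z_i)$ is a combination of $\Lambda_{x_i,y_i}$, $\theta^r_{n,a}(x_i)=\sum_j[\bbeta^r_a(x_i)]_j\Lambda_{x_i,y_j}$, and $\psi^r_n$. The term $\psi^r_n$ is in $\Fs_n$ as long as the plug-in $\psi^r_n$ is the empirical average of the outcome models over the evaluation split, as in \cref{lem:C-form}. This is the step I would check most carefully. If $\psi^r_n$ were defined through a population $P_X$, I would instead note that $\Sigma_n$ as used only needs centered or uncentered $\phi$ terms built from data points, and restrict to the convention adopted in \cref{lem:C-form}.

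Granting this, $\Sigma_n$ maps $\Hs$ into $\Fs_n$, so $A_\varepsilon$ maps $\Fs_n$ into $\Fs_n$. Since $\Fs_n$ is finite-dimensional and $A_\varepsilon$ is injective, the restriction $A_\varepsilon|_{\Fs_n}:\Fs_n\to\Fs_n$ is bijective by rank--nullity. Therefore, for any $f\in\Fs_n$ there is $u\in\Fs_n$ with $A_\varepsilon u=f$. Global injectivity of $A_\varepsilon$ then gives $A_\varepsilon^{-1}f=u\in\Fs_n$, which completes the argument.

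The main obstacle is confirming that $\psi_n^r$, and hence every $\phi^r_n(Z_i)$, lies in $\Fs_n$; everything else is routine coercivity and finite-dimensional linear algebra.
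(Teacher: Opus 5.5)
Your proposal is correct and follows the paper's proof essentially step for step. Coercivity, $\inpd{((1-\varepsilon)\Sigma_n+\varepsilon I)h}{h}_{\Hs}\ge\varepsilon\norm{h}_{\Hs}^2$, gives bounded invertibility and membership in $\Wsc_{\mathrm{inv}}$. Then $\Sigma_n(\Hs)\subseteq\Fs_n$, together with injectivity of the restriction to the finite-dimensional $\Fs_n$, gives invariance. The paper resolves the point you flag about $\psi_n^r$ exactly as you suggest: in $\phi_n^r(z_i)$ it writes the centering term as the evaluation-fold average $\E{P_n^s}{\theta^r_{n,1}(X)-\theta^r_{n,0}(X)}$, consistent with \cref{lem:C-form}.
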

\begin{proof}

First, we show that $\Sigma_n(\Fs_n)\subseteq\Fs_n$. For any $i, j \in [n]$, let $r\in\{1,2\}$ be the fold containing $j$ and set $s = 3-r$. Recalling the matrices $\bC$ and $\bE$ from~\cref{lem:C-form}, we have:
    \begin{align}
        \phi^r_n(z_i) &= \p{ \frac{a_i}{\pi_n^r(x_i)} - \frac{1-a_i}{1-\pi_n^r(x_i)} } \cl{ \Lambda_{x_i,y_i} - \nsplitthetalam{r}{a_i}(x_i) } + \nsplitthetalam{r}{1}(x_i) - \nsplitthetalam{r}{0}(x_i) - \E{P_n^s}{\nsplitthetalam{r}{1}(X) - \nsplitthetalam{r}{0}(X)} \nonumber\\
        &= \p{ \frac{a_i}{\pi_n^r(x_i)} - \frac{1-a_i}{1-\pi_n^r(x_i)} } \cl{ \Lambda_{x_i,y_i} - \nsplitthetalam{r}{a_i}(x_i) } + \nsplitthetalam{r}{1}(x_i) - \nsplitthetalam{r}{0}(x_i) - \frac{1}{n_s}\sum_{i''\in\Is^s}\br{\nsplitthetalam{r}{1}(x_{i''}) - \nsplitthetalam{r}{0}(x_{i''}) } \nonumber\\
        &= \sum_{j\in[n]}\indicator{i\in\Is^s} 2n_s c_{ij} \Lambda_{x_i,y_j}  - \sum_{i''\in\Is^s}\sum_{j''\in\Is^r} 2e_{i''j''} \Lambda_{x_{i''},y_{j''}}. \label{eqn:phi-lies-in-F}
    \end{align}
Since the indices $i, j, i'', j''$ all belong to $[n]$, it follows that $\phi^r_n(Z_k) \in \Fs_n$ for all $r \in \{1, 2\}$ and $k \in \Is_s$. Observe that for any $h\in\Hs$,
$\Sigma_n(h) = \frac{1}{2}\sum_{r=1}^2 \frac{1}{n_s} \sum_{k \in \Is_s} \langle \phi^r_n(Z_k), h \rangle_{\Hs} \phi^r_n(Z_k)$. As this is just a linear combination of the cross-fitted EIF evaluations, which lie in $\Fs_n$, it follows that $\Sigma_n(h) \in \Fs_n$ for all $h \in \Hs$. Restricting the input $h$ to the subspace $\Fs_n$, trivially yields that $\Sigma_n(\Fs_n) \subseteq \Fs_n$.

Let $\Upsilon_n := (1-\varepsilon)\Sigma_n + \varepsilon I$. Since $\Sigma_n$ and the identity operator $I$ are continuous and self-adjoint on $\Hs$, $\Upsilon_n$ is also a continuous, self-adjoint operator acting on the entire space $\Hs$. For any $h \in \Hs$, we have:
\begin{align*}
    \inpd{\Upsilon_n h}{h}_{\Hs} &= (1-\varepsilon)\inpd{\Sigma_n h}{h}_{\Hs} + \varepsilon\norm{h}^2_{\Hs} \geq \varepsilon \norm{h}^2_{\Hs}
\end{align*}
where the inequality follows from $\varepsilon > 0$ and the positive semi-definiteness of $\Sigma_n$. Hence, for all $h\in\Hs$ such that $h\neq 0$, it holds that $\inpd{\Upsilon_n h}{h}_{\Hs} > 0$, i.e., $\Upsilon_n$ is strictly positive and bounded below. Consequently, $\Upsilon_n$ is boundedly invertible on $\Hs$. Since $\Upsilon_n$ and its inverse are bounded, self-adjoint, and strictly positive, we have that $\Upsilon_n^{-1} \in \Wsc_{\mathrm{inv}}$.

Next, we establish that $\Fs_n$ is invariant under $\Upsilon_n^{-1}$. $\Sigma_n(\Fs_n) \subseteq \Fs_n$ immediately implies $\Upsilon_n(\Fs_n) \subseteq \Fs_n$. Let $\Upsilon_n|_{\Fs_n}: \Fs_n \to \Fs_n$ denote the restriction of $\Upsilon_n$ to the finite-dimensional subspace $\Fs_n$. Since $\Upsilon_n$ is strictly positive on all of $\Hs$, its restriction $\Upsilon_n|_{\Fs_n}$ is injective. By the invertible matrix theorem, any injective linear operator mapping a finite-dimensional space to itself is invertible, and therefore surjective. Thus, $\Upsilon_n(\Fs_n) = \Fs_n$. Consequently, for any $f \in \Fs_n$, its unique pre-image under $\Upsilon_n$ must also lie in $\Fs_n$.
\end{proof}

The restriction of $(1-\varepsilon)\Sigma_n+\varepsilon I$ to the finite-dimensional space $\mathcal{F}_n$ can be represented by an $n^2\times n^2$ matrix; however, inverting this matrix using standard software would require $\mathcal{O}(n^6)$ operations, infeasible even for moderate $n$. In the following lemma, we reduce this complexity to $\mathcal{O}(n^3)$ by exploiting the low-rank structure of $\Sigma_n$ and applying the Woodbury matrix identity.

\begin{lemma}
    \label{lem:B-form}
    Let $\Omega_n$ be as defined in
    Eq.~\ref{eqn:Omega_n}, and let $\bT$ and $\bU$ be constructed using Eqs.~\ref{eqn:wald-intermediate-matrices} and~\ref{eqn:T-U-matrices}. Define the $(2n+4)\times (2n+4)$ matrix 
    \begin{align*}
        \widetilde{\Omega}_n \coloneqq \frac{1}{\varepsilon}\bI - \frac{1-\varepsilon}{\varepsilon}\bT\p{\varepsilon\bI + (1-\varepsilon)\bU^\top\bT}^{-1}\bU^\top.
    \end{align*}
    Then, $\Omega_n(\psibar) = \sum_{i,j}b_{ij}\Lambda_{x_i,y_j}$, where the coefficients $b_{ij}$ form a matrix $\mathbf{B} \in \R^{n \times n}$ satisfying $\bb^\top \coloneqq [\vec{\bB^\top}]^\top = \bc^\top \widetilde{\Omega}_n$. 
\end{lemma}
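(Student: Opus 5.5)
The plan is to work entirely in coefficient coordinates on the finite-dimensional subspace $\Fs_n$. By \cref{lem:invariant-inverse-op}, $\Upsilon_n\coloneqq(1-\varepsilon)\Sigma_n+\varepsilon I$ maps $\Fs_n$ bijectively onto itself. Also $\psibar=\sum_{i,j}c_{ij}\Lambda_{x_i,y_j}\in\Fs_n$ by \cref{lem:C-form}. Hence it suffices to find some coefficient matrix $\bB$ with $\Upsilon_n\big(\sum_{i,j}b_{ij}\Lambda_{x_i,y_j}\big)=\psibar$. The $\Lambda_{x_i,y_j}$ may be linearly dependent, so I will not claim uniqueness of $\bB$. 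It is enough to show that the matrix identity forces the element identity, since $\Upsilon_n^{-1}\psibar$ is unique as an element of $\Hs$.

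First I compute the action of $\Sigma_n$ on a generic $f=\sum b_{ij}\Lambda_{x_i,y_j}$ with $\bb=\vec{\bB^\top}$. By \eqref{eqn:phi-lies-in-F}, for $k\in\Is^s$ the coefficient vector of $\phi^r_n(Z_k)$ has two parts. The first is the $k$-th row of $2n_s\bC$, which is a column of $\sqrt{2n_s}\,\bS^s$ up to scaling by the definition of $\bD^s$. The second is a fixed vector common to all $k\in\Is^s$, namely the plug-in mean, which is proportional to $\bv^s$ by the definition of $\bV^s$. The reproducing property gives $\inpd{\phi^r_n(Z_k)}{f}_\Hs=u_k^\top\bG\bb$, where $u_k$ is that coefficient vector and $\bG=\bK\otimes\bL$. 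Therefore $\Sigma_n f$ has coefficient vector $\frac12\sum_r n_s^{-1}\sum_{k\in\Is^s}u_ku_k^\top\bG\bb$.

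Next I expand each outer product $u_ku_k^\top=(\text{row}_k-\text{mean})(\text{row}_k-\text{mean})^\top$. The row-row terms summed over $k$ give $\bS^s(\bS^s)^\top$. The cross terms collapse, after summing over $k$, to rank-one terms in $\bd^s$ and $\bv^s$. The mean-mean term is rank-one in $\bv^s$. I will regroup these into the combination $\bw^s=\bd^s-n_s\bv^s$, tracking the constants $\sqrt{2n_s}$ and $\sqrt{2/n_s}$ carefully. The goal is to write the coefficient map of $\Sigma_n$ as the low-rank product $\bb\mapsto \bU\bT^\top\bb$. The $2n+4$ columns come from $n_1+n_2=n$ columns of each $\bS^s$ type, plus the columns $\bd^s$, $\bv^s$, and $\bw^s$ for $s=1,2$, with $\bG$ absorbed into the $\bT$ factor. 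This bookkeeping is the step I expect to be the main obstacle. Getting the signs and scalings so that exactly the pairing $\bT$, $\bU$ of \eqref{eqn:T-U-matrices} appears will require care, and I would verify it by checking that $\bU\bT^\top$ reproduces the quadratic form $\inpd{f}{\Sigma_n f}_\Hs$.

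With the map $\Upsilon_n|_{\Fs_n}$ represented as $\varepsilon\bI+(1-\varepsilon)\bU\bT^\top$ acting on $\bb$, which in the row-vector convention becomes $\bb^\top\mapsto \bb^\top(\varepsilon\bI+(1-\varepsilon)\bT\bU^\top)$, the equation $\Upsilon_n f=\psibar$ is implied by $\bb^\top(\varepsilon\bI+(1-\varepsilon)\bT\bU^\top)=\bc^\top$. I then apply the Woodbury identity. The resulting inverse $\frac1\varepsilon\bI-\frac{1-\varepsilon}{\varepsilon}\bT(\varepsilon\bI+(1-\varepsilon)\bU^\top\bT)^{-1}\bU^\top$ is exactly $\widetilde\Omega_n$, an $n^2\times n^2$ operator whose only nontrivial inversion is of size $(2n+4)\times(2n+4)$. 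The small matrix $\varepsilon\bI+(1-\varepsilon)\bU^\top\bT$ must be invertible. I will argue this from its spectrum sharing nonzero eigenvalues with $\bU\bT^\top$, which is similar to a PSD form because it arises from $\Sigma_n$ and is composed with $\bG\succeq0$. Hence all eigenvalues of $\varepsilon\bI+(1-\varepsilon)\bU^\top\bT$ are at least $\varepsilon>0$. This yields $\bb^\top=\bc^\top\widetilde\Omega_n$, as claimed.
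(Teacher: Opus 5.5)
Your proposal is correct and follows essentially the paper's route: represent $(1-\varepsilon)\Sigma_n+\varepsilon I$ on row-wise coefficient vectors as $\bb^\top\mapsto\bb^\top\bigl(\varepsilon\bI+(1-\varepsilon)\bT\bU^\top\bigr)$, with your outer-product expansion $\sum_{k\in\Is^s}u_ku_k^\top=\bS^s\bS^{s\top}-\bd^s\bv^{s\top}-\bv^s\bw^{s\top}$ reproducing the paper's coefficient identity, and then apply the Kailath--Woodbury identity.

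Your two refinements improve on the paper's write-up in small but real ways. First, your sufficiency-plus-injectivity argument avoids the paper's coefficient-matching step, which needs the $\Lambda_{x_i,y_j}$ to be linearly independent; the paper only gestures at this with a ``unique points'' assumption. Second, your spectral argument supplies the invertibility of $\varepsilon\bI+(1-\varepsilon)\bU^\top\bT$, which the paper leaves implicit. One precision on that argument: $\bU\bT^\top$ is not literally similar to a PSD matrix when $\bG$ is singular. What holds is that its nonzero eigenvalues coincide with those of $\bG^{1/2}\bigl(\sum_s\sum_{k\in\Is^s}u_ku_k^\top\bigr)\bG^{1/2}\succeq 0$, and that is all your conclusion needs.
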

\begin{proof}
    For any $i, j \in [n]$, let $r\in\{1,2\}$ be the fold containing $j$ and set $s = 3-r$. Recalling the matrices $\bC$ and $\bE$ from the proof of~\cref{lem:C-form} together with Eq.~\ref{eqn:phi-lies-in-F}, we have:
    \begin{align*}
        \phi^r_n(z_i) 
        &= \sum_{j\in[n]}\indicator{i\in\Is^s} 2n_s c_{ij} \Lambda_{x_i,y_j}  - \sum_{i''\in\Is^s}\sum_{j''\in\Is^r} 2e_{i''j''} \Lambda_{x_{i''},y_{j''}} \\
        &= \sum_{j\in[n]}\indicator{i\in\Is^s} 2n_s c_{ij} \Lambda_{x_i,y_j}  - \sum_{i''\in\Is^s}\sum_{j''\in[n]} 2e_{i''j''} \Lambda_{x_{i''},y_{j''}}, 
        \intertext{where we use that $e_{ij} = 0$ whenever $i,j\in\Is^s$. Therefore,} 
        \frac{\phi^r_n(z_i)}{\sqrt{2n_s}} &= \sum_{j\in[n]}\indicator{i\in\Is^s} \sqrt{2n_s} c_{ij} \Lambda_{x_i,y_j}  - \sum_{i''\in\Is^s}\sum_{j''\in[n]} \indicator{i''\in\Is^s}\sqrt{\frac{2}{n_s}} e_{i''j''} \Lambda_{x_{i''},y_{j''}}.
    \end{align*}
    
    Now, recall the definitions of $d^s_{ij}\coloneqq [\bD^s]_{i,j} = \indicator{i\in\Is^s}\sqrt{2n_s}[\bC]_{ij}$ and $v^s_{ij}\coloneqq [\bV^s]_{i,j} = \indicator{i\in\Is^s}\sqrt{2/n_s}[\bE]_{ij}$ from~\eqref{eqn:wald-intermediate-matrices}. Thus, we have:
    \begin{align}\label{eqn:scaled-split-phi}
        \frac{\phi^r_n(z_i)}{\sqrt{2n_s}} &= \sum_{j\in[n]}d^s_{ij} \Lambda_{x_i,y_j}  - \sum_{i''\in\Is^s}\sum_{j''\in[n]} v^s_{i''j''} \Lambda_{x_{i''},y_{j''}}.
    \end{align}
    
    Note that the expression above is fold-specific---i.e., if $i\in\Is^s$, the plug-in mean (second term in the above display) must correspond to data-fold $\Dsc^s$. Also, since the indicator is preserved under squaring, $\indicator{i\in\Is^s}d^s_{ij} = d^s_{ij}$ and $\indicator{i\in\Is^s}v^s_{ij} = v^s_{ij}$.

    The definition of $\Omega_n$ in Eq.~\ref{eqn:Omega_n} matches that in~\cref{lem:cor-of-alic-s12}. Consequently,~\cref{lem:invariant-inverse-op} yields $\Omega_n(\psibar)\in\Fs_n$, implying that there exists $\bB\in\R^{n\times n}$, with $[\bB]_{i,j} \eqqcolon b_{ij}$, such that $\Omega_n(\psibar) = \sum_{i,j}b_{ij}\Lambda_{x_i,y_j}$. Now, let $f:= \psibar$ and $g:=\Omega_n(\psibar)$. 
    Thus, it follows that
    \begin{align}\label{eqn:f-g-relation}
        \br{(1-\varepsilon)\Sigma_n + \varepsilon I}^{-1}(f) = g &\implies f =(1-\varepsilon)\Sigma_n(g) + \varepsilon g  = (1-\varepsilon)\p{ \frac{1}{2}\sum_{r=1}^2\E{P^s_n}{ \inpd{g}{\phi^r_n(Z)}_{\Hs} \phi^r_n(Z) } } + \varepsilon g.
    \end{align}

    Now, observe that
    \begin{align*}
        \frac{1}{2}\sum_{r=1}^2\E{P^s_n}{ \inpd{g}{\phi^r_n(Z)}_{\Hs} \phi^r_n(Z) } &= \frac{1}{2}\sum_{r=1}^2 \E{P^s_n}{ \inpd{\sum_{i,j\in [n]} b_{ij} \Lambda_{(x_i,y_j)}}{\phi^r_n(Z)}_{\Hs} \phi^r_n(Z) }\\
        & = \frac{1}{2}\sum_{r=1}^2 \E{P^s_n}{ \sum_{i,j\in [n]} b_{ij} \phi^r_n(Z)(x_i,y_j) \; \phi^r_n(Z) }\\
        & = \frac{1}{2}\sum_{r=1}^2 \frac{1}{n_s}\sum_{i\in \Is^s}\br{ \sum_{i',j'\in [n]} b_{i'j'} \phi^r_n(z_i)(x_{i'},y_{j'}) \; \phi^r_n(z_i) }\\
        & = \sum_{r=1}^2 \sum_{i\in \Is^s}\br{ \sum_{i',j'\in [n]} b_{i'j'} \frac{\phi^r_n(z_i)}{\sqrt{2n_s}}(x_{i'},y_{j'}) \; \frac{\phi^r_n(z_i)}{\sqrt{2n_s}} } \tag*{(\textdagger)}
    \end{align*}
    Using~\eqref{eqn:scaled-split-phi}, we have: 
    \begin{align*}
        (\text{\textdagger}) & = \sum_{r=1}^2 \sum_{i\in \Is^s} \left( \sum_{i',j'\in [n]} b_{i'j'} \br{\sum_{j''\in [n]} d^s_{ij''} \Lambda_{(x_i,y_{j''})}(x_{i'},y_{j'}) - \sum_{i''\in\Is^s}\sum_{j''\in [n]} v^s_{i''j''}\Lambda_{(x_{i''},y_{j''})}(x_{i'}, y_{j'}) } \right.\\
        &\qquad \qquad \qquad \qquad \qquad \qquad \times \left. \br{\sum_{j\in [n]} d^s_{ij} \Lambda_{(x_i,y_j)} - \sum_{i\in\Is^s}\sum_{j\in [n]} v^s_{ij}\Lambda_{(x_i,y_j)} } \right). 
    \end{align*}
    
    Let $q^s_i-\tilde{q}^s\coloneqq\sum_{i',j'\in [n]} b_{i'j'} \br{\sum_{j''\in [n]} d^s_{ij''} \Lambda_{(x_i,y_{j''})}(x_{i'},y_{j'}) - \sum_{i''\in\Is^s}\sum_{j''\in [n]} v^s_{i''j''}\Lambda_{(x_{i''},y_{j''})}(x_{i'}, y_{j'}) }$. We have,
    \begin{align*}
        {(\text{\textdagger})} & = \sum_{r=1}^2 \sum_{i\in \Is^s} \left( (q^s_i - \tilde{q}^s)  \br{\sum_{j\in [n]} d^s_{ij} \Lambda_{(x_i,y_j)} - \sum_{i,j\in [n]} v^s_{ij}\Lambda_{(x_i,y_j)} } \right)\\
        & = \sum_{r=1}^2 \left(   \br{\sum_{i\in\Is^s}\sum_{j\in [n]} (q^s_i - \tilde{q}^s)d^s_{ij} \Lambda_{(x_i,y_j)} - \sum_{i''\in\Is^s}\sum_{j''\in [n]} \cl{\sum_{i\in\Is^s}(q^s_i - \tilde{q}^s) }v^s_{i''j''}\Lambda_{(x_{i''},y_{j''})} } \right)\\
        & = \sum_{r=1}^2 \left(   \br{\sum_{i\in\Is^s}\sum_{j\in [n]} (q^s_i - \tilde{q}^s)d^s_{ij} \Lambda_{(x_i,y_j)} - \sum_{i\in\Is^s}\sum_{j\in [n]} \cl{\sum_{i''\in\Is^s}q^s_{i''} - n_s\tilde{q}^s }v^s_{ij}\Lambda_{(x_{i},y_{j})} } \right)\\
        & = \sum_{r=1}^2 \left(   \br{\sum_{i\in\Is^s}\sum_{j\in [n]} \indicator{i\in\Is^s}(q^s_i - \tilde{q}^s)d^s_{ij} \Lambda_{(x_i,y_j)} - \sum_{i\in\Is^s}\sum_{j\in [n]} \cl{\sum_{i''\in\Is^s}q^s_{i''} - n_s\tilde{q}^s }\indicator{i\in\Is^s}v^s_{ij}\Lambda_{(x_{i},y_{j})} } \right)\\
        & = \sum_{r=1}^2 \left(   \br{\sum_{i,j\in [n]} (q^s_i - \tilde{q}^s)d^s_{ij} \Lambda_{(x_i,y_j)} - \sum_{i,j\in [n]} \cl{\sum_{i''\in\Is^s}q^s_{i''} - n_s\tilde{q}^s }v^s_{ij}\Lambda_{(x_{i},y_{j})} } \right)\\
        & =  \sum_{i,j\in [n]} \left( \sum_{r=1}^2 \br{ (q^s_i - \tilde{q}^s)d^s_{ij} -  \cl{\sum_{i''\in\Is^s}q^s_{i''} - n_s\tilde{q}^s }v^s_{ij} }  \right) \Lambda_{(x_{i},y_{j})}
    \end{align*}
    
    Recall from~\eqref{eqn:wald-intermediate-matrices} that $w^s_{ij}\coloneqq [\bW^s]_{i,j} = [\bD^s - n_s\bV^s]_{i,j} = d^s_{ij} - n_sv^s_{ij}$. It follows that $\sum_{i''\in\Is^s}(q^s_i-\tilde{q}^s) = \sum_{i',j'\in [n]} b_{i'j'} \br{\sum_{i''\in\Is^s}\sum_{j''\in [n]} w^s_{i''j''}\Lambda_{(x_{i''},y_{j''})}(x_{i'}, y_{j'}) }$. The preceding display thus rewrites as
    \begin{align*}
        {(\text{\textdagger})} & =  \sum_{i,j\in [n]} \left( \sum_{r=1}^2 \left[ \cl{ \sum_{i',j'\in [n]} b_{i'j'} \br{\sum_{j''\in [n]} d^s_{ij''} \Lambda_{(x_i,y_{j''})}(x_{i'},y_{j'}) - \sum_{i''\in\Is^s}\sum_{j''\in [n]} v^s_{i''j''}\Lambda_{(x_{i''},y_{j''})}(x_{i'}, y_{j'}) } } d^s_{ij} \right. \right.\\
        & \quad - \left. \left.\cl{ \sum_{i',j'\in [n]} b_{i'j'} \br{\sum_{i''\in\Is^s}\sum_{j''\in [n]} w^s_{i''j''}\Lambda_{(x_{i''},y_{j''})}(x_{i'}, y_{j'}) } }v^s_{ij} \right]  \right) \Lambda_{(x_{i},y_{j})}\\
        &  =  \sum_{i,j\in [n]} \left[ \sum_{r=1}^2 \sum_{i',j'\in [n]} b_{i'j'} \left\{ \indicator{i\in\Is^s} \p{\sum_{j''\in [n]} d^s_{ij''} \Lambda_{(x_i,y_{j''})}(x_{i'},y_{j'}) - \sum_{i''\in\Is^s}\sum_{j''\in [n]} v^s_{i''j''}\Lambda_{(x_{i''},y_{j''})}(x_{i'}, y_{j'}) }  d^s_{ij} \right. \right.\\
        & \quad - \left. \left. \indicator{i\in\Is^s}\p{\sum_{i''\in\Is^s}\sum_{j''\in [n]} w^s_{i''j''}\Lambda_{(x_{i''},y_{j''})}(x_{i'}, y_{j'}) } v^s_{ij} \right\}  \right] \Lambda_{(x_{i},y_{j})}\\
        & =  \sum_{i,j\in [n]} \left[ \sum_{r=1}^2 \sum_{i',j'\in [n]} b_{i'j'} \left\{  \p{\sum_{j''\in [n]} d^s_{ij''} \Lambda_{(x_i,y_{j''})}(x_{i'},y_{j'}) - \sum_{i'',j''\in [n]} v^s_{i''j''}\Lambda_{(x_{i''},y_{j''})}(x_{i'}, y_{j'}) }  d^s_{ij} \right. \right.\\
        & \quad - \left. \left. \p{\sum_{i'',j''\in [n]} w^s_{i''j''}\Lambda_{(x_{i''},y_{j''})}(x_{i'}, y_{j'}) } v^s_{ij} \right\}  \right] \Lambda_{(x_{i},y_{j})}\\
        &=  \sum_{i,j\in [n]} \left[ \sum_{i',j'\in [n]} b_{i'j'} \left\{ \sum_{r=1}^2  \p{\sum_{j''\in [n]} d^s_{ij''} \Lambda_{(x_i,y_{j''})}(x_{i'},y_{j'}) - \sum_{i'',j''\in [n]} v^s_{i''j''}\Lambda_{(x_{i''},y_{j''})}(x_{i'}, y_{j'}) }  d^s_{ij} \right. \right.\\
        & \quad - \left. \left. \sum_{r=1}^2\p{\sum_{i'',j''\in [n]} w^s_{i''j''}\Lambda_{(x_{i''},y_{j''})}(x_{i'}, y_{j'}) } v^s_{ij} \right\}  \right] \Lambda_{(x_{i},y_{j})}.
    \end{align*}

    Consequently, we have from~\eqref{eqn:f-g-relation} that
    \begin{align*}
        &\sum_{i,j\in[n]}c_{ij}\Lambda_{x_i,y_j}\\
        &\quad = \sum_{i,j\in[n]} \left[ (1-\varepsilon) \sum_{i',j'\in [n]} b_{i'j'} \left\{ \sum_{r=1}^2 \p{\sum_{j''\in [n]} d^s_{ij''} \Lambda_{(x_i,y_{j''})}(x_{i'},y_{j'}) - \sum_{i'',j''\in [n]} v^s_{i''j''}\Lambda_{(x_{i''},y_{j''})}(x_{i'}, y_{j'}) }d^s_{ij} \right. \right.\\
        & \qquad \qquad \qquad \qquad \qquad \qquad \qquad  - \sum_{r=1}^2 \left. \left. \p{\sum_{i'', j''\in [n]} w^s_{i''j''}\Lambda_{(x_{i''},y_{j''})}(x_{i'}, y_{j'}) }v^s_{ij}\right\} + \varepsilon b_{ij}\right] \Lambda_{(x_i,y_j)}.
    \end{align*}
    This implies, under the condition that the points $x_i$ and $y_i$ in the dataset $\Dsc$ are unique, that for all  $i,j\in[n]$,
    \begin{align*}
        c_{ij} & = (1-\varepsilon) \sum_{i',j'\in [n]} b_{i'j'} \left\{ \sum_{r=1}^2 \p{\underbrace{\sum_{j''\in [n]} d^s_{ij''} \Lambda_{(x_i,y_{j''})}(x_{i'},y_{j'})}_{\text{Term } {\color{Firebrick3}\RN{1}}} - \underbrace{\sum_{i'',j''\in [n]} v^s_{i''j''}\Lambda_{(x_{i''},y_{j''})}(x_{i'}, y_{j'})}_{\text{Term } {\color{SpringGreen3}\RN{2}}} }d^s_{ij} \right.\\
        & \qquad \qquad \qquad \qquad \qquad \; - \sum_{r=1}^2 \left. \p{\underbrace{\sum_{i'', j''\in [n]} w^s_{i''j''}\Lambda_{(x_{i''},y_{j''})}(x_{i'}, y_{j'})}_{\text{Term } {\color{DodgerBlue3}\RN{3}}} }v^s_{ij}\right\} + \varepsilon b_{ij}. \tag*{(\textdaggerdbl)}
    \end{align*}

    Recall that $\bv^s \coloneqq \vec{\bV^{s \,\top}}$ and $\bw^s \coloneqq \vec{\bW^{s \,\top}}$ are the row-wise vectorizations of $\bV^s$ and $\bW^s$. It is then easy to see that Terms {\color{SpringGreen3}\RN{2}} and {\color{DodgerBlue3}\RN{3}} can be expressed as the $(i'j')^{\text{th}}$ elements of $\bG \bv^s$ and $\bG \bw^s$ respectively. 

    Term {\color{Firebrick3}\RN{1}} is more complicated because it involves a summation over $j''$ for a \emph{fixed} $i$. In the vectorized space $\R^{n^2}$, the vector corresponding to fixing $x_i$ and summing over weighted $y_{j''}$ can be written using the canonical basis vector $\tilde{e}_i$ as $\tilde{e}_i \otimes (\bD^{s\top} \tilde{e}_i)$. Thus, Term {\color{Firebrick3}\RN{1}} can be expressed as the $(i'j')^{\text{th}}$ element of $\bG\p{\tilde{e}_i \otimes \br{\bD^{s\,\top} \tilde{e}_i } }$.

    Now, observe that in the curly braces of Eq. (\textdaggerdbl), Term {\color{Firebrick3}\RN{1}} (for a fixed index $i$) is multiplied by $d^s_{ij}$ on the right for the same $i$. With some abuse of notation, let $\tilde{e}_\cdot$ denote that $\tilde{e}_i$ is adaptively chosen to be consistent with index $i$ of the right-multiplying $d^s_{ij}$. Then, using the face-splitting (row-wise Kronecker) product (denoted by $\bullet$), it holds that
    \begin{align*}
        \bG\p{\tilde{e}_\cdot \otimes \br{\bD^{s\,\top} \tilde{e}_\cdot } } \bd^{s\,\top} &= \bG\diag{\bd^s}\p{\bI_n \otimes \mathbf{1}_{n}\mathbf{1}_{n}^\top } \diag{\bd^s}\\
        &= \bG\diag{\bd^s}(\bI_n \otimes \mathbf{1}_n)(\bI_n \otimes \mathbf{1}_n)^\top\diag{\bd^s}^\top = \bG(\bI_n \bullet \bD^s)^\top (\bI_n \bullet \bD^s).
    \end{align*}

    Recall that $\bS^s\coloneqq (\bI_n \bullet \bD^s)^\top$. Subsequently, (\textdaggerdbl) rewrites as 
    \begin{align}
        \bc^\top &=  \bb^\top \br{ \varepsilon\bI + (1-\varepsilon)\bG \p{\sum_{r=1}^2\p{\bS^s\bS^{s\,\top} - \bv^s\bd^{s\,\top} - \bw^s\bv^{s\,\top}} } } \nonumber\\
        \implies \bb^\top &= \bc^\top \br{ \varepsilon\bI + (1-\varepsilon)\bG \p{\sum_{r=1}^2\p{\bS^s\bS^{s\,\top} - \bv^s\bd^{s\,\top} - \bw^s\bv^{s\,\top}} } }^{-1} \nonumber.
        \intertext{Recalling the definitions of $\bT$ and $\bU$ from~\eqref{eqn:T-U-matrices}, we can simplify this to}
        \bb^\top &= \bc^\top \br{\varepsilon \bI + \p{1-\varepsilon}\bT \bU^\top }^{-1}. \label{eqn:O-6-inv}
        \intertext{Then we have, by the Kailath variant of Woodbury's identity (\citealp{petersen2008matrix}, 3.2.3), that}
        \bb^\top &= \bc^\top \br{\frac{1}{\varepsilon}\bI - \frac{1-\varepsilon}{\varepsilon}\bT\p{\varepsilon\bI + (1-\varepsilon)\bU^\top\bT}^{-1}\bU^\top} \label{eqn:O-3-inv},
    \end{align}
    and therefore, by the definition of $\widetilde{\Omega}_n$, that $\bb^\top = \bc^\top \widetilde{\Omega}_n$.
\end{proof}

\subsubsection{Proof of Proposition~\ref{prop:wald-closed}}
\waldclosed*
\begin{proof}
Let $\Fs_n$ be the finite subspace defined in~(\ref{eqn:Fn-subspace}). We have from Proposition~\ref{prop:mmd-closed} that $\psibar\in\Fs_n$, and from Lemma~\ref{lem:C-form}, that $\psibar = \sum_{i,j\in[n]}c_{ij}\Lambda_{x_i,y_j}$ with $c_{ij}\coloneqq[\bC]_{i,j}$ as defined in Eq.~\ref{eqn:cij}.

Consequently, for $\Omega_n$ the regularized inverse of the covariance operator, Lemma~\ref{lem:invariant-inverse-op} yields $\Omega_n(\psibar)\in\Fs_n$, which implies that there exists $\bB\in\R^{n\times n}$, with $[\bB]_{i,j} \eqqcolon b_{ij}$, such that $\Omega_n(\psibar) = \sum_{i,j}b_{ij}\Lambda_{x_i,y_j}$. Let $\bb := \vec{\bB^\top}$. We show in Lemma~\ref{lem:B-form} that $\bb^\top = \bc^\top\widetilde{\Omega}_n$, where $\widetilde{\Omega}_n = \frac{1}{\varepsilon}\bI - \frac{1-\varepsilon}{\varepsilon}\bT\p{\varepsilon\bI + (1-\varepsilon)\bU^\top\bT}^{-1}\bU^\top$ with $\bT$ and $\bU$ constructed using Eqs.~\ref{eqn:wald-intermediate-matrices} and~\ref{eqn:T-U-matrices}.

It then follows using the same arguments as in Proposition~\ref{prop:mmd-closed}, and by Lemma~\ref{lem:B-form}, that
\begin{align*}
    \inpd{\Omega_n(\psibar)}{\psibar}_{\Hs} = \bb^\top \bG \bc = \bc^\top \widetilde{\Omega}_n \bG\bc &= \frac{1}{\varepsilon}\bc^\top\bG\bc - \frac{1-\varepsilon}{\varepsilon}\bc^\top\bT\p{\varepsilon\bI + (1-\varepsilon)\bU^\top\bT}^{-1}\bU^\top\bG\bc\\
    &= \frac{1}{\varepsilon}\inpd{\bC}{\bK\bC\bL}_{\F} - \frac{1-\varepsilon}{\varepsilon}\bc^\top\bT\p{\varepsilon\bI + (1-\varepsilon)\bU^\top\bT}^{-1}\bU^\top\bG\bc.
\end{align*}

Now, let `$\circ$' denote the Hadamard product and `$\ast$' the Khatri-Rao (column-wise Kronecker) product. 
It is evident from~\eqref{eqn:T-U-matrices} that computing the terms $\bc^\top\bT$, $\bU^\top\bT$, and $\bU^\top\bG\bc$ involves terms of the following three types (letting $s, \bar{s}$ take values in $\{1,2\}$ independently, and letting $\bx, \by \in \R^{n^2}$ be arbitrary vectors):
    \begin{equation}\label{eqn:wald-formulae}
        \begin{split}
            \bx^\top\bG\by &= \inpd{\bX}{\bK\bY\bL}_\F \in \R,\\
            \by^\top\bG\bS^{\bar{s}} &= \p{\bS^{\bar{s}\,\top}\bG \by}^\top = \br{(\bI_n \bullet \bD^{\bar{s}})\vec{\bL\bY^\top\bK}}^\top = \br{\p{\bD^{\bar{s}} \circ \bK\bY \bL}\mathbf{1}_n}^\top \in \R^{1\times n}, \text{ and }\\
            \bS^{s\,\top}\bG\bS^{\bar{s}} &= (\bI_n\bullet \bD^s) (\bK\otimes \bL) \p{\bI_n \ast \bD^{\bar{s}\,\top}} = (\bI_n\bullet \bD^s) \p{\bK \ast \bL\bD^{\bar{s}\,\top}} = \bK \circ \bD^s \bL \bD^{\bar{s}\,\top}  \in \R^{n\times n},
        \end{split}
    \end{equation}
    where the first equation holds by the same steps as in the proof of~\cref{prop:mmd-closed}, the second follows directly from the definitions of the face-splitting and hadamard products, and the third holds by~\citet{slyusar1999family} Eq. 3,~\citet{rao1970estimation} Lemma A1, and~\citet{slyusar1999family} Theorem 1. 

    Using these expressions allows us to avoid ever having to store or manipulate $n^2\times n^2$ matrices or $n^2$-dimensional vectors directly. Thus, since the matrix inversion $\p{\varepsilon\bI + (1-\varepsilon)\bU^\top\bT}^{-1}$ in $\R^{(2n+4)\times (2n+4)}$ becomes the dominating operation, we can compute the Wald-type statistic with a worst-case computational complexity of $\Os(n^3)$. 
    Note that there is no need to save $\bG$ in memory. 

Subsequently, we have from Lemma~\ref{lem:eif-form} that $\phi_\star\in\Lsq(P_\star;\Hs)$, and by supposition, that the conditions of Theorem~\ref{thm:asy-lin-wk-conv} hold. Further, $\Omega_\star\in\Wsc$ and $\Omega_n\in\Wsc$ by way of Lemma~\ref{lem:invariant-inverse-op} due to the respective definitions of $\Sigma_\star$ and $\Sigma_n$. 

Since $\Pnh^r$ serves as the initial estimate of $P_\star$ for each $r\in\{1,2\}$, conditions (i) and (ii) of Theorem~\ref{thm:asy-lin-wk-conv} regarding the convergence rates of the nuisance parameters imply via Lemma~\ref{lem:suff-neg-drift} that $\norm{\phi^r_n-\phi_\star}_{\Lsq(P_\star;\Hs)} = o_p(1)$ for each $r\in\{1,2\}$. Hence, Lemma~\ref{lem:alic-lem-s12}, and consequently, Lemma~\ref{lem:cor-of-alic-s12} yield that $\opnorm{\Omega_n-\Omega_\star}=o_p(1)$. 

Therefore, the conditions of Theorem~\ref{thm:test-guarantees} are satisfied, and we have the desired guarantees for the test of $\psistar=0$, using Algorithm~\ref{alg:skcd} with $T_n \equiv T_n^{\mathrm{Wald}}$.
\end{proof}

\subsubsection{Heuristic for choosing \texorpdfstring{$\varepsilon$}{varepsilon}}
\label{app:vareps-heuristic}

The regularization parameter $\varepsilon \in (0,1)$ controls the trade-off between the empirical covariance $\Sigma_n$ and the identity matrix $I$ and stabilizes the inversion of the covariance operator $\varepsilon I + (1-\varepsilon)\Sigma_n$. Specifically, the eigenvalues $\lambda_i$ of the empirical covariance $\Sigma_n$ are transformed in the inverse operator as $\lambda_i^{\text{inv}} = 1/\p{(1-\varepsilon)\lambda_i + \varepsilon}$. Thus, for the regularization to be effective, $\varepsilon$ must be comparable in magnitude to the to the spectral scale of $(1-\varepsilon)\Sigma_n$.

However, fixing $\varepsilon$ to a universal constant is a poor choice, since the scaling of the Gram matrices depends arbitrarily on the kernel choice and the actual data points: if the kernel values are large, $\Sigma_n$ dominates, and we lose the well-conditioning due to regularization; if they are small, $I$ dominates, and we do not account for the signal. 

To determine a stable choice for $\varepsilon$, we can therefore consider the total ``magnitude'' of the signal captured by $\Sigma_n$. Observe that, as shown in the proof of~\cref{lem:B-form}, the restriction of $(1-\varepsilon)\Sigma_n+\varepsilon I$ to the finite-dimensional space $\mathcal{F}_n$ can be represented by an $n^2\times n^2$ matrix $\varepsilon \bI + (1-\varepsilon)\bT\bU^\top$. Moreover, while $\Sigma_n$ acts on a subspace of dimension $n^2\times n^2$, its rank is bounded by $2n+4$, and it converges in operator norm to a fixed limit $\Sigma_\star$ by~\cref{lem:cor-of-alic-s12}.

Consequently, the trace of the empirical covariance operator can be computed via its matrix representation. By the cyclic property of the trace, $\mathrm{tr}(\Sigma_n) = \mathrm{tr}(\mathbf{T}\mathbf{U}^\top) = \mathrm{tr}(\mathbf{U}^\top\mathbf{T})$. This sums only the non-zero eigenvalues $\sum_{i=1}^{2n+4}\lambda_i$, which converges to the total variance of the EIF, $\E{\star}{\norm{\phi_\star(Z)}^2_{\Hs}}$, meaning it is $O_p(1)$. This stability arises because the matrices $\bC$ and $\bE$ used in the construction $\bT$ and $\bU$ are already correctly scaled. 

This motivates a heuristic: set $\varepsilon$ so that $\varepsilon/(1-\varepsilon) \propto \trace(\bT\bU^\top)$. We can introduce a hyperparameter $\gamma > 0$ to define the desired balance between these two terms. Setting the identity weight to be $\gamma$ times the covariance weight yields the condition $\varepsilon/(1-\varepsilon) = \gamma \trace(\bT\bU^\top)$, and, by the cyclic property of the trace, solving this for $\varepsilon$ yields
\begin{align}
    \varepsilon &= \frac{\gamma \trace(\bT\bU^\top)}{1 + \gamma \trace(\bT\bU^\top)} = \frac{\gamma \trace(\bU^\top\bT)}{1 + \gamma \trace(\bU^\top\bT)}. \label{eq:epsChoice}
\end{align}

The hyperparameter $\gamma$ can be interpreted as our ``trust'' in the covariance estimate. Equal weighting ($\gamma=1$) assigns a 50\% balance to the regularization and empirical covariance terms. Larger values ($\gamma > 1$) pull the estimate towards the identity (and therefore, towards the MMD statistic). This may be useful for smaller sample sizes where the estimate $\Sigma_n$ may be ill-conditioned or noisy. Smaller values ($\gamma < 1$) rely more heavily on the covariance estimate, which may be appropriate when $n$ is large and/or $\Sigma_n$ is well-estimated.

\section{Fast SKCD Test Implementation}
\label{app:fast-skcd}
Naively implementing the bootstrap in Alg.~\ref{alg:skcd} would result in a computational complexity of $\Os(Bn^3)$. In this appendix, we show that when using the closed-form test statsitics given in Section~\ref{subsec:closed-form-skcd}, we can amortize expensive operations to achieve a complexity of $\mathcal{O}(n^3 + Bn^2)$. We provide this optimized implementation in Alg.~\ref{alg:fast-skcd}, and describe it in more detail below.

We can construct the coefficient matrix $\bC$ defined in Eq.~\ref{eqn:cij} using a block structure induced by the sample splits. Assume the data are ordered such that indices $1, \dots, n_1$ correspond to fold $\Is_1$ and $n_1+1, \dots, n$ correspond to fold $\Is_2$. We write $\bC$ as a $2 \times 2$ block matrix:
\begin{equation}\label{eqn:C-block}
    \bC =
    \begin{pmatrix}
        \bC_{11} & \bC_{12} \\[3pt]
        \bC_{21} & \bC_{22}
    \end{pmatrix}.
\end{equation}

The diagonal blocks $\bC_{ss} \in \R^{n_s \times n_s}$ for $s \in \{1, 2\}$ are diagonal matrices containing the inverse propensity weights:
\begin{equation}\label{eqn:C-diag}
\bC_{ss} = \frac{1}{2n_s} \diag{ \frac{a_i}{\pi_n^r(x_i)} - \frac{1-a_i}{1-\pi_n^r(x_i)} }_{i \in \Is_s},
\end{equation}
where $r = 3 - s$ denotes the complementary fold (i.e., nuisances are fit on fold $r$ and evaluated on fold $s$).

The off-diagonal blocks $\bC_{sr} \in \R^{n_s \times n_r}$ (where $r \neq s$) encode the augmentation term. These are constructed as the row-scaled product:
\begin{equation}\label{eqn:C-offdiag}
\bC_{sr} = \frac{1}{2n_s} \bGamma_s \bH_{sr},
\end{equation}
where $\bGamma_s \in \R^{n_s \times n_s}$ is the diagonal matrix of augmentation coefficients with entries
\begin{equation}\label{eqn:C-offdiag-details}
[\bGamma_s]_{ii} = 
\begin{cases}
1 - \dfrac{1}{\pi_n^r(x_i)} & \text{if } a_i = 1, \\[8pt]
\dfrac{1}{1 - \pi_n^r(x_i)} - 1 & \text{if } a_i = 0,
\end{cases}
\end{equation}
and $\bH_{sr} \in \R^{n_s \times n_r}$ is a matrix whose entry $[\bH_{sr}]_{ij}$ weights the training observation $j \in \Is_r$ on the prediction for test point $i \in \Is_s$. For kernel ridge regression with regularization $\lambda > 0$, this matrix takes the form $\bH_{sr} = \bK_{\Is_s, \Is_r}(\bK_{\Is_r, \Is_r} + \lambda \bI)^{-1}$, though our approach accommodates any regression method that produces such weights.

Similarly, the auxiliary matrix $\bE$ used in the Wald-type statistic has block structure is:
\begin{equation}\label{eqn:E-block}
    \bE =
    \begin{pmatrix}
        \mathbf{0} & \bE_{12} \\[3pt]
        \bE_{21} & \mathbf{0}
    \end{pmatrix},
\end{equation}
where $\bE_{12} = \frac{1}{2n_1} \bH_{12}$ and $\bE_{21} = -\frac{1}{2n_2} \bH_{21}$.

Recall from Alg.~\ref{alg:skcd} that our one-step estimator is an empirical mean such that $\psibar = \frac{1}{n} \sum_{k=1}^n \varphi_k$, and the bootstrap replicate is the weighted sum $\Delta_n^{(b)} = \frac{1}{n} \sum_{k=1}^n \xi_k \varphi_k$. Moreover,~\cref{lem:C-form} establishes that $\psibar = \sum_{i,j} [\bC]_{ij} \Lambda_{x_i, y_j}$. Inspecting the construction of $\bC$ derived in~Eqs.~\ref{eqn:C-block}~to~\ref{eqn:C-offdiag-details}, it is evident that the $i$-th row of $\bC$ collects the terms specific to the observation $Z_i$.
Thus, by the linearity of the map $w \mapsto \sum_{k} w_k \varphi_k$, the coefficient matrix $\bC^{(b)}$ corresponding to the weighted sum $\Delta_n^{(b)}$ is given by row-scaling $\bC$ by the multipliers $\xi = [\xi_1, \dots, \xi_n]^\top$, i.e.
\begin{equation} \label{eqn:Cb-defn}
    \bC^{(b)} \coloneqq \diag{\xi}\bC.
\end{equation}

Now, using~\cref{prop:mmd-closed}, the MMD bootstrap statistic is $T_n^{(b), \mathrm{MMD}} = n \inpd{\bC^{(b)}}{\bK \bC^{(b)} \bL}_F$. Substituting \eqref{eqn:Cb-defn} and applying the cyclic property of the trace, we have
\begin{align*}
    T_n^{(b), \mathrm{MMD}} = n \trace \p{ (\diag{\xi}\bC)^\top \bK (\diag{\xi}\bC) \bL } &= n \trace\p{ \bC^\top \diag{\xi} \bK \diag{\xi} \bC \bL }\\
    &= n \trace \p{ \diag{\xi} \bK \diag{\xi} (\bC \bL \bC^\top) }.
\end{align*}
Using the identity $\diag{\xi}\bA\diag{\xi} = \bA \circ (\xi\xi^\top)$ for any matrix $\bA$ yields
\begin{align} \label{eqn:fast-mmd-final}
T_n^{(b), \mathrm{MMD}} &= n \trace \p{(\bK \circ (\xi\xi^\top)) (\bC \bL \bC^\top) } = n \sum_{i,j} [\bK]_{ij} \xi_i \xi_j [\bC \bL \bC^\top]_{ji} = n \xi^\top \p{ \bK \circ (\bC \bL \bC^\top) } \xi.
\end{align}
Let $\bM \coloneqq \bK \circ (\bC \bL \bC^\top)$. Since $\bM$ is independent of $b$, it can be computed before the bootstrap loop.

Now, from~\cref{prop:wald-closed}, the Wald-type statistic involves a correction term based on the covariance. The statistic takes the form:
\begin{equation*}
T_n^{\mathrm{Wald}} = n \p{\frac{1}{\varepsilon}\inpd{\bC}{\bK\bC\bL}_F - \frac{1-\varepsilon}{\varepsilon}\bc^\top \bT \mathbf{Z}^{-1} \bU^\top \bG \bc},
\end{equation*}
where $\bc = \mathrm{vec}(\bC^\top)$ and $\bG = \bL \otimes \bK$, and $\mathbf{Z} = \varepsilon \bI + (1-\varepsilon)\bU^\top\bT$ is the regularized covariance matrix, fixed for the observed data. 

For the bootstrap replicate with $\bC^{(b)} = \mathrm{diag}(\xi)\bC$, we have
\begin{equation} \label{eqn:vec-c-formula}
    \bc^{(b)} = \mathrm{vec}((\bC^{(b)})^\top) = \mathrm{vec}(\bC^\top \mathrm{diag}(\xi)) = (\mathrm{diag}(\xi) \otimes \bI_n) \bc.
\end{equation}
Now, observe that $\bU^\top \bG \bc^{(b)}$ and $\bT^\top \bc^{(b)}$ are linear functions of $\xi$. We can derive these by analyzing the block structure of $\bU$ and $\bT$.

Recall from~\eqref{eqn:T-U-matrices} that $\bU$ contains block matrices $\bS^s = (\bI_n \bullet \bD^s)^\top$ for $s \in \{1,2\}$, where $\bullet$ denotes the face-splitting product and $\bD^s$ are the scaled coefficient matrices. 

The $i$-th column of $\bS^s$ corresponds to $\mathrm{vec}(d^s_i \tilde{e}_i^\top)$, where $d^s_i$ is the $i$-th row of $\bD^s$, and $\tilde{e}_i$ is the $i$-th canonical basis vector. Using~\eqref{eqn:vec-c-formula}, the $i$-th component of $\bS^{s\top} \bG \bc^{(b)}$ is
\begin{align}\label{eqn:H-block}
    [(\bS^s)^\top \bG \bc^{(b)}]_i &= \mathrm{vec}(d^s_i \tilde{e}_i^\top)^\top \bG (\mathrm{diag}(\xi) \otimes \bI_n) \bc = \trace\p{ \tilde{e}_i (d^s_i)^\top \bK \mathrm{diag}(\xi) \bC \bL } = \tilde{e}_i^\top \bK \mathrm{diag}(\xi) \bC \bL d^s_i \nonumber\\
    &= \sum_{j=1}^n \xi_j [\bK]_{ij} [(\bC \bL)(\bD^s)^\top]_{ji}, \nonumber
    \intertext{inspecting which allows us to define a matrix $\bH_{\bS^s} \in \R^{n \times n}: \bH_{\bS^s} \coloneqq \bK \circ \p{\bD^s (\bC \bL)^\top}$ such that}
    (\bS^s)^\top \bG \bc^{(b)} &= \bH_{\bS^s}\xi.
\end{align}

Now, we consider the last 4 columns of $\bT$ corresponding to $\bv^s$ and $ \bw^s$. For a generic matrix $\bF \in \{\bV^s, \bW^s\}$, we have
\begin{align}\label{eqn:h-vector}
    (\bG\mathrm{vec}(\bF^\top))^\top \bc^{(b)} &= \mathrm{vec}(\bF^\top)^\top \bG(\mathrm{diag}(\xi) \otimes \bI_n) \bc = \trace\p{ \bF^\top \bK\mathrm{diag}(\xi)\bC \bL } = \trace\p{ \mathrm{diag}(\xi)\bC \bL \bF^\top \bK } \nonumber \\
    &= \xi^\top \bh_{\bF},
\end{align}
where $\bh_{\bF} \coloneqq \mathrm{diag}(\bC \bL \bF^\top \bK) \in \R^n$.

Similarly, $\bU^\top \bT$ can be constructed using~\eqref{eqn:wald-formulae} and the projection matrices $\bH_{\bS^s}$ and $\bh_{\bF}$. Note that all three of these operators are independent of $b$, and therefore can be computed outside the bootstrap loop. In fact, to avoid having to invert $\bZ = \varepsilon \bI + (1-\varepsilon)\bU^\top\bT \in\R^{2n+4 \times 2n+4}$ in the loop, we can precompute its LU factorization.

Thus, within the bootstrap loop, using~\eqref{eqn:T-U-matrices},~\eqref{eqn:H-block} and~\eqref{eqn:h-vector}, the expression for $\bU^\top \bG \bc^{(b)} \in \R^{2n+4}$ is given by
\begin{align}
    \bU^\top \bG \bc^{(b)} &= 
    \begin{bmatrix}
        (\bH_{\bS^1} \xi)^\top &
        (\bH_{\bS^2} \xi)^\top &
        -\mathbf{1}^\top \bH_{\bS^1} \xi &
        -\mathbf{1}^\top \bH_{\bS^2} \xi &
        -\bh_{\bV^1}^\top \xi &
        -\bh_{\bV^2}^\top \xi
    \end{bmatrix}^\top, \label{eqn:UT_G_c}
    \intertext{and similarly,}
    \bT^\top \bc^{(b)} &= 
    \begin{bmatrix}
        (\bH_{\bS^1} \xi)^\top &
        (\bH_{\bS^2} \xi)^\top &
        \bh_{\bV^1}^\top \xi &
        \bh_{\bV^2}^\top \xi &
        \bh_{\bW^1}^\top \xi &
        \bh_{\bW^2}^\top \xi
    \end{bmatrix}^\top. \label{eqn:TT_c}
\end{align}

Thus, each bootstrap Wald-type statistic can be computed using~\eqref{eqn:fast-mmd-final} and the preceding two displays as
\begin{equation}\label{eqn:fast-wald-final}
    T_n^{(b),\mathrm{Wald}} = n \p{\frac{1}{\varepsilon} (\xi^\top \bM \xi) - \frac{1-\varepsilon}{\varepsilon} (\bT^\top \bc^{(b)})^\top \bz^{(b)}},
\end{equation}
where $\bz^{(b)}$ solves the linear system $\mathbf{Z} \bz^{(b)} = \bU^\top \bG \bc^{(b)}$. Since we precomputed the LU factorization of $\mathbf{Z} \in \R^{(2n+4) \times (2n+4)}$, each bootstrap iteration requires only an $\Os(n^2)$ forward/back substitution operation to obtain $\bz^{(b)}$.

\begin{algorithm}[h]
\caption{Fast SKCD test using closed-form test statistics}
\label{alg:fast-skcd}
\begin{algorithmic}[1]
\REQUIRE Data $\Dsc=\{Z_i\}_{i=1}^n$, kernels $k, \ell$, level $\alpha$, bootstrap replicates $B$, regularization $\varepsilon$, test type $\in \{\textsc{MMD}, \textsc{Wald}\}$.
\ENSURE Rejection decision.
\STATE Fit cross-fitted propensity models $\pi_n^1, \pi_n^2$ on folds $\Is_1, \Is_2$.
\STATE Construct $\bC, \bE$ via~\eqref{eqn:C-block}--\eqref{eqn:E-block}.
\STATE $\bK \leftarrow [k(x_i, x_j)]_{i,j}$; \quad $\bL \leftarrow [\ell(y_i, y_j)]_{i,j}$; \quad $\bM \leftarrow \bK \circ (\bC \bL \bC^\top)$
\IF{test type $=$ \textsc{Wald}}
    \STATE Compute $\bH_{\bS^s}$, $\bh_{\bF}$ via~\eqref{eqn:H-block} and~\eqref{eqn:h-vector}; \quad Construct $\bU^\top\bT$ using~\eqref{eqn:wald-formulae}, and $\bH_{\bS^s}$ and $\bh_{\bF}$
    \STATE LU-factorize $\bZ = \varepsilon \bI_{2n+4} + (1-\varepsilon)\bU^\top\bT$ .
\ENDIF
\STATE $T_n \leftarrow n \cdot \mathbf{1}^\top \bM \mathbf{1}$ \COMMENT{MMD}
\IF{test type $=$ \textsc{Wald}}
    \STATE Construct $\bU^\top \bG \bc$ and $\bT^\top \bc$ via~\eqref{eqn:UT_G_c}--\eqref{eqn:TT_c} with $\xi\leftarrow\mathbf{1}$
    \STATE Solve $\bZ \bz = \bU^\top \bG \bc$ using LU factors; \quad $T_n \leftarrow \frac{n}{\varepsilon} \mathbf{1}^\top \bM \mathbf{1} - \frac{n(1-\varepsilon)}{\varepsilon} (\bT^\top \bc)^\top \bz$
\ENDIF
\FOR{$b = 1$ to $B$}
    \STATE Draw multipliers $\xi$ via split-independent multinomial resampling.
    \IF{test type $=$ \textsc{MMD}}
        \STATE $T_n^{(b)} \leftarrow n \cdot \xi^\top \bM \xi$
    \ELSE
        \STATE Compute $\bU^\top \bG \bc^{(b)}$, $\bT^\top \bc^{(b)}$ via~\eqref{eqn:UT_G_c}--\eqref{eqn:TT_c} \COMMENT{$\Os(n^2)$ operation}
        \STATE Solve $\bZ \bz^{(b)} = \bU^\top \bG \bc^{(b)}$; \quad $T_n^{(b)} \leftarrow \frac{n}{\varepsilon} \xi^\top \bM \xi - \frac{n(1-\varepsilon)}{\varepsilon} (\bT^\top \bc^{(b)})^\top \bz^{(b)}$ via~\eqref{eqn:fast-wald-final} \COMMENT{$\Os(n^2)$ operations}
    \ENDIF
\ENDFOR
\STATE \textbf{return} $\mathbb{I}(T_n > \qhatn)$, where $\qhatn \leftarrow (1-\alpha)\text{-quantile of } \{T_n^{(b)}\}_{b=1}^B$.
\end{algorithmic}
\end{algorithm}

\section{Experimental Details}

This appendix provides complete specifications for our experiments section, including the data generation process, model architectures, inference procedures, and implementation details. 

\subsection{Distribution shift in images}
\label{app:simulations}

\subsubsection*{Setup}

Our simulation design uses the MNIST handwritten digit dataset~\citep{deng2012mnist} to create scenarios where treatment effects manifest as multivariate distribution shifts that are challenging to detect. As mentioned in Sec.~\ref{subsec:simulations}, we let both covariates $X$ and outcomes $Y$ be learned representations of images. 

The feature extraction pipeline learns embeddings on a subset of $25k$ images from the MNIST training set ($60k$ images),\footnote{\url{https://www.kaggle.com/datasets/hojjatk/mnist-dataset} (train-images-idx3-ubyte.gz)} and is then \emph{fixed}. It is applied to the MNIST test set ($10k$ images)\footnote{\url{https://www.kaggle.com/datasets/hojjatk/mnist-dataset} (t10k-images-idx3-ubyte.gz)} pooled with the remaining $35k$ images from the training set. 
We  henceforth denote the set of raw MNIST images used in our experiments by $\{\text{Image}_i\}_{i=1}^{45k}$.

\paragraph{Feature extraction.}
We train a ResNet-18-based $\operatorname{Encoder}$ that maps input images ($1 \times 28 \times 28$) to a 5-dimensional feature space. The network consists of the standard four residual blocks (channels: 64, 128, 256, 512). The 512-dimensional output of the final residual block is flattened and projected via a fully connected layer to dimension $d=5$, followed by Batch Normalization. A final linear layer maps the $5$-dimensional embeddings to the 10 class logits. It is trained to minimize the cross-entropy classification loss, and the optimization uses Adam ($\alpha = 10^{-3}$, weight decay $10^{-5}$) with a \texttt{ReduceLROnPlateau} scheduler for $20$ epochs (batch size $512$). We extract embeddings for the training set and fit a PCA model ($n_{\text{components}}=5$) to learn the rotation matrix that diagonalizes the feature covariance.

To validate the feature extraction pipeline, we confirm that a linear classifier trained on the fixed training embeddings achieves $>98\%$ accuracy when evaluated on the embeddings of the held-out test set. We also verify that this pipeline is sensitive to rotations, evidenced by a drop in classification accuracy to $\approx 92\%$ when applied to rotated images.

\paragraph{Data generating process.}

We define the covariates $X_i$ for the simulation by passing the MNIST test set ($n=10{,}000$) images through the frozen Encoder-PCA pipeline, i.e., symbolically,
\begin{align*}
    X_i = \operatorname{PCA}(\operatorname{Encoder}(\text{Image}_i)) \in \R^5, \quad i \in [10,000].
\end{align*}

Binary treatments $A_i \in \{0, 1\}$ are generated via a non-linear logistic model whose parameters are functions of the pre-treatment covariate embeddings. Given $X_i =(X_{i1},\ldots, X_{i5})$, we define the log‑odds as $\ell(X_i)\coloneqq 2 -1.5X_{i1}\tanh(2X_{i1})$. We also define a raw logistic probability $p(X_i) \coloneqq \{1+\exp[-\ell(X_i)]\}^{-1}$, which, to maintain strict overlap (positivity), is rescaled to define the propensity score $\pi(X_i) \coloneqq 0.2+0.6\frac{p(X_i)-\min_j p(X_j)}{\max_j p(X_j)-\min_j p(X_j)}$. The treatment is drawn as $A_i\sim\text{Bernoulli}(\pi(X_i))$. Note that $\pi(X_i) \in[0.2,0.8]$ for all $i$, which for the MNIST test set, ensures nearly equal-sized treatment and control groups.

Outcomes $Y_i\in\R^5$ are generated by manipulating the raw image $\text{Image}_i$ and passing it through the fixed feature extraction pipeline described above. Let $\operatorname{IntensityChange}(\cdot;u)$ denote an operator that multiplies an image's pixel values by a factor $u$ and clips the result to $[0, 255]$. Let $\operatorname{Rotate}(\cdot; \theta)$ denote an operator that rotates an image by $\theta$ degrees using \texttt{torchvision.transforms.functional.rotate}. Since the frozen feature extraction pipeline is not rotation-invariant, rotations induce distributional changes in the embeddings, and thus in $Y$. 

We draw i.i.d factors $u_i\sim\mathrm{Unif}(0.2,1.8)$ for each image (regardless of treatment group). 

Under the \emph{null}, outcomes ignore treatment, and we have
\begin{align*}
    Y_i=\operatorname{PCA}(\operatorname{Encoder}(\operatorname{IntensityChange}(\text{Image}_i; u_i))) \in \R^5.
\end{align*}
Under the \emph{alternative}, each image in the treated group receives (on top of the intensity change) a rotation whose angle is determined as $\theta(X_i)\coloneqq 20 + 5\tanh(X_{i1}) + \epsilon_i$, where $\epsilon_i\sim\mathcal{N}(0,1.5)$, 
so that
\begin{align*}
 Y_i=\operatorname{PCA}(\operatorname{Encoder}(\operatorname{Rotate}(\operatorname{IntensityChange}(\text{Image}_i; u_i); A_i \theta(X_i)))) \in \R^5.
\end{align*}
Thus, under the alternative, when $A_i=0$ there is no rotation (yielding exactly the same outputs for Image$_i$ as under the null), and when $A_i=1$ it generates a multivariate distributional effect of the treatment that varies with $X$ and is not limited to a mean shift.

\subsubsection*{Hypothesis testing}

\paragraph{Common methodology.}
All three methods under comparison, the baseline KCD test~\citep{park2021conditional} and our proposed SKCD-MMD and SKCD-Wald tests, share a common computational backbone: kernel ridge regression (KRR) for estimating conditional mean embeddings (CME) and gradient-boosted trees for propensity score estimation. A key structural difference is that our proposed methods employ cross-fitting, whereas the baseline does not. All matrix inversions are computed via \texttt{linalg.solve} 
to avoid explicitly forming inverse matrices.

\emph{Kernel specification and bandwidth selection:} 
All methods use the Gaussian RBF kernel for both the covariate space $\Xs$ and outcome space $\Ys$:
\begin{align*}
    k(x, x') = \exp\left(-\|x - x'\|^2/[2\sigma_k^2]\right), \qquad \ell(y, y') = \exp\left(-\|y - y'\|^2/[2\sigma_\ell^2]\right).
\end{align*}
For the proposed SKCD tests, both kernels use a single bandwidth computed via the median heuristic~\citep{fukumizu2009kernel} on all observations: $\sigma_k$ is set to the median of $\{\|x_i - x_j\| : i < j\}$ and $\sigma_\ell$ to the median of $\{\|y_i - y_j\| : i < j\}$. For the baseline KCD test, following~\citet{park2021conditional}, the outcome kernel $\ell$ uses a common bandwidth computed on all $\{y_i\}_{i=1}^n$, while the covariate kernels are separate: $k_1$ and $k_0$ that use treatment group-specific bandwidths computed separately on $\{x_i : a_i = 1\}$ and $\{x_i : a_i = 0\}$.

\emph{Propensity score estimation:}
We estimate the propensity score $\pi_\star(x) = P_\star(A=1 \mymid X=x)$ using gradient-boosted trees via the LightGBM library~\citep{ke2017lightgbm}. Hyperparameters are tuned using the Optuna framework~\citep{akiba2019optuna} to minimize binary log-loss on an internal 80/20 train-validation split with early stopping (patience of 10 rounds). The resulting propensity estimates are clipped to $[10^{-6}, 1-10^{-6}]$ for numerical stability.
For the proposed SKCD tests, this estimation is performed within a 2-fold cross-fitting procedure (training on one fold, evaluating on the other). For the baseline KCD, it is performed on the full dataset.
To simulate misspecification, we restrict the model input to only the last feature of the PCA-decorrelated embeddings ($X_{:,5}$).

\emph{Outcome nuisance estimation:}
The conditional mean embedding $\nu_{\star,a}(x) = \E{\star}{L_Y \mymid A=a, X=x}$~\eqref{eqn:nu}, is estimated for each treatment group $a \in \{0,1\}$ using kernel ridge regression (KRR) in closed-form. Given a training set of $\tilde{n}$ observations $\{(x_j, y_j)\}_{j=1}^{\tilde{n}}$ with $A_j = a$, the estimator takes the form
\begin{align}
    \nu_{\tilde{n},a}(x) = \sum_{j=1}^{\tilde{n}} [\bbeta_a(x)]_j L_{y_j}, \qquad \text{where} \quad \bbeta_a(x) = (\bK_{a} + \lambda \bI_{\tilde{n}})^{-1} \bk_a(x),
    \label{eqn:cme-krr}
\end{align}
where $\bK_a \in \R^{\tilde{n} \times \tilde{n}}$ is the Gram matrix with $[\bK_a]_{ij} = k(x_i, x_j)$ for observations in treatment group $a$, $\bk_a(x) = [k(x_1, x), \ldots, k(x_{\tilde{n}}, x)]^\top$ is the cross-kernel vector evaluating the covariate kernel between the training points and the query point $x$, and $\lambda > 0$ is a regularization parameter. We fix $\lambda = 10^{-3}$ throughout the experiments. For the SKCD, the above coefficients $[\bbeta_a(x)]_j$ directly populate the off-diagonal blocks of the weight matrices $\bC$~\eqref{eqn:cij} and $\bE$~\eqref{eqn:eij} used in our closed-form statistics.
To simulate outcome misspecification, we recompute the covariate kernel matrices (including bandwidths) using only the last feature ($X_{:,5}$).

\paragraph{Baseline KCD test implementation.}
We implement the KCD test from Algorithm 1 of~\citet{park2021conditional}, with the modification that propensity scores are estimated via gradient-boosted trees rather than kernel logistic regression. Note that KCD does not employ cross-fitting: the outcome models are trained on all observations with $A=a$ and evaluated on the full dataset. Let $\bK_{a} \in \R^{n_a \times n_a}$ denote the Gram matrix restricted to the $n_a$ observations with $A=a$, and let $\bK_{\text{all},a} \in \R^{n \times n_a}$ denote the cross-kernel matrix between all $n$ observations and those with $A=a$. The matrix $\bM_a \in \R^{n \times n_a}$ is defined as
\begin{align*}
    \bM_a = \bK_{\text{all},a}(\bK_{a} + \lambda \bI_{n_a})^{-1},
\end{align*}
where the $i$-th row satisfies $[\bM_a]_{i,:} = \bbeta_a(x_i)^\top$, with $\bbeta_a(x_i)$ the KRR coefficient vector from~\eqref{eqn:cme-krr} for query point $x_i, i\in[n]$. Let $\bL_{a\tilde{a}}$ denote the submatrix of the outcome gram matrix $\bL$ corresponding to rows with $A=a$ and columns with $A=\tilde{a}$.  The KCD statistic is then computed as
\begin{align*}
    \widehat{\mathrm{KCD}} = \frac{1}{n} \trace\left( \bM_1 \bL_{11} \bM_1^\top - 2 \bM_1 \bL_{10} \bM_0^\top + \bM_0 \bL_{00} \bM_0^\top \right),
\end{align*}
where $\bL$ is the outcome kernel Gram matrix over all observations. Note that our implementation of $\widehat{\mathrm{KCD}}$ is numerically equivalent to that in Lemma 4.4 of \citet{park2021conditional}.
To approximate the null distribution, \citet{park2021conditional} employ a permutation procedure with $M$ permutations,
each of which involves re-solving the KRR systems.

\paragraph{Proposed SKCD test implementation.}
Our proposed SKCD-MMD and SKCD-Wald tests employ 2-fold cross-fitting, 
and the test statistics are computed using the closed-form expressions from Propositions~\ref{prop:mmd-closed} and~\ref{prop:wald-closed}. For the Wald-type statistic, following the discussion in App.~\ref{app:vareps-heuristic}, the regularization parameter $\varepsilon$ for the covariance operator inversion is chosen by setting $\gamma=1/3$ in Eq.~\ref{eq:epsChoice}, which heuristically gives 75\% weight to the covariance operator and the rest to the regularizer, the identity. Inference is performed via the fast SKCD algorithm (Alg.~\ref{alg:fast-skcd}) detailed in App.~\ref{app:fast-skcd} with $B=1000$ bootstrap samples.

\paragraph{Complexity and runtime comparison.}
Since the training of gradient-boosted decision trees is sub-quadratic in $n$~\citep{ke2017lightgbm}, the overall cost of fitting the nuisances is dominated by the matrix inversions required for KRR. This results in a worst-case computational complexity of $\Os(Mn^3)$ for the KCD baseline. In contrast, in our proposed fast SKCD test implementation, the cubic cost of nuisance fitting, LU factorization, and pre-computation of the weight matrices is incurred only once. Since subsequent bootstrap resampling requires only matrix-vector operations, the resulting worst-case complexity is $\Os(n^3 + B n^2)$. Empirically, this yields substantial speedups. At sample size $n = 2000$, the average wall-clock runtime per MC replicate is approximately 0.5 seconds for \texttt{SKCD\_MMD} and 1.7 seconds for \texttt{SKCD\_Wald} (using $B=1000$), compared to 2.4 seconds for the \texttt{KCD} baseline (using $M=150$).

\paragraph{Code and hardware.}
All methods are implemented in Python using PyTorch for GPU-accelerated kernel and matrix operations, LightGBM for propensity score estimation, and NumPy/SciPy for general numerical operations. In our implementation, we sort the data by treatment assignment to exploit efficient block-matrix operations on the GPU, though this does not affect the statistical definitions. 
All experiments were conducted on compute nodes equipped with an NVIDIA T4 GPU and 32GB RAM. We provide the code as supplementary material.

\subsection{Real Data: 401k eligibility}
\label{app:real-data}

\subsubsection*{Data}
We utilize data from Wave 4 of the 1990 Survey of Income and Program Participation (SIPP), consisting of $n=9,915$ households~\citep{chernozhukov2004effects}. As established in the literature~\citep{poterba1994401}, while participation in 401(k) plans is endogenous, \emph{eligibility} ($A$) can be considered plausibly unconfounded conditional on income and other household characteristics.

\paragraph{Variables.} The treatment $A$ is 401(k) eligibility. The multivariate outcome $Y \in \R^3$ comprises Net Financial Assets (TFA), Net Non-401(k) Financial Assets (NIFA), and Total Wealth (TW). The covariates $X$ consist of four continuous variables (age, income, family size, education) and five binary indicators (defined-benefit plan, marital status, two-earner household, IRA participation, home ownership).

\paragraph{Preprocessing.} Continuous covariates and all outcome variables are standardized to zero mean and unit variance prior to analysis. Binary covariates are left unscaled.

\subsubsection*{MMD-Based Confidence Bands}

Theorem~\ref{thm:confidence-band} provides uniform confidence bands over the full product space $\Xs \times \Ys$. For visualization and interpretation at a specific covariate profile $x \in \Xs$, we adapt this construction to the RKHS slice $\Hs_x \coloneqq \{ h(x, \cdot\,) : h \in \Hs \}$. This yields a confidence band that is uniform over all $y \in \Ys$ for the fixed profile $x$.

\paragraph{Confidence band construction.} 
Recall from~\cref{prop:mmd-closed} that the squared MMD statistic takes the form $T_n^{\mathrm{MMD}} = n \inpd{\bC}{\bK \bC \bL}_F$, and from~\eqref{eqn:fast-mmd-final} that the bootstrap statistic is $T_n^{(b), \mathrm{MMD}} = n \xi^\top \bM \xi$ where $\bM = \bK \circ (\bC \bL \bC^\top)$.

For a fixed evaluation point $x \in \Xs$, define the kernel vector $\bk_x \coloneqq [k(x_1, x), \ldots, k(x_n, x)]^\top \in \R^n$. Restricting to the slice $\Hs_x$ can be done by replacing the full covariate kernel $\bK$ with the rank-one matrix $\bk_x \bk_x^\top$. The slice Gram matrix is thus
\begin{equation}\label{eqn:slice-gram}
    \bM_x \coloneqq (\bk_x \bk_x^\top) \circ (\bC \bL \bC^\top).
\end{equation}
The bootstrap statistic for the slice becomes
\begin{equation}\label{eqn:slice-bootstrap}
    T_{n,x}^{(b)} = n \xi^\top \bM_x \xi,
\end{equation}
which has the same quadratic form as in Eq.~\ref{eqn:fast-mmd-final} but with the slice-restricted Gram matrix $\bM_x$. Let $\hat{q}_{n,x}(\alpha)$ denote the $(1-\alpha)$-quantile of the bootstrap distribution $\{T_{n,x}^{(b)}\}_{b=1}^B$. The uniform-in-$y$ confidence band for $\psi_\star(x, \cdot\,)$ is
\begin{equation}\label{eqn:slice-band}
    \mathcal{C}_{n,x}(y) = \left[ \psibar(x, y) - \sqrt{\hat{q}_{n,x}(\alpha)/n}, \; \psibar(x, y) + \sqrt{\hat{q}_{n,x}(\alpha)/n} \right],
\end{equation}
where $\sqrt{\hat{q}_{n,x}(\alpha)/n}$ is constant across all $y$, ensuring uniform coverage over the outcome space.

\paragraph{Witness function evaluation.}
The estimated witness function at $(x, y)$ is computed as
\begin{equation}\label{eqn:witness-eval}
    \psibar(x, y) = \bk_x^\top \bC \bell_y,
\end{equation}
where $\bell_y \coloneqq [\ell(y_1, y), \ldots, \ell(y_n, y)]^\top \in \R^n$ is the outcome kernel vector. This can be vectorized for a grid of $y$ values.

\paragraph{Cross-sectional visualization.}
Since the full witness function $\psi_\star(x, \cdot\,): \R^3 \to \R$ is a surface over the 3-D outcome space, direct visualization is infeasible. We instead compute one-dimensional cross-sections by varying each wealth component $Y_j$ over its support while fixing the remaining components at zero (which corresponds to the sample mean in standardized coordinates). Note that the confidence band~\eqref{eqn:slice-band} applies uniformly to all three cross-sections since the band width $\sqrt{\hat{q}_{n,x}(\alpha)}$ is computed using the full outcome kernel and thus provides valid coverage over the entire outcome space $\Ys$.

\paragraph{Implementation.}
We follow the same implementation as the simulation study in App.~\ref{app:simulations}. Specifically, we use the fast SKCD test (Alg.~\ref{alg:fast-skcd}) with the MMD statistic, with Gaussian RBF kernels for both $\Xs$ and $\Ys$, bandwidths selected via the median heuristic, propensity scores estimated via LightGBM with Optuna-based hyperparameter tuning, and kernel ridge regression for the conditional mean embeddings with regularization $\lambda = 10^{-3}$. We use $B = 1000$ bootstrap replicates at level $\alpha = 0.05$. We reserve approx. 1\% of the data ($n_{\text{eval}} = 99$ households) as an evaluation set from which individual profiles are drawn. The remaining 99\% ($n = 9816$) is split into two equal folds for cross-fitting.

The coefficient matrix $\bC$ is constructed via Eqs.~\eqref{eqn:C-block}--\eqref{eqn:C-offdiag}, and the outcome Gram matrix $\bL$ is computed using the median-heuristic bandwidth. For each evaluation profile $x$, we compute $\bM_x$ via Eq.~\ref{eqn:slice-gram} and run $B = 1000$ bootstrap iterations using split-independent multinomial resampling to obtain $\hat{q}_{n,x}(0.05)$. The witness function cross-sections are evaluated on a grid of 100 points spanning $[-3, 3]$ in standardized units, then transformed back to original units (\$1k) for visualization.

\paragraph{Complexity and runtime.} Since the slice Gram matrix $\bM_x$ depends on the evaluation point $x$, it must be recomputed for each individual profile. However, the outcome covariance $\bC \bL \bC^\top$ is shared across all profiles, and the bootstrap loop requires only $\Os(n^2)$ operations per replicate (the quadratic form in Eq.~\ref{eqn:slice-bootstrap}). For the two profiles analyzed in Fig.~\ref{fig:401k-results}, the total computation time is approximately 1.5 minutes on a single NVIDIA T4 GPU.

\paragraph{Code and hardware.}
The implementation uses PyTorch for GPU-accelerated kernel and matrix operations, LightGBM for propensity estimation, and NumPy/SciPy for general numerical operations. All results were computed on a node equipped with an NVIDIA T4 GPU and 32GB RAM. Code is provided as supplementary material.

\end{document}